\documentclass{article}


\PassOptionsToPackage{numbers, compress}{natbib} 
\usepackage[final]{neurips_2023}




\usepackage[utf8]{inputenc} 
\usepackage[T1]{fontenc}    
\usepackage[usenames,dvipsnames]{xcolor}         
\usepackage[colorlinks,hypertexnames=false]{hyperref}       
\hypersetup{citecolor=MidnightBlue}
\hypersetup{linkcolor=MidnightBlue}
\hypersetup{urlcolor=MidnightBlue}
\usepackage{url}            
\usepackage{booktabs}       
\usepackage{amsfonts}       
\usepackage{nicefrac}       
\usepackage{microtype}      

\usepackage{mathtools}
\usepackage{bm}
\usepackage{natbib}
\usepackage{amsthm}
\usepackage{booktabs}
\usepackage{multirow}
\usepackage{enumitem}
\usepackage{xr}
\usepackage{siunitx}
\sisetup{
    range-units = single ,
    range-phrase = --
}
\usepackage{dsfont}
\usepackage{titletoc}
\newcommand\DoToC{%
  \startcontents
  \printcontents{}{0}{\textbf{Contents}\vskip3pt\hrule\vskip5pt}
  \vskip3pt\hrule\vskip5pt
}

\newcommand{\e}{\mathbf{e}}
\renewcommand{\r}{\mathbf{r}}
\newcommand{\m}{\mathbf{m}}

\newcommand{\K}{\mathbf{K}}
\newcommand{\vmu}{\bm{\mu}}
\newcommand{\vSigma}{\bm{\Sigma}}
\newcommand{\vrho}{\bm{\rho}}
\newcommand{\vtheta}{{\bm{\theta}}}
\newcommand{\T}{\mathcal{T}}
\newcommand{\R}{\mathbf{R}}
\newcommand{\Q}{\mathbf{Q}}

\newcommand{\x}{\mathbf{x}}
\newcommand{\y}{\mathbf{y}}
\newcommand{\z}{\mathbf{z}}
\newcommand{\X}{\mathbf{X}}
\newcommand{\Y}{\mathbf{Y}}
\newcommand{\Z}{\mathbf{Z}}

\newcommand{\xt}{\mathbf{x}^\star}
\newcommand{\yt}{\mathbf{y}^\star}
\newcommand{\Xt}{\mathbf{X}^\star}
\newcommand{\Yt}{\mathbf{Y}^\star}

\newcommand{\XX}{\mathcal{X}}
\newcommand{\YY}{\mathcal{Y}}

\definecolor{cbAblue}{HTML}{1f78b4}
\definecolor{cbAgreen}{HTML}{33a02c}
\definecolor{cbBpurple}{HTML}{7b3294}
\definecolor{cbBgreen}{HTML}{008837}


\colorlet{parametrized}{cbBpurple}
\newcommand*{\parametrizedcolorname}{purple}
\colorlet{relational}{cbBgreen}
\newcommand*{\relationalcolorname}{green}

\newcommand*{\nn}[1]{{\color{parametrized}#1}}
\newcommand*{\rel}[1]{{\color{relational}#1}}


\newcommand{\refappendix}[1]{Appendix~\ref{#1}}

\usepackage{lipsum} 
\usepackage{adjustbox}

\theoremstyle{definition}
\newtheorem{definition}{Definition}[section]
\theoremstyle{plain}
\newtheorem{proposition}[definition]{Proposition}
\newtheorem{lemma}[definition]{Lemma}

\newtheorem*{proposition*}{Proposition}
\newtheorem*{lemma*}{Lemma}


\title{Practical Equivariances via \\ Relational Conditional Neural Processes}

%

\author{%
  Daolang Huang$^1$
  \qquad
  Manuel Haussmann$^1$ 
  \qquad
  Ulpu Remes$^2$ 
  \qquad
  ST John$^1$ \\[-10pt]
  \AND
  Grégoire Clarté$^3$ 
  \qquad
  Kevin Sebastian Luck$^{4,6}$ 
  \qquad
  Samuel Kaski$^{1,5}$ 
  \qquad
  Luigi Acerbi$^3$ \\[5pt]
  $^1$Department of Computer Science, Aalto University, Finland\\
  $^2$Department of Mathematics and Statistics, University of Helsinki\\
  $^3$Department of Computer Science, University of Helsinki\\
  $^4$Department of Electrical Engineering and Automation (EEA), Aalto University, Finland \\
  $^5$Department of Computer Science, University of Manchester\\
  $^6$Department of Computer Science, Vrije Universiteit Amsterdam, The Netherlands\\
  \texttt{\{daolang.huang, manuel.haussmann, ti.john, samuel.kaski\}@aalto.fi}\\
  \texttt{k.s.luck@vu.nl}\\
  \texttt{\{ulpu.remes, gregoire.clarte, luigi.acerbi\}@helsinki.fi}
}

\begin{document}

\maketitle

\begin{abstract}
Conditional Neural Processes (CNPs) are a class of  metalearning models popular for combining the runtime efficiency of amortized inference with reliable uncertainty quantification. 
Many relevant machine learning tasks, such as in spatio-temporal modeling, Bayesian Optimization and continuous control, inherently contain equivariances -- for example to translation -- which the model can exploit for maximal performance.
However, prior attempts to include equivariances in CNPs do not scale effectively beyond two input dimensions.
In this work, we propose Relational Conditional Neural Processes (RCNPs), an effective approach to incorporate equivariances into any neural process model. 
Our proposed method extends the applicability and impact of equivariant neural processes to higher dimensions.
We empirically demonstrate the competitive performance of RCNPs on a large array of tasks naturally containing equivariances.
\end{abstract}

\section{Introduction}

Conditional Neural Processes (CNPs; \cite{garnelo2018conditional}) have emerged as a powerful family of metalearning models, offering the flexibility of deep learning along with well-calibrated uncertainty estimates and a tractable training objective. CNPs can naturally handle irregular and missing data, making them suitable for a wide range of applications. Various advancements, such as attentive ({ACNP}; \cite{kim2019attentive}) and
Gaussian (GNP; \cite{markou2022practical})
variants, have further broadened the applicability of CNPs. In principle, CNPs can be trained on other general-purpose stochastic processes, such as Gaussian Processes (GPs; \cite{rasmussen2006gaussian}), and be used as an amortized, drop-in replacement for those, with minimal computational cost at runtime.\looseness-1

However, despite their numerous advantages, CNPs face substantial challenges when attempting to model equivariances, such as translation equivariance, which are essential for problems involving spatio-temporal components or for emulating widely used GP kernels in tasks such as Bayesian Optimization (BayesOpt; \cite{garnett2023bayesian}). In the context of CNPs, kernel properties like stationarity and isotropy would correspond to, respectively, translational equivariance and equivariance to rigid transformations. Lacking such equivariances, CNPs struggle to scale effectively and emulate (equivariant) GPs even in moderate higher-dimensional input spaces (i.e., above two).
Follow-up work has introduced Convolutional CNPs ({ConvCNP};~\cite{gordon2020convolutional}), which leverage a convolutional deep sets construction to induce translational-equivariant embeddings. However, the requirement of defining an input grid and performing  convolutions severely limits the applicability of {ConvCNP}s and variants thereof ({ConvGNP} \cite{markou2022practical}; {FullConvGNP} \cite{bruinsma2021gaussian}) to one- or two-dimensional equivariant inputs; both because higher-dimensional implementations of convolutions are poorly supported by most deep learning libraries, and for the prohibitive cost of performing convolutions in three or more dimensions. Thus, the problem of efficiently scaling equivariances in CNPs above two input dimensions remains open.

In this paper, we introduce {Relational Conditional Neural Processes} (RCNPs), a novel approach that offers a simple yet powerful technique for including  a large class of equivariances into any neural process model. By leveraging the existing equivariances of a problem, RCNPs can achieve improved sample efficiency, predictive performance, and generalization (see Figure \ref{fig:intro}).
The basic idea in RCNPs is to enforce equivariances via a relational encoding that only stores appropriately chosen \emph{relative} information of the data. By stripping away absolute information, equivariance is automatically satisfied. 
Surpassing the complex approach of previous methods (e.g., the {ConvCNP} family for translational equivariance), RCNPs provide a practical solution that scales to higher dimensions, while maintaining strong performance and extending to other equivariances. The cost to pay is increased computational complexity in terms of context size (size of the dataset we are conditioning on at runtime); though often not a bottleneck for the typical metalearning small-context setting of CNPs. 
Our proposed method works for equivariances that can be expressed relationally via comparison between pairs of points (e.g., their difference or distance); in this paper, we focus on translational equivariance and equivariance to rigid transformations.

\paragraph{Contributions.}
In summary, our contributions in this work are:
\begin{itemize}[noitemsep,topsep=0pt]
    \item We introduce a simple and effective way -- \emph{relational encoding} -- to encode exact equivariances directly into CNPs, in a way that easily scales to higher input dimensions.
    \item We propose two variants of relational encoding: one that works more generally (`Full'); and one which is simpler and more computationally efficient (`Simple'), and is best suited for implementing translation equivariance. 
    \item We provide theoretical foundations and proofs to support our approach.
    \item We empirically demonstrate the competitive performance of RCNPs on a variety of tasks that naturally contain different equivariances, highlighting their practicality and effectiveness.
\end{itemize}

\paragraph{Outline of the paper.}

The remainder of this paper is organized as follows. In Section~\ref{sec:background}, we review the foundational work of CNPs and their variants. This is followed by the introduction of our proposed relational encoding approach to equivariances at the basis of our RCNP models (Section~\ref{sec:rcnp}). We then provide in Section~\ref{sec:proof} theoretical proof that relational encoding achieves equivariance without losing essential information; followed in Section \ref{sec:experiments} by a thorough empirical validation of our claims in various tasks requiring equivariances, demonstrating the generalization capabilities and predictive performance of RCNPs. We discuss other related work in Section~\ref{sec:relatedwork}, and the limitations of our approach, including its computational complexity, in Section~\ref{sec:limitations}. We conclude in Section \ref{sec:discussion} with an overview of the current work and future directions. 

Our code is available at \url{https://github.com/acerbilab/relational-neural-processes}.

\begin{figure}[t!]
    \centering
    \includegraphics[width=\textwidth]{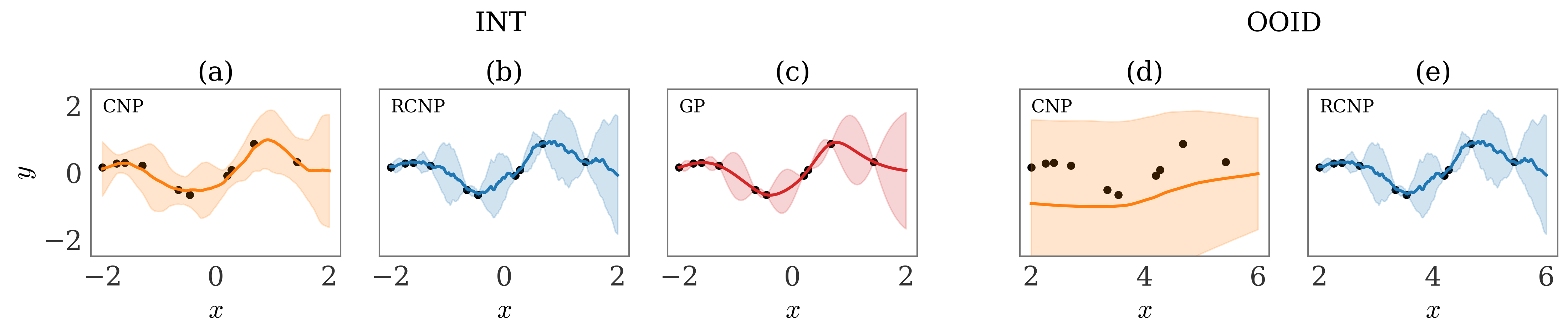}
    \caption{\textbf{Equivariance in 1D regression.} Left: Predictions for a CNP (a) and RCNP (b) in an \emph{interpolation} (INT) task, trained for $20$ epochs to emulate a GP (c) with Mat\'ern-$\frac{5}{2}$ kernel and noiseless observations. The CNP underfits the context data (black dots), while the RCNP leverages translation equivariance to learn faster and yield better predictions. Right: The CNP (d) fails to predict in an \emph{out-of-input-distribution} (OOID) task, where the input context is outside the training range (note the shifted $x$ axis); whereas the RCNP (e) generalizes by means of translational equivariance.}
    \label{fig:intro}
\end{figure}

\section{Background: the Conditional Neural Process family}
\label{sec:background}

In this section, we review the Conditional Neural Process (CNP) family of stochastic processes and the key concept of equivariance at the basis of this work. Following \cite{markou2022practical}, we present these notions within the framework of prediction maps \cite{foong2020meta}. We denote with $\x \in \XX \subseteq \mathbb{R}^{d_x}$ input vectors and $\y \in \YY \subseteq \mathbb{R}^{d_y}$ output vectors, with $d_x, d_y \ge 1$ their dimensionality. If $f(\z)$ is a function that takes as input elements of a set $\Z$, we denote with $f(\Z)$ the set $\{f(\z)\}_{\z \in \Z}$.

\paragraph{Prediction maps.}   
A \emph{prediction map} $\pi$ is a function that maps (1) a \emph{context set} $(\X, \Y)$ comprising input/output pairs $\{(\x_1, \y_1), \ldots, (\x_N, \y_N)\}$ and (2) a collection of \emph{target inputs} ${\Xt = (\xt_1, \dots, \xt_M)}$ to a distribution over the corresponding \emph{target outputs} $\Yt = (\yt_1, \ldots, \yt_M)$: 
\begin{equation} \label{eq:predictionmap}
 \pi(\Yt\vert (\X, \Y), \Xt) = p(\Yt|\r),
\end{equation}
where $\r = r((\X, \Y), \Xt)$ is the \emph{representation vector} that parameterizes the distribution over $\Yt$ via the representation function $r$. Bayesian posteriors are prediction maps, a well-known example being the Gaussian Process (GP) posterior:
\begin{equation}
 \pi(\Yt\vert (\X, \Y), \Xt) = \mathcal{N}\left(\Yt\vert \m, \K\right),    
\end{equation}
where the prediction map takes the form of a multivariate normal with representation vector $\r = (\m, \K)$. The mean $\m = m_\text{post}((\X, \Y), \Xt)$ and covariance matrix $\K = k_\text{post}(\X, \Xt)$ of the multivariate normal are determined by the conventional GP posterior predictive expressions \cite{rasmussen2006gaussian}.

\paragraph{Equivariance.} A prediction map $\pi$ with representation function $r$ is $\T$-\emph{equivariant} with respect to a group $\T$ of transformations\footnote{A group of transformations is a family of composable, invertible functions with identity $\tau_\text{id} \x = \x$.} of the input space, $\tau: \XX \rightarrow \XX$, if and only if for all $\tau \in \T$:
\begin{equation} \label{eq:piequivariant}
r((\X, \Y), \Xt) = r((\tau \X, \Y), \tau \Xt),
\end{equation}
where $\tau\x \equiv \tau(\x)$ and $\tau\X$ is the set obtained by applying $\tau$ to all elements of $\X$. Eq. \ref{eq:piequivariant} defines equivariance of a prediction map based on its representation function, and can be shown to be equivalent to the common definition of an \emph{equivariant map}; see \refappendix{app:proof}.
Intuitively, equivariance means that if the data (the context inputs) are transformed in a certain way, the predictions (the target inputs) transform correspondingly.
Common groups of transformations include translations, rotations, reflections -- all examples of rigid transformations.
In kernel methods and specifically in GPs, equivariances are incorporated in the prior kernel function $k(\x, \x^\star)$. For example, translational equivariance corresponds to \emph{stationarity} $k_\text{sta} = k(\x - \x^\star)$, and equivariance to all rigid transformations corresponds to \emph{isotropy}, $k_\text{iso} = k(||\x - \x^\star||_2)$, where $||\cdot||_2$ denotes the Euclidean norm of a vector. A crucial question we address in this work is how to implement equivariances in other prediction maps, and specifically in the CNP family.

\paragraph{Conditional Neural Processes.}

A CNP \cite{garnelo2018conditional} uses an \emph{encoder}\footnote{We use \textcolor{parametrized}{\parametrizedcolorname} to highlight parametrized functions (neural networks) whose parameters will be learned.} $\nn{f_e}$ to produce an embedding of the context set, $\e = \nn{f_e}(\X, \Y)$. The encoder uses a DeepSet architecture \cite{zaheer2017deep} to ensure invariance with respect to permutation of the order of data points, a key property of stochastic processes. We denote with $\r_m = (\e, \xt_m)$ the local representation of the $m$-th point of the target set $\Xt$, for $1 \le m \le M$.
CNPs yield a prediction map with representation ${\r = (\r_1, \ldots, \r_M)}$:
\begin{equation} \label{eq:cnp}
\pi(\Yt\vert (\X, \Y), \Xt) = p(\Yt | \r) = \prod_{m=1}^M q\left(\yt_m | \nn{\bm{\lambda}}(\r_m)\right),
\end{equation}
where $q(\cdot| \bm{\lambda})$ belongs to a family of distributions parameterized by $\bm{\lambda}$, and  $\bm{\lambda} = \nn{f_{d}}(\r_m)$ is decoded in parallel for each $\r_m$. 
In the standard CNP, the decoder network $\nn{f_d}$ is a multi-layer perceptron. A common choice for CNPs is a Gaussian likelihood, $q(\yt_m| \bm{\lambda}) = \mathcal{N}\left(\yt_m\vert \nn{\vmu}(\r_m), \nn{\vSigma}(\r_m)\right)$, where $\vmu$ and $\vSigma$ represent the predictive mean and covariance of each output, independently for each target (a \emph{mean field} approach). 
Given the closed-form likelihood, CNPs are easily trainable via maximum-likelihood optimization of parameters of encoder and decoder networks, by sampling batches of context and target sets from the training data.

\paragraph{Gaussian Neural Processes.}

Notably, standard CNPs do not model dependencies between distinct target outputs $\yt_m$ and $\yt_{m^\prime}$, for $m \neq m^\prime$. Gaussian Neural Processes (GNPs \cite{markou2022practical}) are a variant of CNPs that remedy this limitation, by assuming a joint multivariate normal structure over the outputs for the target set, $\pi(\Yt\vert (\X, \Y), \Xt) = \mathcal{N}\left(\Y\vert \vmu, \vSigma \right)$. For ease of presentation, we consider now scalar outputs ($d_y = 1$), but the model generalizes to the multi-output case. GNPs parameterize the mean as $\mu_m = \nn{f_\mu}(\r_m)$ and covariance matrix $\Sigma_{m, m^\prime} = k\big(\nn{f_\Sigma}(\r_m), \nn{f_\Sigma}(\r_{m^\prime})\big) \nn{f_v}(\r_m) \nn{f_v}(\r_{m^\prime})$, for target points $\xt_m, \xt_{m^\prime}$, where $\nn{f_\mu}$, $\nn{f_\Sigma}$, and $\nn{f_v}$ are neural networks with outputs, respectively, in $\mathbb{R}$, $\mathbb{R}^{d_\Sigma}$, and $\mathbb{R}^+$, $k(\cdot, \cdot)$ is a positive-definite kernel function, and $d_{\Sigma} \in \mathbb{N}^+$ denotes the dimensionality of the space in which the covariance kernel is evaluated.
Standard GNP models use the \texttt{linear} covariance (where $f_v = 1$ and $k$ is the linear kernel) or the \texttt{kvv} covariance (where $k$ is the exponentiated quadratic kernel with unit lengthscale), as described in \cite{markou2022practical}.

\paragraph{Convolutional Conditional Neural Processes.}

The Convolutional CNP family includes the \mbox{ConvCNP} \cite{gordon2020convolutional}, ConvGNP \cite{markou2022practical}, and FullConvGNP \cite{bruinsma2021gaussian}. These CNP models are built to implement translational equivariance via a ConvDeepSet architecture \cite{gordon2020convolutional}. For example, the ConvCNP is a prediction map $p(\Yt\vert (\X, \Y), \Xt) = \prod_{m=1}^M q\left(\yt_m| \nn{\Phi_{\X, \Y}}(\xt_m)\right)$, where $\nn{\Phi_{\X,\Y}(\cdot)}$ is a ConvDeepSet. The construction of ConvDeepSets involves, among other steps, gridding of the data if not already on the grid and application of $d_x$-dimensional convolutional neural networks ($2d_x$ for FullConv\-GNP). Due to the limited scaling and availability of convolutional operators above two dimensions, ConvCNPs do not scale in practice for $d_x > 2$ translationally-equivariant input dimensions.

\paragraph{Other Neural Processes.} 

The neural process family includes several other members, such as latent NPs (LNP; \cite{garnelo2018neural}) which model dependencies in the predictions via a latent variable -- however, LNPs lack a tractable training objective, which impairs their practical performance. Attentive (C)NPs (A(C)NPs; \cite{kim2019attentive}) implement an attention mechanism instead of the simpler DeepSet architecture. 
Transformer NPs~\citep{Nguyen2022} combine a transformer-based architecture with a causal mask to construct an autoregressive likelihood.
Finally, Autoregressive CNPs (AR-CNPs \cite{bruinsma2023autoregressive}) provide a novel technique to deploy existing CNP models via autoregressive sampling without architectural changes.

\section{Relational Conditional Neural Processes}
\label{sec:rcnp}

We introduce now our Relational Conditional Neural Processes (RCNPs), an effective solution for embedding equivariances into any CNP model. Through relational encoding, we encode selected relative information and discard absolute information, inducing the desired equivariance.

\paragraph{Relational encoding.}

In RCNPs, the (full) \emph{relational encoding} of a target point $\xt_m \in \Xt$ with respect to the context set $(\X, \Y)$ is defined as:
\begin{equation} \label{eq:full_relencoding}
\rho_\text{full}(\xt_m, (\X, \Y)) = \bigoplus_{n,n^\prime = 1}^N \nn{f_r}\left(\rel{g}(\x_n, \xt_m), \R_{nn^\prime}\right), \qquad \R_{nn^\prime} \equiv \left(\rel{g}(\x_{n}, \x_{n^\prime}), \y_{n}, \y_{n^\prime}\right),
\end{equation}
where $\rel{g}: \XX \times \XX \rightarrow \mathbb{R}^{d_\text{comp}}$ is a chosen \emph{comparison function}\footnote{We use \textcolor{relational}{\relationalcolorname} to highlight the selected comparison function that encodes a specific set of equivariances.} that specifies how a pair ${\x}, \x^\prime$ should be compared; $\R$ is the \emph{relational matrix}, comparing all pairs of the context set; ${\nn{f_r}: \mathbb{R}^{d_\text{comp}} \times \mathbb{R}^{d_\text{comp}+2d_\text{y}} \rightarrow \mathbb{R}^{d_\text{rel}}}$ is the \emph{relational encoder}, a neural network that maps a comparison vector and element of the relational matrix into a high-dimensional space $\mathbb{R}^{d_\text{rel}}$; $\bigoplus$ is a commutative aggregation operation (\texttt{sum} in this work) ensuring permutation invariance of the context set \cite{zaheer2017deep}.
From Eq.~\ref{eq:full_relencoding}, a point $\xt$ is encoded based on how it compares to the entire context set. 

Intuitively, the comparison function $\rel{g(\cdot, \cdot)}$ should be chosen to remove all information that does not matter to impose the desired equivariance. For example, if we want to encode translational equivariance, the comparison function should be the difference of the inputs, ${\rel{g_\text{diff}}(\x_n, \xt_m) = \xt_m - \x_n}$ (with $d_\text{comp} = d_\text{x}$). Similarly, isotropy (invariance to rigid transformations, i.e.~rotations, translations, and reflections) can be encoded via the Euclidean distance ${\rel{g_\text{dist}}(\x_n, \xt_m) = ||\xt_m - \x_n||_2}$ (with $d_\text{comp} = 1$). We will prove these statements formally in Section \ref{sec:proof}.

\paragraph{Full RCNP.} 
The full-context RCNP, or FullRCNP, is a prediction map with representation ${\r =  (\vrho_1, \ldots, \vrho_M)}$, with $\vrho_m = \rho_\text{full}(\xt_m, (\X, \Y))$ the relational encoding defined in Eq.~\ref{eq:full_relencoding}:
\begin{equation} \label{eq:fullrcnp}
\pi(\Yt\vert (\X, \Y), \Xt) = p(\Yt| \r) = \prod_{m=1}^M q\left(\yt_m | \nn{\bm{\lambda}}(\vrho_m)\right), 
\end{equation}
where $q(\cdot| \bm{\lambda})$ belongs to a family of distributions parameterized by $\bm{\lambda}$, where  $\bm{\lambda} = \nn{f_{d}}(\vrho_m)$ is decoded from the relational encoding $\vrho_m$ of the $m$-th target. As usual, we often choose a Gaussian likelihood, whose mean and covariance (variance, for scalar outputs) are produced by the decoder network.

Note how Eq.~\ref{eq:fullrcnp} (FullRCNP) is nearly identical to Eq.~\ref{eq:cnp} (CNP), the difference being that we replaced the representation $\r_m = (\e, \xt_m)$ with the relational encoding $\vrho_m$ from Eq.~\ref{eq:full_relencoding}. Unlike CNPs, in RCNPs there is no separate encoding of the context set alone.
The RCNP construction generalizes easily to other members of the CNP family by plug-in replacement of $\r_m$ with $\vrho_m$.
For example, a relational GNP (RGNP) describes a multivariate normal prediction map whose mean is parameterized as $\mu_m = f_\mu(\vrho_m)$ and whose covariance matrix is given by $\Sigma_{m, m^\prime} = k\big(\nn{f_\Sigma}(\vrho_m), \nn{f_\Sigma}(\vrho_{m^\prime})\big) \nn{f_v}(\vrho_m) \nn{f_v}(\vrho_{m^\prime})$.

\paragraph{Simple RCNP.}

The full relational encoding in Eq.~\ref{eq:full_relencoding} is cumbersome as it asks to build and aggregate over a full relational matrix. Instead, we can consider the simple or `diagonal' relational encoding:
\begin{equation} \label{eq:diag_relencoding} 
\rho_\text{diag}\left(\xt_m, (\X, \Y)\right) = \bigoplus_{n=1}^N \nn{f_r}\left( \rel{g}(\x_n, {\xt_m}), \rel{g}(\x_n, \x_n), \y_n \right).
\end{equation}
Eq.~\ref{eq:diag_relencoding} is functionally equivalent to Eq.~\ref{eq:full_relencoding} restricted to the diagonal $n = n^\prime$, and further simplifies in the common case $\rel{g}(\x_n, \x_n) = \mathbf{0}$, whereby the argument of the aggregation becomes $\nn{f_r}( \rel{g}(\x_n, {\xt_m}), \y_n )$.


We obtain the simple RCNP model (from now on, just RCNP) by using the diagonal relational encoding $\rho_\text{diag}$ instead of the full one, $\rho_\text{full}$. Otherwise, the simple RCNP model follows Eq.~\ref{eq:fullrcnp}. We will prove, both in theory and empirically, that the simple RCNP is best for encoding translational equivariance. Like the FullRCNP, the RCNP easily extends to other members of the CNP family.

In this paper, we consider the FullRCNP, FullRGNP, RCNP and RGNP models for translations and rigid transformations, leaving examination of other RCNP variants and equivariances to future work.

\section{RCNPs are equivariant and context-preserving prediction maps}
\label{sec:proof}

In this section, we demonstrate that RCNPs are $\T$-equivariant prediction maps, where $\T$ is a transformation group of interest (e.g., translations), for an appropriately chosen comparison function $\rel{g}: \XX \times \XX \rightarrow \mathbb{R}^{d_\text{comp}}$. Then, we formalize the statement that RCNPs strip away only enough information to achieve equivariance, but no more.
We prove this by showing that the RCNP representation preserves information in the context set. 
Full proofs are given in \refappendix{app:proof}.

\subsection{RCNPs are equivariant}

\begin{definition}
Let $\rel{g}$ be a comparison function and $\T$ a group of transformations $\tau: \XX \rightarrow \XX$. We say that $\rel{g}$ is \emph{$\T$-invariant} if and only if $\rel{g}(\x, \x^\prime) = \rel{g}(\tau\x, \tau\x^\prime)$ for any $\x, \x^\prime \in \XX$ and $\tau \in \T$.
\end{definition}

\begin{definition}
Given a comparison function $\rel{g}$, 
we define the \emph{comparison sets}:
\begin{equation} \label{eq:compsets}
\begin{split}
\rel{g}\left((\X, \Y), (\X, \Y)\right) = & \left\{(\rel{g}(\x_n, \x_{n^\prime}), \y_n, \y_{n^\prime})\right\}_{1 \le n, n^\prime \le N}, \\
\rel{g}\left((\X, \Y), \Xt\right) = & \left\{(\rel{g}(\x_n, \xt_m), \y_n)\right\}_{1 \le n \le N, 1 \le m \le M}, \\
\rel{g}\left(\Xt, \Xt\right) = & \left\{ \rel{g}(\xt_m, \xt_{m^\prime})\right\}_{1 \le m, m^\prime \le M}.
\end{split}
\end{equation}
If $\rel{g}$ is not symmetric, we can also denote $\rel{g}\left(\Xt, (\X, \Y)\right) = \left\{(\rel{g}(\xt_m, \x_n), \y_n)\right\}_{1 \le n \le N, 1 \le m \le M}$.
\end{definition}

\begin{definition} 
A prediction map $\pi$ and its representation function $r$ are \emph{relational} with respect to a comparison function $\rel{g}$ if and only if  $r$ can be written solely through set comparisons:
\begin{equation} \label{eq:relationalmap}
r((\X, \Y), \Xt) = r\big(\rel{g}((\X, \Y), (\X, \Y)), \rel{g}((\X, \Y), \Xt), \rel{g}(\Xt, (\X, \Y)), \rel{g}(\Xt, \Xt)\big).
\end{equation}
\end{definition}

\begin{lemma} \label{thm:equivariant}
Let $\pi$ be a prediction map, $\T$ a transformation group, and $\rel{g}$ a comparison function. If $\pi$ is relational with respect to $\rel{g}$ and $\rel{g}$ is $\T$-invariant, then $\pi$ is $\T$-equivariant.
\end{lemma}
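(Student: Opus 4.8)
The plan is to show the equality of representations required by the definition of $\T$-equivariance, Eq.~\ref{eq:piequivariant}, namely $r((\X, \Y), \Xt) = r((\tau\X, \Y), \tau\Xt)$ for all $\tau \in \T$, by unwinding the relational structure of $r$ and applying the $\T$-invariance of $g$ componentwise. The key observation is that because $\pi$ is relational (Eq.~\ref{eq:relationalmap}), the representation $r$ depends on the context and target inputs \emph{only} through the four comparison sets of Eq.~\ref{eq:compsets}. So it suffices to prove that each of these four comparison sets is left unchanged when we replace $(\X, \Xt)$ by $(\tau\X, \tau\Xt)$; the outputs $\Y, \Yt$ are untouched by $\tau$ and carry through verbatim.

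First I would write down what happens to a single comparison set under the transformation. Consider $g\left((\tau\X, \Y), (\tau\X, \Y)\right)$: by definition this is $\{(g(\tau\x_n, \tau\x_{n^\prime}), \y_n, \y_{n^\prime})\}_{1 \le n, n^\prime \le N}$. By $\T$-invariance of $g$ we have $g(\tau\x_n, \tau\x_{n^\prime}) = g(\x_n, \x_{n^\prime})$ for every pair, so the entire set equals $g\left((\X, \Y), (\X, \Y)\right)$ termwise. I would then repeat this identical argument for the mixed context--target set $g\left((\tau\X, \Y), \tau\Xt\right) = \{(g(\tau\x_n, \tau\xt_m), \y_n)\}$, which by invariance reduces to $g\left((\X, \Y), \Xt\right)$, and likewise for the reversed set $g\left(\tau\Xt, (\tau\X, \Y)\right)$ and the target--target set $g\left(\tau\Xt, \tau\Xt\right)$. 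Each reduction is an immediate application of Definition~1 (invariance) to every pair of arguments appearing in the set.

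Having established that all four comparison sets are invariant under $\tau$, I would conclude by substituting into Eq.~\ref{eq:relationalmap}. Writing $r$ through its relational form, the arguments on the right-hand side for the transformed inputs are exactly the four comparison sets built from $(\tau\X, \tau\Xt)$, which we have just shown coincide with those built from $(\X, \Xt)$. Hence $r((\tau\X, \Y), \tau\Xt) = r((\X, \Y), \Xt)$, which is precisely the equivariance condition of Eq.~\ref{eq:piequivariant}, completing the proof.

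I do not expect a genuine obstacle here, since the result is essentially a bookkeeping chain: relational $\Rightarrow$ depends only on comparison sets, and invariance $\Rightarrow$ comparison sets are fixed by $\tau$. The one point requiring care is handling all four comparison sets rather than just the symmetric context--context one, and making sure the argument is applied to the correct (possibly asymmetric) ordering of each $g$-evaluation; but this is routine once the single-set computation is written out. A minor subtlety worth stating explicitly is that equality of the comparison \emph{sets} (as unordered collections) is exactly what the permutation-invariant aggregation in the relational encoding consumes, so set-level equality, not just elementwise matching under a fixed indexing, is the right notion and is what the invariance of $g$ delivers.
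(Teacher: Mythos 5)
Your proposal is correct and follows essentially the same route as the paper's own proof: show that each comparison set from Eq.~\ref{eq:compsets} is unchanged under $\tau$ by applying $\T$-invariance of $g$ to every pair of arguments, then substitute into the relational form Eq.~\ref{eq:relationalmap} to obtain $r((\X, \Y), \Xt) = r((\tau\X, \Y), \tau\Xt)$. The paper writes out the computation explicitly only for the context--target set and notes that the other three sets follow identically, which is exactly the bookkeeping you describe.
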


From Lemma \ref{thm:equivariant} and previous definitions, we derive the main result about equivariance of RCNPs.

\begin{proposition} \label{thm:rcnpequivariant}
Let $\rel{g}$ be the comparison function used in a RCNP, and $\T$ a group of transformations. If $\rel{g}$ is $\T$-invariant, the RCNP is $\T$-equivariant.
\end{proposition}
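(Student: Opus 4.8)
The plan is to obtain Proposition~\ref{thm:rcnpequivariant} as an immediate corollary of Lemma~\ref{thm:equivariant}. That lemma already reduces $\T$-equivariance to two hypotheses: that the comparison function $g$ is $\T$-invariant (which we are given) and that the prediction map is \emph{relational} with respect to $g$ in the sense of Eq.~\ref{eq:relationalmap}. Since the RCNP likelihood of Eq.~\ref{eq:fullrcnp} depends on the data $((\X, \Y), \Xt)$ only through the representation $\r = (\vrho_1, \ldots, \vrho_M)$, it suffices to show that this representation function is relational; equivariance then follows by invoking Lemma~\ref{thm:equivariant}. Hence the entire content of the proof is the verification that the relational encoding can be written using only the comparison sets of Eq.~\ref{eq:compsets}.

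I would first treat the FullRCNP. Inspecting Eq.~\ref{eq:full_relencoding}, each summand $f_r(g(\x_n, \xt_m), \R_{nn^\prime})$ is built from two kinds of quantities: the target--context comparison $g(\x_n, \xt_m)$, which together with $\y_n$ is an element of the set $g((\X, \Y), \Xt)$; and the tuple $\R_{nn^\prime} = (g(\x_n, \x_{n^\prime}), \y_n, \y_{n^\prime})$, which is by definition an element of the context--context set $g((\X, \Y), (\X, \Y))$. Thus every argument of $f_r$, and therefore every $\vrho_m$ and the full tuple $\r$, is a function of these two comparison sets alone (the sets $g(\Xt, (\X, \Y))$ and $g(\Xt, \Xt)$ are simply unused, which the definition permits). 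I would then repeat the argument for the simple RCNP of Eq.~\ref{eq:diag_relencoding}: there the arguments $g(\x_n, \xt_m)$ and $\y_n$ again lie in $g((\X, \Y), \Xt)$, while the self-comparison $g(\x_n, \x_n)$ is a diagonal element of $g((\X, \Y), (\X, \Y))$ (and vanishes in the common case, leaving a pure function of $g((\X, \Y), \Xt)$). In both cases $\r$ is expressible solely through set comparisons, so the RCNP is relational.

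The step requiring the most care, and the one I expect to be the main obstacle, is making the phrase \emph{function of the comparison sets} fully precise: the aggregation in Eq.~\ref{eq:full_relencoding} couples $g(\x_n, \xt_m)$ to $\R_{nn^\prime}$ through the shared context index $n$, whereas the comparison sets are \emph{unordered} and have discarded that index. I would resolve this by noting that the coupling is still recoverable, since each $\R_{nn^\prime}$ carries the label $\y_n$ that matches the label attached to the element $(g(\x_n, \xt_m), \y_n)$ of $g((\X, \Y), \Xt)$; more robustly, the representation factors through $g$, depending on the raw inputs only via evaluations of $g$, so once $g$ is assumed $\T$-invariant the joint replacement $(\X, \Xt) \mapsto (\tau\X, \tau\Xt)$ leaves every comparison value, hence every $\vrho_m$, unchanged. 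Having established that the RCNP is a relational prediction map and that $g$ is $\T$-invariant by hypothesis, Lemma~\ref{thm:equivariant} applies directly and yields that the RCNP is $\T$-equivariant, completing the proof.
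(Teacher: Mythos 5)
Your proof follows exactly the same route as the paper's: you verify that the RCNP representation is written solely through the comparison sets $g((\X,\Y),(\X,\Y))$ and $g((\X,\Y),\Xt)$, conclude that the RCNP is a relational prediction map, and then invoke Lemma~\ref{thm:equivariant} together with the assumed $\T$-invariance of $g$. Your extra attention to the index-coupling subtlety (the aggregation pairs $g(\x_n,\xt_m)$ with $\R_{nn^\prime}$ through the shared index $n$, which the unordered comparison sets discard) is a careful refinement that the paper's proof glosses over, and your fallback argument -- that the representation depends on the inputs only via evaluations of $g$, each of which is unchanged under the joint replacement $(\X,\Xt)\mapsto(\tau\X,\tau\Xt)$ -- settles it correctly without altering the structure of the argument.
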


As useful examples, the \emph{difference} comparison function $\rel{g_\text{diff}}(\x, \xt) = \xt - \x$ is invariant to translations of the inputs, and the \emph{distance} comparison function $\rel{g_\text{dist}}(\x, \xt) = ||\xt - \x||_2$ is invariant to rigid transformations; thus yielding appropriately equivariant RCNPs.

\subsection{RCNPs are context-preserving}
\label{sec:proofcontext}

The previous section demonstrates that any RCNP is $\T$-equivariant, for an appropriate choice of $g$. However, a trivial comparison function $\rel{g}(\x, \x^\prime) = \mathbf{0}$ would also satisfy the requirements, yielding a trivial representation. We need to guarantee that, at least in principle, the encoding procedure removes only information required to induce $\T$-equivariance, and no more. A minimal request is that the context set is preserved in the prediction map representation $\r$, modulo equivariances.

\begin{definition}
A comparison function $\rel{g}$ is \emph{context-preserving} with respect to a transformation group $\T$ if for any context set $(\X, \Y)$ and target set $\Xt$, there is a submatrix $\Q^\prime \subseteq \Q$ of the matrix $\Q_{mnn^\prime} = \left(\rel{g}(\x_n, \xt_m), \rel{g}(\x_n, \x_{n^\prime}), \y_n, \y_{n^\prime} \right)$, a reconstruction function $\gamma$, and a transformation $\tau \in \T$ such that $\gamma\left(\Q^\prime\right) = (\tau \X, \Y)$.
\end{definition}

For example, $\rel{g_\text{dist}}$ is context-preserving with respect to the group of rigid transformations. For any $m$, $\Q^\prime = \Q_{m::}$ is the set of pairwise distances between points, indexed by their output values. Reconstructing the positions of a set of points given their pairwise distances is known as the \emph{Euclidean distance geometry} problem \cite{tasissa2018exact}, which can be solved uniquely up to rigid transformations with traditional multidimensional scaling techniques \cite{torgerson1952multidimensional}.
Similarly, $\rel{g_\text{diff}}$ is context-preserving with respect to translations. For any $m$ and $n^\prime$, $\Q^\prime = \Q_{m:n^\prime}$ can be projected to the vector $((\x_1 - \xt_m, \y_1), \ldots, (\x_N - \xt_m, \y_N))$, which is equal to $(\tau_m\X, \Y)$ with the translation $\tau_m(\cdot) = \cdot -\xt_m$.

\begin{definition} For any $(\X, \Y), \Xt$,
a family of functions $h_{\vtheta}((\X, \Y), \Xt) \rightarrow \r \in \mathbb{R}^{d_\text{rep}}$ is \emph{context-preserving} under a  transformation group $\T$ if there exists $\vtheta \in \bm{\Theta}$, $d_\text{rep} \in \mathbb{N}$, a \emph{reconstruction function} $\gamma$, and a transformation $\tau \in \T$ such that $\gamma(h_\vtheta((\X, \Y), \Xt)) \equiv \gamma(\r) =(\tau \X, \Y)$.    
\end{definition}

Thus, an encoding is context-preserving if it is possible at least in principle to fully recover the context set from $\r$, implying that no relevant context is lost. This is indeed the case for RCNPs.

\begin{proposition} \label{thm:fullpreserving}
Let $\T$ be a transformation group and $\rel{g}$ the comparison function used in a FullRCNP. If $\rel{g}$ is context-preserving with respect to $\T$, then the representation function $r$ of the FullRCNP is context-preserving with respect to $\T$.
\end{proposition}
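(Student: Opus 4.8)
The plan is to exhibit, for the FullRCNP representation $r((\X,\Y),\Xt) = (\vrho_1, \ldots, \vrho_M)$, an explicit choice of relational-encoder parameters $\vtheta$ (the weights of $f_r$), a latent dimension $d_\text{rel}$, and a reconstruction function $\gamma$ that together recover $(\tau\X, \Y)$ for some $\tau \in \T$. The first step is pure bookkeeping: the argument fed to $f_r$ in Eq.~\ref{eq:full_relencoding} is exactly the tuple $(g(\x_n, \xt_m), g(\x_n, \x_{n^\prime}), \y_n, \y_{n^\prime})$, which coincides with the entry $\Q_{mnn^\prime}$ from the definition of a context-preserving comparison function. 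Hence, for each fixed $m$,
\begin{equation}
\vrho_m = \sum_{n,n^\prime=1}^N f_r(\Q_{mnn^\prime}),
\end{equation}
so that $\vrho_m$ is a sum aggregation (a DeepSet pooling) over the finite multiset $\{\Q_{mnn^\prime}\}_{1\le n,n^\prime\le N}$ of entries of the slice $\Q_{m::}$.

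The crux is then to show this aggregation can be made \emph{injective on multisets} by a suitable choice of $f_r$. I would invoke the universal sum-decomposition / injectivity theory underlying DeepSets \cite{zaheer2017deep}: over an appropriate domain and for multisets of cardinality at most $N^2$, there exist a parameter vector $\vtheta$ and a sufficiently large $d_\text{rel}$ for which the map sending a multiset to $\sum f_r(\cdot)$ is injective, and thus invertible on its image. Applying this to each $\vrho_m$ yields a recovery map $\vrho_m \mapsto \{\Q_{mnn^\prime}\}_{n,n^\prime}$; in particular, fixing a single $m$ already recovers all entries of the slice $\Q_{m::}$ as an unordered multiset.

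It then remains to pass from the recovered multiset to the context set. Since $g$ is context-preserving with respect to $\T$, there is a submatrix $\Q^\prime \subseteq \Q$, a reconstruction function $\gamma_g$, and a $\tau\in\T$ with $\gamma_g(\Q^\prime) = (\tau\X, \Y)$ (in the running examples one may take $\Q^\prime = \Q_{m::}$ for any single $m$). I would therefore define $\gamma$ as the composition of (i) the multiset-recovery map furnished by injectivity, (ii) the extraction and re-indexing of the submatrix $\Q^\prime$ from the recovered multiset, and (iii) the map $\gamma_g$. Chaining these gives $\gamma(r((\X,\Y),\Xt)) = (\tau\X,\Y)$, establishing that the FullRCNP representation family is context-preserving.

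The main obstacle is step (ii) together with the injectivity claim. Injectivity of sum aggregation holds only for bounded-cardinality multisets over a suitable domain, so I would state the cardinality bound ($N^2$) and the domain regularity explicitly and fix $\vtheta, d_\text{rel}$ uniformly over the context sizes of interest. The re-indexing is delicate because the aggregation returns an \emph{unordered} multiset: one must identify which recovered tuples correspond to which pair $(n,n^\prime)$ in order to reassemble $\Q^\prime$. As the paper's examples indicate, this uses the output values $(\y_n, \y_{n^\prime})$ carried inside each tuple as labels, and I would argue these labels suffice (handling possible ties, e.g.\ by noting that any consistent tie-breaking leaves the reconstruction unchanged up to the allowed $\tau$). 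Once $\Q^\prime$ is correctly assembled, the final reconstruction is precisely the content already granted by context-preservation of $g$ (multidimensional scaling for $g_\text{dist}$, or the explicit projection for $g_\text{diff}$), so no further work is needed there.
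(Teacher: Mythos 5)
Your proposal follows essentially the same route as the paper's proof: the paper's Lemma~\ref{thm:relational_reconstruct} establishes precisely the multiset-injectivity step you invoke --- choosing $f_r$ to approximate a one-hot encoding over a finite-precision (hence discrete) domain so that $\vrho_m$ determines the multiset $\{(g(\x_n,\xt_m),\R_{nn^\prime})\}_{n,n^\prime}$ --- and then, exactly as in your steps (ii)--(iii), re-indexes the recovered tuples by the output values (assumed distinct, with ties removed by jitter) and applies context-preservation of $g$ to obtain $(\tau\X,\Y)$. The only difference is that you discharge the injectivity claim by citing DeepSets sum-decomposition theory as a black box, whereas the paper gives its own explicit existence construction; both arguments are non-constructive in the same sense and rest on the same cardinality/domain caveats you note.
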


\begin{proposition} \label{thm:simplepreserving}
Let $\T$ be the translation group and $\rel{g_\text{\rm diff}}$ the difference comparison function. The representation of the simple RCNP model with $\rel{g_\text{\rm diff}}$ is context-preserving with respect to $\T$.
\end{proposition}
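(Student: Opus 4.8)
The plan is to exploit the fact that, for the difference comparison function, a \emph{single} target point already encodes the entire context set up to a translation, so that the sum-aggregation over the context is the only thing that needs to be inverted. Since $g_\text{diff}(\x_n, \x_n) = \mathbf{0}$, the diagonal encoding of Eq.~\ref{eq:diag_relencoding} collapses to
\begin{equation}
\vrho_m = \rho_\text{diag}(\xt_m, (\X, \Y)) = \sum_{n=1}^N f_r(\xt_m - \x_n, \y_n).
\end{equation}
Fix any one target index, say $m = 1$, and consider the multiset $S_1 = \{(\xt_1 - \x_n, \y_n)\}_{n=1}^N \subseteq \mathbb{R}^{d_x + d_y}$, so that $\vrho_1 = \sum_{\z \in S_1} f_r(\z)$ is exactly a sum-pooling of $S_1$ through $f_r$.

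The first step is to show that $f_r$ (i.e.\ the parameters $\vtheta$ and latent dimension $d_\text{rel}$) can be chosen so that this sum-pooling is injective on multisets of size at most $N$. This is precisely the Deep Sets sum-decomposition guarantee \cite{zaheer2017deep}: taking $f_r$ to be a power-sum / moment feature map of sufficiently high dimension, the sum $\sum_{\z \in S} f_r(\z)$ determines the multiset $S$ uniquely. Because the definition of a context-preserving family lets us pick $\vtheta$ and $d_\text{rel}$ freely, such an $f_r$ lies in the family, and injectivity yields a left inverse $\gamma_0$ with $\gamma_0(\vrho_1) = S_1$.

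The second step is the reconstruction itself. Applying the fixed coordinate map $(\v, \y) \mapsto (-\v, \y)$ elementwise to $S_1$ turns it into $\{(\x_n - \xt_1, \y_n)\}_{n=1}^N$, which is exactly $(\tau_1 \X, \Y)$ for the translation $\tau_1(\cdot) = \cdot - \xt_1 \in \T$. Composing the projection $\r \mapsto \vrho_1$, the inverse $\gamma_0$, and this negation produces a reconstruction function $\gamma$ with $\gamma(\r) = (\tau_1 \X, \Y)$, which establishes context-preservation with respect to $\T$.

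I expect the only delicate point to be the injectivity claim of the first step: one must invoke the Deep Sets representation theorem and be careful that it applies to continuous inputs in $\mathbb{R}^{d_x + d_y}$ with bounded multiset size $N$, which is what fixes the required latent dimension $d_\text{rel}$. Everything else---the collapse of the diagonal encoding, the single-target argument, and the negation---is immediate. It is worth remarking that this shortcut works for $g_\text{diff}$ precisely because fixing one target reveals \emph{all} relative positions $\{\x_n - \xt_1\}$; the analogous one-target argument fails for $g_\text{dist}$, where a single target reveals only distances \emph{to that target} and one genuinely needs the full relational matrix, which is exactly why the simple model is tailored to translations.
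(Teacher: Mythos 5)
Your proposal is correct and shares the paper's two-step skeleton: first invert the sum-aggregation in the diagonal encoding to recover, from a single target's representation $\vrho_m$, the multiset of summands $\{(\xt_m - \x_n, \y_n)\}_{n=1}^N$; then apply the fixed map $(\mathbf{v}, \y) \mapsto (-\mathbf{v}, \y)$ to obtain $(\tau_m \X, \Y)$ with $\tau_m(\cdot) = \cdot - \xt_m$. The second step is exactly the paper's observation (Section \ref{sec:proofcontext}) that $g_\text{diff}$ is a context-preserving comparison function, and your closing remark on why the one-target shortcut fails for $g_\text{dist}$ matches the paper's reason for requiring the FullRCNP for isotropy. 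Where you genuinely differ is the key lemma behind the first step. The paper (Lemma \ref{thm:relational_reconstruct}, reused in its proof of Proposition \ref{thm:simplepreserving}) assumes finite numerical precision, so the domain of $f_r$ becomes discrete; $f_r$ is then chosen to approximate a one-hot encoding, the aggregate is a vector of integer counts, and each summand can be read off exactly (after jittering so that the $\y_n$ are distinct). You instead invoke injectivity of sum-pooling with power-sum/moment features on the continuous domain \cite{zaheer2017deep}. Your route buys three things: no finite-precision assumption, no distinctness/jitter assumption on the $\y_n$ (you recover the multiset directly, which is all that is needed), and a latent dimension $d_\text{rel}$ polynomial in $N$ rather than the paper's astronomically large one-hot dimension.

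Two caveats, one of which is a genuine though patchable gap. First, the power-sum construction you cite is stated for scalar elements and fixed set size; for elements of $\mathbb{R}^{d_x + d_y}$ and multisets of size at most $N$ you need the multivariate extension -- moments up to degree $N$ plus the zeroth moment to recover the size, with injectivity following from a generic-projection (Cram\'er--Wold type) argument -- and this is known to be delicate in the continuous setting \cite{wagstaff2019limitations}; you flag it, but it is the part that would actually have to be proved. Second, in an RCNP the relational encoder $f_r$ is by definition a neural network, and an exact polynomial moment map is not one. The paper's discretization renders its approximation step harmless: a one-hot encoding approximated to sufficient accuracy still yields exact integer counts after rounding. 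On a continuous domain, an approximate moment map yields only an approximate reconstruction, so the exact equality $\gamma(\r) = (\tau \X, \Y)$ demanded by the definition of context preservation does not immediately follow; you would need either to allow $f_r$ outside the network family (weakening the statement) or to adopt the paper's finite-precision convention, under which the positive minimum gap between moment vectors of distinct representable multisets lets approximation errors be rounded away. Since the paper itself labels these results impractical existence proofs, this is a repairable weakness rather than a fatal flaw, but it is the one point where your argument as written falls short of the paper's.
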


Given the convenience of the RCNP compared to FullRCNPs, Proposition \ref{thm:simplepreserving} shows that we can use simple RCNPs to incorporate translation-equivariance with no loss of information. However, the simple RCNP model is \emph{not} context-preserving for other equivariances, for which we ought to use the FullRCNP. Our theoretical results are confirmed by the empirical validation in the next section.

\section{Experiments} \label{sec:experiments}

In this section, we evaluate the proposed relational models on several tasks and compare their performance with other conditional neural process models. 
For this, we used publicly available reference implementations of the \texttt{neuralprocesses} software package \cite{npGithub23, bruinsma2023autoregressive}. 
%
We detail our experimental approach in \refappendix{app:methods}, and we empirically analyze computational costs in \refappendix{app:cost}.\looseness-1

\subsection{Synthetic Gaussian and non-Gaussian functions}
\label{sec:experiment_synthetic}

We first provide a thorough comparison of our methods with other CNP models using a diverse array of Gaussian and non-Gaussian synthetic regression tasks. We consider tasks characterized by functions derived from (i) a range of GPs, where each GP is sampled using one of three different kernels (Exponentiated Quadratic (EQ), Mat\'ern-$\frac{5}{2}$, and Weakly-Periodic); (ii) a non-Gaussian sawtooth process; (iii) a non-Gaussian mixture task. In the mixture task, the function is randomly selected from either one of the three aforementioned distinct GPs or the sawtooth process, each chosen with probability $\frac{1}{4}$. Apart from evaluating simple cases with $d_x = \{1, 2\}$, we also expand our experiments to higher dimensions, $d_x = \{3, 5, 10 \}$. In these higher-dimensional scenarios, applying ConvCNP and ConvGNP models is not considered feasible.
We assess the performance of the models in two distinct ways. The first one, \emph{interpolation} (INT), uses the data generated from a range identical to that employed during the training phase. The second one, \emph{out-of-input-distribution} (OOID), uses data generated from a range that extends beyond the scope of the training data.

\paragraph{Results.}
We first compare our translation-equivariant (`stationary') versions of RCNP and RGNP with other baseline models from the CNP family (Table~\ref{table:synth_sta}). Comprehensive results, including all five regression problems and five dimensions, are available in \refappendix{app:synth}. 
Firstly, relational encoding of the translational equivariance intrinsic to the task improves performance, as both RCNP and RGNP models surpass their CNP and GNP counterparts in terms of {INT} results. Furthermore, the {OOID} results demonstrate significant improvement of our models, as they can leverage translational-equivariance to generalize outside the training range. RCNPs and RGNPs are competitive with convolutional models (ConvCNP, ConvGNP) when applied to 1D data and continue performing well in higher dimension, whereas models in the ConvCNP family are inapplicable for $d_x > 2$.

\begin{table}[h]
\caption{Comparison of \emph{interpolation} (INT) and \emph{out-of-input-distribution} (OOID) performance of our RCNP models and CNP baselines on synthetic regression tasks with varying input dimensions. We show mean and \textcolor{gray}{(standard deviation)} across 10 runs with different seeds. "F" denotes failed attempts that yielded very bad results.  Missing entries could not be run. Statistically significantly best results are \textbf{bolded}. Our methods (RCNP and RGNP) perform better than their CNP and GNP counterparts in terms of both INT and OOID, and scale to higher dimension compared to ConvCNP and ConvGNP.}
\small
\centering
\scalebox{0.71}{
\begin{tabular}{ccccccccccc}
\toprule
& & \multicolumn{3}{c}{Weakly-periodic} & \multicolumn{3}{c}{Sawtooth} & \multicolumn{3}{c}{Mixture} \\
& & \multicolumn{3}{c}{\small{KL divergence($\downarrow$)}} & \multicolumn{3}{c}{\small{log-likelihood($\uparrow$)}} & \multicolumn{3}{c}{\small{log-likelihood($\uparrow$)}} \\
\cmidrule(lr){3-5} \cmidrule(lr){6-8} \cmidrule(lr){9-11}
& & $d_x=1$ & $d_x=3$ & $d_x=5$ & $d_x=1$ & $d_x=3$ & $d_x=5$ & $d_x=1$ & $d_x=3$ & $d_x=5$  \\
\cmidrule(lr){3-5} \cmidrule(lr){6-8} \cmidrule(lr){9-11}
\multirow{6}{*}{\rotatebox[origin=c]{90}{INT}} 
& RCNP (sta) & 0.24 \textcolor{gray}{(0.00)} & 0.28 \textcolor{gray}{(0.00)} & 0.31 \textcolor{gray}{(0.00)} & 3.03 \textcolor{gray}{(0.06)} & 0.85 \textcolor{gray}{(0.01)} & 0.44 \textcolor{gray}{(0.00)} & 0.20 \textcolor{gray}{(0.01)} & -0.10 \textcolor{gray}{(0.00)}& -0.31 \textcolor{gray}{(0.03)} \\

& RGNP (sta) & 0.03 \textcolor{gray}{(0.00)} &  \textbf{0.05} \textcolor{gray}{(0.00)} &  \textbf{0.08} \textcolor{gray}{(0.00)} & 3.90 \textcolor{gray}{(0.09)} & \textbf{1.09} \textcolor{gray}{(0.01)} & \textbf{1.13} \textcolor{gray}{(0.05)} & 0.34 \textcolor{gray}{(0.03)} & \textbf{0.37} \textcolor{gray}{(0.01)}& \textbf{0.04} \textcolor{gray}{(0.02)} \\ 

& ConvCNP & 0.21 \textcolor{gray}{(0.00)} & - & - & 3.64 \textcolor{gray}{(0.04)} & - & - & 0.38 \textcolor{gray}{(0.02)}& - & -  \\

& ConvGNP & \textbf{0.01} \textcolor{gray}{(0.00)} & - & - & \textbf{3.94} \textcolor{gray}{(0.11)} & - & - & \textbf{0.49} \textcolor{gray}{(0.15)}& - & -  \\

& CNP & 0.31 \textcolor{gray}{(0.00)} & 0.39 \textcolor{gray}{(0.00)} & 0.42 \textcolor{gray}{(0.00)} & 2.25 \textcolor{gray}{(0.02)} & 0.36 \textcolor{gray}{(0.28)} & -0.03 \textcolor{gray}{(0.10)} & 0.01 \textcolor{gray}{(0.01)} & -0.57 \textcolor{gray}{(0.11)} & -0.72 \textcolor{gray}{(0.08)} \\

& GNP & 0.06 \textcolor{gray}{(0.00)} & 0.08 \textcolor{gray}{(0.01)} & 0.11 \textcolor{gray}{(0.01)} & 0.83 \textcolor{gray}{(0.04)} & 0.23 \textcolor{gray}{(0.13)} & 0.02 \textcolor{gray}{(0.05)} & 0.17 \textcolor{gray}{(0.01)} & -0.17 \textcolor{gray}{(0.00)}& -0.32 \textcolor{gray}{(0.00)} \\ 
\midrule \midrule

\multirow{6}{*}{\rotatebox[origin=c]{90}{OOID}} 
& RCNP (sta) & 0.24 \textcolor{gray}{(0.00)} & 0.28 \textcolor{gray}{(0.01)} & 0.31 \textcolor{gray}{(0.00)} & 3.04 \textcolor{gray}{(0.06)} & 0.85 \textcolor{gray}{(0.01)} & 0.44 \textcolor{gray}{(0.00)} & 0.20 \textcolor{gray}{(0.01)} & -0.10 \textcolor{gray}{(0.00)}& -0.31 \textcolor{gray}{(0.03)} \\

& RGNP (sta) & 0.03 \textcolor{gray}{(0.00)} &  \textbf{0.05} \textcolor{gray}{(0.01)} &  \textbf{0.08} \textcolor{gray}{(0.00)} & 3.90 \textcolor{gray}{(0.10)} & \textbf{1.09} \textcolor{gray}{(0.01)} & \textbf{1.13} \textcolor{gray}{(0.05)} & 0.34 \textcolor{gray}{(0.03)} & \textbf{0.37} \textcolor{gray}{(0.01)}& \textbf{0.04} \textcolor{gray}{(0.02)}  \\

& ConvCNP & 0.21 \textcolor{gray}{(0.00)} & - & - & 3.64 \textcolor{gray}{(0.04)} & - & - & 0.38 \textcolor{gray}{(0.02)}& - & -  \\

& ConvGNP & \textbf{0.01} \textcolor{gray}{(0.00)} & - & - & \textbf{3.97} \textcolor{gray}{(0.08)} & - & - & \textbf{0.49} \textcolor{gray}{(0.15)}& - & -  \\

& CNP & 2.88 \textcolor{gray}{(0.91)} & 1.58 \textcolor{gray}{(0.50)} & 2.20 \textcolor{gray}{(0.81)} & F & -0.37 \textcolor{gray}{(0.12)} & -0.22 \textcolor{gray}{(0.03)} & F & -2.55 \textcolor{gray}{(1.15)} & -1.71 \textcolor{gray}{(0.55)} \\

& GNP & F & 1.47 \textcolor{gray}{(0.27)} & 0.62 \textcolor{gray}{(0.04)} & F & F & F & F & -0.67 \textcolor{gray}{(0.05)}& -0.72 \textcolor{gray}{(0.03)} \\
\bottomrule
\end{tabular}
}
\label{table:synth_sta}
\end{table}

We further consider two GP tasks with isotropic EQ and Mat\'ern-$\frac{5}{2}$ kernels (invariant to rigid transformations). Within this set of experiments, we include the FullRCNP and FullRGNP models, each equipped with the `isotropic' distance comparison function. The results (Table~\ref{table:synth_iso}) indicate that RCNPs and FullRCNPs consistently outperform CNPs across both tasks. Additionally, we notice that FullRCNPs exhibit better performance compared to RCNPs as the dimension increases. When $d_x=2$, the performance of our RGNPs is on par with that of ConvGNPs, and achieves the best results in terms of both INT and OOID when $d_x > 2$, which again highlights the effectiveness of our models in handling high-dimensional tasks by leveraging existing equivariances.

\begin{table}[]
\caption{Comparison of the \emph{interpolation} (INT) and \emph{out-of-input-distribution} (OOID) performance of our RCNP models with different CNP baselines on two GP synthetic regression tasks with isotropic kernels of varying input dimensions.}
\small
\centering
\scalebox{0.73}{
\begin{tabular}{cccccccc}
\toprule
& & \multicolumn{3}{c}{EQ} & \multicolumn{3}{c}{Mat\'ern-$\frac{5}{2}$}  \\
& & \multicolumn{3}{c}{\small{KL divergence($\downarrow$)}} & \multicolumn{3}{c}{\small{KL divergence($\downarrow$)}}  \\
\cmidrule(lr){3-5} \cmidrule(lr){6-8}
& & $d_x=2$ & $d_x=3$ & $d_x=5$ & $d_x=2$ & $d_x=3$ & $d_x=5$ \\
\cmidrule(lr){3-5} \cmidrule(lr){6-8}
\multirow{8}{*}{\rotatebox[origin=c]{90}{INT}}

& RCNP (sta) & 0.26 \textcolor{gray}{(0.00)} & 0.40 \textcolor{gray}{(0.01)} & 0.45 \textcolor{gray}{(0.00)} & 0.30 \textcolor{gray}{(0.00)} & 0.39 \textcolor{gray}{(0.00)} & 0.35 \textcolor{gray}{(0.00)} \\

& RGNP (sta) & 0.03 \textcolor{gray}{(0.00)} &  \textbf{0.05} \textcolor{gray}{(0.00)} &  \textbf{0.11} \textcolor{gray}{(0.00)} & 0.03 \textcolor{gray}{(0.00)} & \textbf{0.05} \textcolor{gray}{(0.00)} & \textbf{0.11} \textcolor{gray}{(0.00)}\\

& FullRCNP (iso) & 0.26 \textcolor{gray}{(0.00)} & 0.31 \textcolor{gray}{(0.00)} & 0.35 \textcolor{gray}{(0.00)} & 0.30 \textcolor{gray}{(0.00)} & 0.32 \textcolor{gray}{(0.00)} & 0.29 \textcolor{gray}{(0.00)}\\

& FullRGNP (iso) & 0.08 \textcolor{gray}{(0.00)} & 0.14 \textcolor{gray}{(0.00)} & 0.25 \textcolor{gray}{(0.00)} & 0.09 \textcolor{gray}{(0.00)} & 0.16 \textcolor{gray}{(0.00)} & 0.21 \textcolor{gray}{(0.00)}\\

& ConvCNP & 0.22 \textcolor{gray}{(0.00)} & - & - & 0.26 \textcolor{gray}{(0.00)} & - & - \\

& ConvGNP &  \textbf{0.01} \textcolor{gray}{(0.00)} & - & - & \textbf{0.01} \textcolor{gray}{(0.00)} & - & -\\

& CNP & 0.33 \textcolor{gray}{(0.00)} & 0.44 \textcolor{gray}{(0.00)} & 0.57 \textcolor{gray}{(0.00)} & 0.39 \textcolor{gray}{(0.00)} & 0.46 \textcolor{gray}{(0.00)} & 0.47 \textcolor{gray}{(0.00)} \\

& GNP & 0.05 \textcolor{gray}{(0.00)} & 0.09 \textcolor{gray}{(0.01)} & 0.19 \textcolor{gray}{(0.00)} & 0.07 \textcolor{gray}{(0.00)} & 0.11 \textcolor{gray}{(0.00)} & 0.19 \textcolor{gray}{(0.00)}\\

\midrule \midrule

\multirow{8}{*}{\rotatebox[origin=c]{90}{OOID}}

& RCNP (sta) & 0.26 \textcolor{gray}{(0.00)} & 0.40 \textcolor{gray}{(0.01)} & 0.45 \textcolor{gray}{(0.00)} & 0.30 \textcolor{gray}{(0.00)} & 0.39 \textcolor{gray}{(0.00)} & 0.35 \textcolor{gray}{(0.00)} \\

& RGNP (sta) & 0.03 \textcolor{gray}{(0.00)} &  \textbf{0.05} \textcolor{gray}{(0.00)} &  \textbf{0.11} \textcolor{gray}{(0.00)} & 0.03 \textcolor{gray}{(0.00)} & \textbf{0.05} \textcolor{gray}{(0.00)} & \textbf{0.11} \textcolor{gray}{(0.00)} \\

& FullRCNP (iso) & 0.26 \textcolor{gray}{(0.00)} & 0.31 \textcolor{gray}{(0.00)} & 0.35 \textcolor{gray}{(0.00)} & 0.30 \textcolor{gray}{(0.00)} & 0.32 \textcolor{gray}{(0.00)} & 0.29 \textcolor{gray}{(0.00)}\\

& FullRGNP (iso) & 0.08 \textcolor{gray}{(0.00)} & 0.14 \textcolor{gray}{(0.00)} & 0.25 \textcolor{gray}{(0.00)} & 0.09 \textcolor{gray}{(0.00)} & 0.16 \textcolor{gray}{(0.00)} & 0.21 \textcolor{gray}{(0.00)}\\

& ConvCNP & 0.22 \textcolor{gray}{(0.00)} & - & - & 0.26 \textcolor{gray}{(0.00)} & - & - \\

& ConvGNP &  \textbf{0.01} \textcolor{gray}{(0.00)} & - & - & \textbf{0.01} \textcolor{gray}{(0.00)} & - & -\\

& CNP & 4.54 \textcolor{gray}{(1.76)} & 3.30 \textcolor{gray}{(1.55)} & 1.22 \textcolor{gray}{(0.09)} & 6.75 \textcolor{gray}{(2.72)} & 1.75 \textcolor{gray}{(0.42)} & 0.93 \textcolor{gray}{(0.02)} \\

& GNP & 2.25 \textcolor{gray}{(0.61)} & 2.54 \textcolor{gray}{(1.44)} & 0.74 \textcolor{gray}{(0.02)} & 1.86 \textcolor{gray}{(0.26)} & 1.23 \textcolor{gray}{(0.17)} & 0.62 \textcolor{gray}{(0.02)}\\

\bottomrule
\end{tabular}
}
\label{table:synth_iso}
\end{table}

\subsection{Bayesian optimization}\label{sec:bo}

We explore the extent our proposed models can be used for a higher-dimensional meta-learning task, using Bayesian optimization (BayesOpt) as our application \cite{garnett2023bayesian}. The neural processes, ours as well as the baselines, serve as surrogates to find the global minimum $f_\text{min} = f(\x_\text{min})$ of a black-box function. 
For this task, we train the models by generating random functions from a GP kernel sampled from a set of base kernels---EQ, Mat\'ern-\{$\frac{1}{2},\frac{3}{2},\frac{5}{2}$\} as well as their sums and products---with randomly sampled hyperparameters. By training on a large distribution over kernels, we aim to exploit the metalearning capabilities of neural processes.
The trained CNP models are then used as surrogates to minimize the Hartmann function \cite[p.185]{torn1989global} 
in three and six dimensions, a common BayesOpt test function. We use the \emph{expected improvement} acquisition function, which we can evaluate analytically.
Specifics on the experimental setup and further evaluations can be found in \refappendix{appsec:BO}. 

\paragraph{Results.}
As shown in Figure~\ref{fig:bo}, RCNPs and RGNPs are able to learn from the random functions and come close to the performance of a Gaussian Process (GP), the most common surrogate model for BayesOpt. Note that the GP is refit after every new observation, while the CNP models can condition on the new observation added to the context set, without any retraining. CNPs and GNPs struggle with the diversity provided by the random kernel function samples and fail at the task. In order to remain competitive, they need to be extended with an attention mechanism ({ACNP, AGNP}).

\begin{figure}
    \centering
    \includegraphics[width=0.48\textwidth]{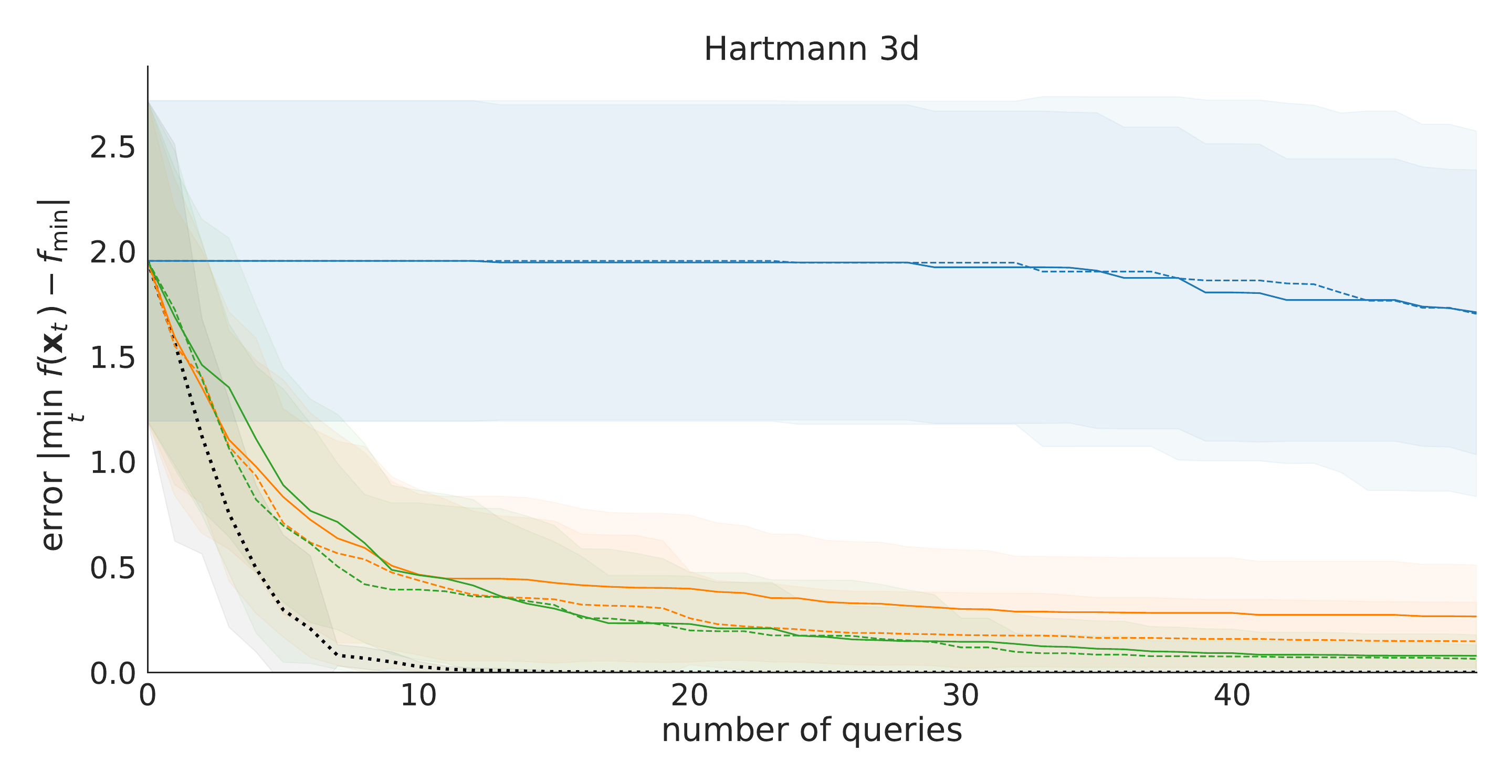}
    \includegraphics[width=0.48\textwidth]{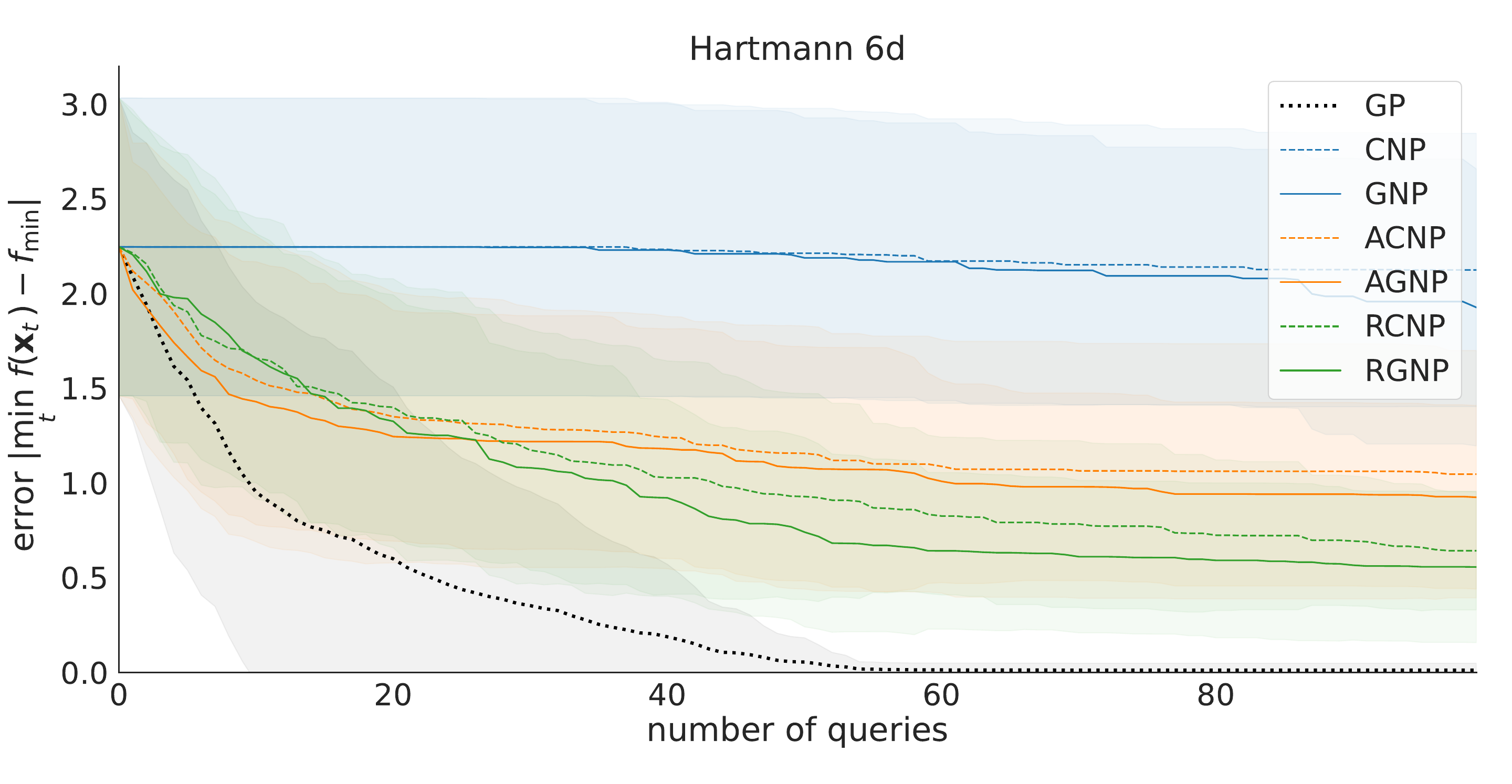}
    \caption{\textbf{Bayesian Optimization.} Error during optimization of a 3D/6D Hartmann function (lower is better). RCNP/RGNP improve upon the baselines, approaching the GP performance.
    }
    \label{fig:bo}
\end{figure}

\subsection{Lotka--Volterra model}
\label{sec:experiment_lv}




Neural process models excel in the so-called sim-to-real task where a model is trained using simulated data and then applied to real-world contexts. Previous studies have demonstrated this capability by training neural process models with simulated data generated from the stochastic Lotka--Volterra predator--prey equations and evaluating them with the famous hare--lynx dataset \cite{odum2005fundamentals}. We run this benchmark evaluation using the simulator and experimental setup proposed by \cite{bruinsma2023autoregressive}. Here the CNP models are trained with simulated data and evaluated with both simulated and real data; the learning tasks represented in the training and evaluation data include \emph{interpolation}, \emph{forecasting}, and \emph{reconstruction}. The evaluation results presented in Table~\ref{tab:lotka_volterra} indicate that the best model depends on the task type, but overall our proposed relational CNP models with translational equivariance perform comparably to their convolutional and attentive counterparts, showing that our simpler approach does not hamper performance on real data. 
Full results with baseline CNPs are provided in \refappendix{app:lv}.

\begin{table}[]
    \centering
    \caption{Normalized log-likelihood scores in the Lotka--Volterra experiments (higher is better). The mean and \textcolor{gray}{(standard deviation)} reported for each model are calculated based on 10 training outcomes evaluated with the same simulated (S) and real (R) learning tasks. The tasks include \emph{interpolation} (INT), \emph{forecasting} (FOR), and \emph{reconstruction} (REC). Statistically significantly (see  \refappendix{app:training_eval_statsig}) best results are \textbf{bolded}. RCNP and RGNP models perform on par with convolutional and attentive baselines.
    }
    \label{tab:lotka_volterra}
    \footnotesize
    \renewcommand{\arraystretch}{1.3}
    \begin{tabular}{lcccccc}
    \toprule
    & INT (S) & FOR (S) & REC (S) & INT (R) & FOR (R) & REC (R) \\ \toprule
    RCNP & -3.57 \textcolor{gray}{(0.02)} & -4.85 \textcolor{gray}{(0.00)} & -4.20 \textcolor{gray}{(0.01)} & -4.24 \textcolor{gray}{(0.02)} & -4.83 \textcolor{gray}{(0.03)} & -4.55 \textcolor{gray}{(0.05)} \\
    RGNP & -3.51 \textcolor{gray}{(0.01)} & \textbf{-4.27} \textcolor{gray}{(0.00)} & -3.76 \textcolor{gray}{(0.00)} & -4.31 \textcolor{gray}{(0.06)} & \textbf{-4.47} \textcolor{gray}{(0.03)} & -4.39 \textcolor{gray}{(0.11)} \\
    \midrule
    ConvCNP  & -3.47 \textcolor{gray}{(0.01)} & -4.85 \textcolor{gray}{(0.00)} & -4.06 \textcolor{gray}{(0.00)} & \textbf{-4.21} \textcolor{gray}{(0.04)} & -5.01 \textcolor{gray}{(0.02)} & -4.75 \textcolor{gray}{(0.05)} \\
    ConvGNP & \textbf{-3.46} \textcolor{gray}{(0.00)} & -4.30 \textcolor{gray}{(0.00)} & \textbf{-3.67} \textcolor{gray}{(0.01)} & \textbf{-4.19} \textcolor{gray}{(0.02)} & -4.61 \textcolor{gray}{(0.03)} & -4.62 \textcolor{gray}{(0.11)} \\
    \midrule
    ACNP & -4.04 \textcolor{gray}{(0.06)} & -4.87 \textcolor{gray}{(0.01)} & -4.36 \textcolor{gray}{(0.03)} & \textbf{-4.18} \textcolor{gray}{(0.05)} & -4.79 \textcolor{gray}{(0.03)} & -4.48 \textcolor{gray}{(0.02)} \\
    AGNP & -4.12 \textcolor{gray}{(0.17)} & -4.35 \textcolor{gray}{(0.09)} & -4.05 \textcolor{gray}{(0.19)} & -4.33 \textcolor{gray}{(0.15)} & \textbf{-4.48} \textcolor{gray}{(0.06)} & \textbf{-4.29} \textcolor{gray}{(0.10)} \\

    \bottomrule
    \end{tabular}
\end{table}

\subsection{Reaction-Diffusion model}
\label{sec:experiment_rd}

The Reaction-Diffusion (RD) model is a large class of state-space models originating from chemistry \citep{Wakamiya2011Self} with several applications in medicine \citep{Gatenby1996Reaction} and biology \citep{Konodo2010Reaction}.
We consider here a reduced model representing the evolution of cancerous cells \citep{Gatenby1996Reaction}, which interact with healthy cells through the production of acid. These three quantities (healthy cells, cancerous cells, acid concentration) are defined on 
a discretized space-time grid (2+1 dimensions, $d_x = 3$). We assume we only observe the difference in number between healthy and cancerous cells, which makes this model a hidden Markov model. Using realistic parameters inspired by \cite{Gatenby1996Reaction}, we simulated $4\cdot 10^3$ full trajectories, from which we subsample observations to generate training data for the models; another set of $10^3$ trajectories is used for testing. More details can be found in \refappendix{appsec:RD}.

We propose two tasks: first, a \emph{completion} task, where target points at time $t$ are inferred through the context at different spatial locations at time $t-1$, $t$ and $t+1$; secondly, a \emph{forecasting} task, where the target at time $t$ is inferred from the context at $t-1$ and $t-2$. These tasks, along with the form of the equation describing the model, induce translation invariance in both space and time, which requires the models to incorporate translational equivariance for all three dimensions.

\paragraph{Results.} We compare our translational-equivariant RCNP models to ACNP, CNP, and their GNP variants in Table \ref{tab:resRD}. Comparison with ConvCNP is not feasible, as this problem is three-dimensional. Our methods outperform the others on both the \emph{completion} and \emph{forecasting} tasks, showing the advantage of leveraging translational equivariance in this complex spatio-temporal modeling problem.

\begin{table}[ht]    
\small

    \centering
    \caption{Normalized log-likelihood scores in the Reaction-Diffusion problem for both tasks (higher is better). Mean and \textcolor{gray}{(standard deviation)} from 10 training runs evaluated on a separate test dataset. 
    }
    \adjustbox{max width=0.99\textwidth}{
    \footnotesize
    \begin{tabular}{l c c c c c c}
    \toprule
         & RCNP & RGNP & ACNP & AGNP & CNP & GNP  \\
         \midrule
        Completion & 0.22 \textcolor{gray}{(0.33)} & \textbf{1.38} \textcolor{gray}{(0.62 )} & 0.17  \textcolor{gray}{(0.03)} & 0.20 \textcolor{gray}{(0.03)} & 0.10  \textcolor{gray}{(0.01)} & 0.13  \textcolor{gray}{(0.03)} \\
        \midrule
        Forecasting & \textbf{0.07}  \textcolor{gray}{(0.18)} & \textbf{-0.18} \textcolor{gray}{(0.58)} &  -0.65 \textcolor{gray}{(0.31)} &  -1.58 \textcolor{gray}{(1.05)} &-0.51 \textcolor{gray}{(0.20)} & -0.50  \textcolor{gray}{(0.30)} \\
        \bottomrule
    \end{tabular}}
    \label{tab:resRD}
\end{table}

\subsection{Additional experiments}

As further empirical tests of our method, we study our technique in the context of autoregressive CNPs in \refappendix{sec:app-expautoreg}, present a proof-of-concept of incorporating rotational symmetry in \refappendix{sec:app-exprot}, and examine the performance of RCNPs on image regression in 
\refappendix{sec:app-expimage}.

\section{Related work}
\label{sec:relatedwork}

\vspace{-0.25em}
This work builds upon the foundation laid by CNPs \cite{garnelo2018conditional} and other members of the CNP family, covered at length in Section \ref{sec:background}.
A significant body of work has focused on incorporating equivariances into neural network models \cite{olah2020naturally, chidester2019rotation, kondor2018generalization, vignac2020building}. The concept of equivariance has been explored in Convolutional Neural Networks (CNNs; \cite{lecun1998gradient}) with translational equivariance, and more generally in Group Equivariant CNNs \cite{cohen2016group}, where rotations and reflections are also considered. Work on DeepSets laid out the conditions for permutation invariance and equivariance \cite{zaheer2017deep}. Set Transformers~\citep{Lee2019} extend this approach with an attention mechanism to learn higher-order interaction terms among instances of a set.
Our work focuses on incorporating equivariances into prediction maps, and specifically CNPs.

Prior work on incorporating equivariances into CNPs requires a regular discrete lattice of the input space for their convolutional operations \citep{gordon2020convolutional,markou2022practical}. EquivCNPs~\cite{kawano2021group} build on work by \cite{pmlr-v119-finzi20a} which operates on irregular point clouds, but they still require a constructed lattice over the input space. SteerCNPs~\citep{holderrieth2021equivariant} generalize ConvCNPs to other equivariances, but still suffer from the same scaling issues. These methods are therefore in practice limited to low-dimensional (one to two equivariant dimensions), whereas our proposal does not suffer from this constraint.

Our approach is also related to metric-based meta-learning, such as Prototypical Networks \cite{snell2017prototypical} and Relation Networks \cite{sung2018learning}. These methods learn an embedding space where classification can be performed by computing distances to prototype representations. While effective for few-shot classification tasks, they may not be suitable for more complex tasks or those requiring uncertainty quantification. GSSM~\cite{pmlr-v162-wang22z} aims to \emph{learn} relational biases via a graph structure on the context set, while we directly build \emph{exact} equivariances into the CNP architecture.

Kernel methods and Gaussian processes (GPs) have long addressed issues of equivariance by customizing kernel designs to encode specific equivariances \cite{genton2001classes}. For instance, stationary kernels are used to capture globally consistent patterns \cite{rasmussen2006gaussian}, with applications in many areas, notably Bayesian Optimization \cite{garnett2023bayesian}.
However, despite recent computational advances \cite{gardner2018gpytorch},  kernel methods and GPs still struggle with high-dimensional, complex data, with open challenges in deep kernel learning \cite{wilson2016deep} and amortized kernel learning (or metalearning) \cite{liu2020task, simpson2021kernel}, motivating our proposal of RCNPs. 

\section{Limitations} 
\label{sec:limitations}

RCNPs crucially rely on a comparison function $\rel{g}(\x,\x')$ to encode equivariances. The comparison functions we described (e.g., for isotropy and translational equivariance) already represent a large class of useful equivariances. Notably, key contributions to the neural process literature 
focus only on translation equivariance (e.g., ConvCNP~\citep{gordon2020convolutional}, ConvGNP~\citep{markou2022practical}, FullConvGNP~\citep{bruinsma2021gaussian}). Extending our method to other equivariances will require the construction of new comparison functions.

The main limitation of the RCNP class is its increased computational complexity in terms of context and target set sizes (respectively, $N$ and $M$). The FullRCNP model can be cumbersome, with $O(N^2 M)$ cost for training and deployment. However, we showed that the simple or `diagonal' RCNP variant can fully implement translational invariance with a $O(NM)$ cost. Still, this cost is larger than $O(N + M)$ of basic CNPs. Given the typical metalearning setting of small-data regime (small context sets), the increased complexity is often acceptable, outweighed by the large performance improvement obtained by leveraging available equivariances. This is shown in our empirical validation, in which RCNPs almost always outperformed their CNP counterparts.


\section{Conclusion}
\label{sec:discussion}

\vspace{-0.25em}
In this paper, we introduced Relational Conditional Neural Processes (RCNPs), a new member of the neural process family which incorporates equivariances through relational encoding of the context and target sets. Our method applies to equivariances that can be induced via an appropriate comparison function; here we focused on translational equivariances (induced by the difference comparison) and equivariances to rigid transformations (induced by the distance comparison). How to express other equivariances via our relational approach is an interesting direction for future work.

We demonstrated with both theoretical results and extensive empirical validation that our method successfully introduces equivariances in the CNP model class, performing comparably to the translational-equivariant ConvCNP models in low dimension, but with a simpler construction that allows RCNPs to scale to larger equivariant input dimensions ($d_x > 2$) and outperform other CNP models.

In summary, we showed that the RCNP model class provides a simple and effective way to implement translational and other equivariances into the CNP model family. Exploiting equivariances intrinsic to a problem can significantly improve performance. Open problems remain in extending the current approach to other equivariances which are not expressible via a comparison function, and making the existing relational approach more efficient and scalable to larger context datasets.

\begin{ack}

This work was supported by the Academy of Finland Flagship programme: Finnish Center for Artificial Intelligence FCAI. Samuel Kaski was supported by the UKRI Turing AI World-Leading Researcher Fellowship, [EP/W002973/1].
The authors wish to thank the Finnish Computing Competence Infrastructure (FCCI), Aalto Science-IT project, and CSC–IT Center for Science, Finland, for the computational and data storage resources provided.
\end{ack}

\bibliography{rcnp}
\bibliographystyle{plainnat}

\setcounter{footnote}{0}
\setcounter{figure}{0}
\setcounter{table}{0}
\setcounter{equation}{0}
\renewcommand{\theequation}{S\arabic{equation}}
\renewcommand{\thetable}{S\arabic{table}}
\renewcommand{\thefigure}{S\arabic{figure}}

\maketitle


\appendix

{
\begin{center}
\Large
    \textbf{Appendix}
\end{center}
}

In this Appendix, we include our methodology, mathematical proofs, implementation details for reproducibility, additional results and extended explanations omitted from the main text.

\DoToC

\setcounter{tocdepth}{2}


\section{Theoretical proofs}
\label{app:proof}

In this section, we provide extended proofs for statements and theorems in the main text. First, we show that our definition of equivariance for the prediction map is equivalent to the common definition of equivalent maps (Section~\ref{app:equivariant_proof}). Then, we provide full proofs that RCNPs are equivariant (Section~\ref{app:rcnp_equivariant_proof}) and context-preserving (Section~\ref{app:rcnp_context_proof}) prediction maps, as presented in the theorems in Section~\ref{sec:proof} of the main text.

\subsection{Definition of equivariance for prediction maps}
\label{app:equivariant_proof}

We show that the definition of equivariance of a prediction map, which is introduced in the main text based on the invariance of the representation function (Eq.~\ref{eq:piequivariant} in the main text), is equivalent to the common definition of equivariance for a generic mapping.

First, we need to extend the notion of group action of a group $\mathcal{T}$ into $\mathbb{R}^d$, for $d \in \mathbb{N}$. We define two additional actions, \emph{(i)} for $\tau \in \mathcal{T}$ and $f$ a function from $\mathbb{R}^d$ into $\mathbb{R}^p$, and \emph{(ii)} for a couple $(\x,\y) \in \mathbb{R}^d\times\mathbb{R}^p$. Following \cite{gordon2020convolutional}:
\begin{equation*}
    \tau(\x,\y) = (\tau \x, \y), \qquad
    \tau f(\x) = f(\tau^{-1}\x).
\end{equation*}
These two definitions can be extended for $n$-uplets $\X$ and $\Y$:
\begin{equation*}
    \tau(\X,\Y) = (\tau \X, \Y), \qquad
    \tau f(\X) = f(\tau^{-1}\X).
\end{equation*}
In the literature, equivariance is primarily defined for a map $F: \Z \in \mathcal{X}\times\mathbb{R}^d \mapsto F_\Z \in \mathcal{F}(\mathcal{X},\mathbb{R}^d)$ by:
\begin{equation} \label{eq:app_equivariant_map}
\tau F_\Z = F_{\tau \Z}.
\end{equation}

Conversely, in the main text (Section~\ref{sec:background}), we defined equivariance for a prediction map through the representation function: a \emph{prediction map} $\pi$ with representation function $r$ is $\mathcal{T}$-equivariant if and only if for all $\tau \in \mathcal{T}$,
\begin{equation} \label{eq:app_equivariant_r}
r((\X,\Y),\X^*) = r((\tau \X,\Y),\tau \X^*).
\end{equation}
Eqs.~\ref{eq:app_equivariant_map} and \ref{eq:app_equivariant_r} differ in that they deal with distinct objects. The former corresponds to the formal definition of an equivariant map, while the latter uses the representation function, which is simpler to work with for the scope of our paper.

To show the equivalence of the two definitions (Eqs.~\ref{eq:app_equivariant_map} and \ref{eq:app_equivariant_r}), we recall that a prediction map $\pi$ is defined through its representation $r$:
\begin{equation*}
P_\Z(\X^*) = \pi( \cdot \mid \Z,\X^*) = p(\cdot \mid  r(( \X,\Y),\X^*)),
\end{equation*}
where $\Z \equiv (\X, \Y).$
%
Then, starting from Eq.~\ref{eq:app_equivariant_map} applied to the prediction map,
\begin{align*}
P_{\tau(\X,\Y)} = \tau P_{(\X,\Y)} &\Longleftrightarrow  \forall \X^*, \ p\big(\cdot \mid r((\tau \X,\Y),\X^*)\big)= p\big(\cdot \mid r((\X,\Y),\tau^{-1}\X^*)\big) \\ 
&  \Longleftrightarrow \forall \X^*, \ r((\tau \X,\Y),\X^*) = r((\X,\Y),\tau^{-1}\X^*).
\end{align*}
Noting that $\tau^{-1} r((\X,\Y),\X^*) = r((\X,\Y),\tau \X^*)$,
by applying $\tau^{-1}$ to each side of the last line of the equation above, and swapping sides, we get:
\begin{equation*}
P_{\tau(\X,\Y)} = \tau P_{(\X,\Y)} \Longleftrightarrow  \forall \X^*, r((\X,\Y),\X^*) = r((\tau\X,\Y),\tau\X^*),
\end{equation*}
that is, Eqs.~\ref{eq:app_equivariant_map} and \ref{eq:app_equivariant_r} are equivalent.

\subsection{Proof that RCNPs are equivariant}
\label{app:rcnp_equivariant_proof}

We recall from Eq.~\ref{eq:piequivariant} in the main text that a prediction map $\pi$ with representation function $r$ is $\T$-\emph{equivariant} with respect to a group $\T$ of transformations, $\tau: \XX \rightarrow \XX$, if and only if for all $\tau \in \T$:
\begin{equation*}
r((\X, \Y), \Xt) = r((\tau \X, \Y), \tau \Xt),
\end{equation*}
where $\tau\x \equiv \tau(\x)$ and $\tau\X$ is the set obtained by applying $\tau$ to all elements of $\X$. 

Also recall that a prediction map $\pi$ and its representation function $r$ are \emph{relational} with respect to a comparison function $g: \XX \times \XX \rightarrow \mathbb{R}^{d_\text{comp}}$ if and only if  $r$ can be written solely through the set comparisons defined in Eq.~\ref{eq:compsets} of the main text:
\begin{equation*}
r((\X, \Y), \Xt) = r\big(g((\X, \Y), (\X, \Y)), g((\X, \Y), \Xt), g(\Xt, (\X, \Y)), g(\Xt, \Xt)\big).
\end{equation*}

We can now prove the following Lemma from the main text.

\begin{lemma*}[Lemma \ref{thm:equivariant}] 
Let $\pi$ be a prediction map, $\T$ a transformation group, and $g$ a comparison function. If $\pi$ is relational with respect to $g$ and $g$ is $\T$-invariant, then $\pi$ is $\T$-equivariant.
\end{lemma*}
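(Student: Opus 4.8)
The plan is to exploit the two hypotheses directly: relationality says $r$ factors through the four comparison sets of Eq.~\ref{eq:compsets}, and $\T$-invariance says each individual comparison $g(\x, \x^\prime)$ is unaffected by any $\tau \in \T$. Concretely, I would fix $\tau \in \T$, substitute the transformed problem $((\tau\X, \Y), \tau\Xt)$ into the relational form of $r$, and show that each of the four comparison sets entering that expression is literally the same set as the one built from the untransformed inputs. Equality of all four sets then forces $r((\tau\X, \Y), \tau\Xt) = r((\X, \Y), \Xt)$, which is exactly the definition of $\T$-equivariance.

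The key steps, carried out one comparison set at a time, are as follows. First I would expand $g((\tau\X, \Y), (\tau\X, \Y))$ element-wise: its typical element is $(g(\tau\x_n, \tau\x_{n^\prime}), \y_n, \y_{n^\prime})$, and $\T$-invariance of $g$ gives $g(\tau\x_n, \tau\x_{n^\prime}) = g(\x_n, \x_{n^\prime})$, so this element coincides with the corresponding element of $g((\X, \Y), (\X, \Y))$; since the $\y$-labels are untouched and the index range is unchanged, the two sets are identical. The same one-line argument applies verbatim to $g((\tau\X, \Y), \tau\Xt)$ (typical element $(g(\tau\x_n, \tau\xt_m), \y_n)$), to $g(\tau\Xt, (\tau\X, \Y))$, and to $g(\tau\Xt, \tau\Xt)$ (typical element $g(\tau\xt_m, \tau\xt_{m^\prime})$). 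Feeding these four equalities into the relational expression for $r$ yields the desired identity.

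There is no serious obstacle here: the argument is a direct verification once the definitions are unfolded. The only point deserving care is the set-theoretic bookkeeping—because the comparison sets may contain repeated entries, I want invariance to hold element-by-element rather than merely as an equality of sets up to relabeling. This is indeed the case, since $\tau$ is applied consistently across all indices and leaves the index set intact, so the induced map on elements is the identity. It is also worth remarking that the proof never uses invertibility or the composition structure of $\T$ beyond the pairwise invariance of $g$; the equivariance of $\pi$ is thus a pointwise consequence of the $\T$-invariance of the comparison function.
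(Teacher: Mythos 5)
Your proof is correct and follows essentially the same route as the paper's: establish, via $\T$-invariance of $g$, that each comparison set built from $((\tau\X,\Y),\tau\Xt)$ coincides with the one built from $((\X,\Y),\Xt)$, then substitute into the relational form of $r$ to obtain $r((\tau\X,\Y),\tau\Xt)=r((\X,\Y),\Xt)$. Your added remarks on element-wise (rather than merely set-level) equality and on not needing the group structure of $\T$ are sound refinements of the same argument.
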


\begin{proof}
From Eq.~\ref{eq:compsets} in the main text, if $g$ is $\T$-invariant, then for a target set $\Xt$ and $\tau \in \T$: \begin{equation*}
g((\X, \Y), \Xt) = \left\{(g(\x_n, \xt_m), \y_n)\right\} = \left\{(g(\tau\x_n, \tau\xt_m), \y_n)\right\} = g((\tau\X, \Y), \tau\Xt),
\end{equation*}
for all $1 \le n \le N$ and $1 \le m \le M$. A similar equality holds for all the comparison sets in the definition of a relational prediction map (see above). Since $\pi$ is relational, and using the equality above, we can write its representation function $r$ as:
\begin{equation*}
\begin{split}
r((\X, \Y), \Xt) = & \; r\big(g((\X, \Y), (\X, \Y)), g((\X, \Y), \Xt), g(\Xt, (\X, \Y)), g(\Xt, \Xt)\big) \\
 = & \; r\big(g((\tau\X, \Y), (\tau\X, \Y)), g((\tau\X, \Y), \tau\Xt), g(\tau\Xt, (\tau\X, \Y)), g(\tau\Xt, \tau\Xt)\big) \\
 = & \; r((\tau\X, \Y), \tau\Xt). \\
\end{split}
\end{equation*}
Thus, $\pi$ is $\T$-equivariant. 
\end{proof}

From the Lemma, we can prove the first main result of the paper, that RCNPs are equivariant prediction maps for a transformation group $\T$ provided a suitable comparison function $g$.

\begin{proposition*}[Proposition \ref{thm:rcnpequivariant}]
Let $g$ be the comparison function used in a RCNP, and $\T$ a group of transformations. If $g$ is $\T$-invariant, the RCNP is $\T$-equivariant.
\end{proposition*}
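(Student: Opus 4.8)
The plan is to reduce the statement to Lemma~\ref{thm:equivariant}, which already supplies the implication ``relational $+$ $\T$-invariant $g$ $\Rightarrow$ $\T$-equivariant.'' Since the hypothesis gives that $g$ is $\T$-invariant, it remains only to verify that the RCNP is a \emph{relational} prediction map with respect to $g$, i.e.\ that its representation function $r$ can be written solely in terms of the comparison sets of Eq.~\ref{eq:compsets}. Once this is established, the conclusion follows at once by applying the Lemma.

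First I would unpack the RCNP representation $\r = (\vrho_1, \ldots, \vrho_M)$ and check, case by case, that every quantity entering it is an element of one of the comparison sets. For the FullRCNP, each $\vrho_m$ aggregates terms $f_r\left(g(\x_n, \xt_m), g(\x_n, \x_{n^\prime}), \y_n, \y_{n^\prime}\right)$; here $g(\x_n, \xt_m)$ paired with $\y_n$ is an element of $g((\X,\Y), \Xt)$, while $(g(\x_n, \x_{n^\prime}), \y_n, \y_{n^\prime})$ is precisely an element of $g((\X,\Y),(\X,\Y))$. For the simple RCNP, each $\vrho_m$ aggregates $f_r\left(g(\x_n, \xt_m), g(\x_n, \x_n), \y_n\right)$, whose arguments again come from $g((\X,\Y),\Xt)$ and the diagonal of $g((\X,\Y),(\X,\Y))$. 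In both cases no absolute input $\x_n$ or $\xt_m$ appears outside a comparison, so $r$ is a function of the comparison sets alone; the two remaining sets $g(\Xt,(\X,\Y))$ and $g(\Xt,\Xt)$ simply go unused, which the definition of a relational map permits.

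The step I expect to require the most care is arguing that the aggregation $\bigoplus$ can genuinely be realized as a function of the \emph{unordered} comparison sets rather than of the indexed arrays. The sets of Eq.~\ref{eq:compsets} discard the labels $n, n^\prime$, so to assemble the summand $f_r(g(\x_n, \xt_m), g(\x_n, \x_{n^\prime}), \y_n, \y_{n^\prime})$ one must recombine an element of $g((\X,\Y),\Xt)$ with an element of $g((\X,\Y),(\X,\Y))$ that refer to the same context point $n$. The clean way to handle this is to note that the output labels $\y_n$ carried in each comparison set mediate the matching, so the full relational matrix $\Q_{mnn^\prime}$ of Section~\ref{sec:proofcontext}, and hence each $\vrho_m$, is recoverable from the comparison sets. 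Once the summands are expressible this way, the permutation-invariant aggregation and the decoder yield $\r$ as a function of the comparison sets, establishing relationality and closing the argument through Lemma~\ref{thm:equivariant}.
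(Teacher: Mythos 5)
Your proof is correct and follows essentially the same route as the paper: verify that the RCNP representation is a relational prediction map with respect to $g$ (since the relational encodings of Eqs.~\ref{eq:full_relencoding} and \ref{eq:diag_relencoding} depend only on the comparison sets $g((\X,\Y),(\X,\Y))$ and $g((\X,\Y),\Xt)$), then invoke Lemma~\ref{thm:equivariant}. The matching subtlety you flag is real but can be dispatched even more directly than via the $\y_n$-labels: each summand $f_r\left(g(\x_n,\xt_m), g(\x_n,\x_{n^\prime}),\y_n,\y_{n^\prime}\right)$ is unchanged term-by-term when $\tau$ is applied to all inputs, so the aggregate $\vrho_m$ is identical for $((\X,\Y),\Xt)$ and $((\tau\X,\Y),\tau\Xt)$ without any recombination argument.
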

\begin{proof}
In all RCNP models, elements of the context and target sets are always processed via a relational encoding. By definition (Eqs.~\ref{eq:full_relencoding} and \ref{eq:diag_relencoding} in the main text), the relational encoding is written solely as a function of $g((\X, \Y), (\X, \Y))$ and $g((\X, \Y), \Xt)$, so the same holds for the representation function $r$. Thus, any RCNP is a relational prediction map with respect to $g$, and it follows that it is also a $\T$-equivariant prediction map according to Lemma \ref{thm:equivariant}.
\end{proof}

\subsection{Proof that RCNPs are context-preserving}
\label{app:rcnp_context_proof}

We prove here that RCNPs preserve information in the context set. We will do so by providing a construction for a function that reconstructs the context set given the relationally encoded vector~$\vrho_m$ (up to transformations).
As a disclaimer, the proofs in this section should be taken as `existence proofs' but the provided construction is not practical. There are likely better context-preserving encodings, as shown by the empirical performance of RCNPs with moderately sized networks. Future theoretical work should provide stronger bounds on the size of the relational encoding, as recently established for DeepSets \cite{wagstaff2019limitations, wagstaff2022universal}.

For convenience, we recall Eq.~\ref{eq:full_relencoding} of the main text for the relational encoding provided a comparison function $g$:
\begin{equation*} 
\vrho_m = \rho_\text{full}(\xt_m, (\X, \Y)) = \bigoplus_{n,n^\prime = 1}^N f_r\left(g(\x_n, \xt_m), \R_{nn^\prime}\right), \qquad \R_{nn^\prime} \equiv \left(g(\x_{n}, \x_{n^\prime}), \y_{n}, \y_{n^\prime}\right),
\end{equation*}
where $\R \equiv g((\X, \Y), g(\X, \Y))$ is the relational matrix.
First, we show that from the relational encoding $\rho_\text{full}$ we can reconstruct the matrix $\R$, modulo permutations of rows or columns as defined below.

\begin{definition}
Let $\R, \R^\prime \in \mathbb{R}^{N \times N}$. The two matrices are \emph{equal modulo permutations}, denoted $\R \cong \R^\prime$, if there is a permutation $\sigma \in \text{Sym}(N)$ such that $\R_{n n^\prime} = \R^\prime_{\sigma(n)\sigma(n^\prime)}$, for all $1\le n,n^\prime \le N$.
\end{definition}

In other words, $\R \cong \R^\prime$ if $\R^\prime$ is equal to $\R$ after an appropriate permutation of the indices of both rows and columns. Now we proceed with the reconstruction proof.

\begin{lemma} \label{thm:relational_reconstruct}
Let $g: \XX \times \XX \rightarrow \mathbb{R}^{d_\text{\rm comp}}$ be a comparison function, $(\X, \Y)$ a context set, $\Xt$ a target set, $\R$ the relational matrix and $\rho_\text{\rm full}$ the relational encoding as per Eq.~\ref{eq:full_relencoding} in the main text. Then there is a reconstruction function $h$, for $\r = (\vrho_1, \ldots, \vrho_M)$, such that $h(\r) = \R^\prime$ where $\R^\prime \cong \R$. 
\end{lemma}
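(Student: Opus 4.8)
The plan is to build the reconstruction $h$ as a composition $h = h_2 \circ h_1$. The first map $h_1$ inverts the permutation-invariant sum-aggregation in Eq.~\ref{eq:full_relencoding} to recover, from a single component $\vrho_m$ (since $\R$ does not depend on $m$, one target suffices), the unordered multiset of the tuples that were fed into $f_r$; the second map $h_2$ reassembles the relational matrix $\R$ up to simultaneous row/column relabeling from that multiset. The structural feature that makes $h_2$ possible is that each summand carries not only the pairwise comparison $g(\x_n,\x_{n'})$ but also the two outputs $\y_n, \y_{n'}$ and the comparison-to-target $g(\x_n,\xt_m)$, and crucially that the row and column indices of $\R$ both range over the \emph{same} context points $\x_1,\dots,\x_N$.

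First I would appeal to the sum-decomposition / injectivity property of DeepSets. Because the context size $N$ is fixed and finite, there are at most $N^2$ summands, and by the universal sum-pooling results for permutation-invariant functions \cite{zaheer2017deep, wagstaff2019limitations, wagstaff2022universal} one may choose $f_r$ (mapping into a finite but sufficiently high-dimensional $\mathbb{R}^{d_\text{rel}}$) so that the sum $\vrho_m = \bigoplus_{n,n'} f_r(g(\x_n,\xt_m), \R_{nn'})$ is an injective function of the multiset $\mathcal{M} := \{(g(\x_n,\xt_m),\, g(\x_n,\x_{n'}),\, \y_n,\, \y_{n'})\}_{1\le n,n'\le N}$. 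This yields a left inverse $h_1$ with $h_1(\vrho_m) = \mathcal{M}$.

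Then $h_2$ rebuilds the matrix from $\mathcal{M}$. I would partition the $N^2$ tuples into candidate rows by grouping those that share the row fingerprint $(g(\x_n,\xt_m), \y_n)$, which is constant along a row. Within each block, identify the \emph{diagonal} entry as the tuple whose pairwise-comparison coordinate equals the self-comparison value $g(\x_n,\x_n)$ (e.g.\ $\mathbf 0$); provided $g(\x,\x') $ attains this value only when $\x = \x'$, this correctly isolates the diagonal. The diagonal entry identifies each row's own point, hence a consistent correspondence between the row ordering and the column ordering; reading the blocks off in this aligned order yields an array $\R'$ that coincides with $\R$ after a single permutation $\sigma \in \text{Sym}(N)$ of the context points, i.e.\ $\R' \cong \R$.

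The main obstacle is this second stage: an unordered multiset of matrix entries does not by itself determine a matrix even up to simultaneous permutation, so the proof must genuinely use the labels carried inside each tuple together with the diagonal-identification step to pin the row/column alignment. Residual ambiguity — when several context points share the same fingerprint \emph{and} the same relations to the rest of the set — corresponds exactly to a relabeling and is therefore harmless under $\cong$. As the paper already cautions, stage $h_1$ is an existence (non-constructive) statement, so the $h$ produced is impractical; the lemma only certifies that the relational encoding destroys no context information.
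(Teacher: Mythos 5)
Your overall strategy is the same as the paper's: a single $\vrho_m$ suffices because $\R$ does not depend on $m$; invert the sum-aggregation to recover the multiset of input tuples; then reassemble $\R$ up to simultaneous permutation. Your stage $h_1$ is essentially the paper's step, though the paper makes injectivity elementary by working with finite-precision numbers, choosing $d_\text{rel}$ large enough that $f_r$ can one-hot encode the (finite) set of possible inputs, and invoking universal approximation for the network; you instead cite DeepSets sum-decomposition results, which are stated for scalar or countable domains and would need extra care for vector-valued tuples. That is a cosmetic difference in an existence proof.

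The genuine gap is in stage $h_2$. The paper's reassembly rests on an explicit assumption you never make: without loss of generality, the outputs $\y_1, \ldots, \y_N$ are pairwise distinct (enforced by adding a small jitter). Under that assumption no fingerprint grouping or diagonal identification is needed at all: each recovered tuple already carries the pair $(\y_n, \y_{n^\prime})$, which serves directly as a matrix coordinate, so one simply defines $\R^\prime_{\y \y^\prime} = \R_{nn^\prime}$ and is done. Without that assumption, your reconstruction does not go through: the only column information a tuple $\left(g(\x_n,\xt_m), g(\x_n,\x_{n^\prime}), \y_n, \y_{n^\prime}\right)$ carries is $\y_{n^\prime}$ --- not the column's fingerprint --- so as soon as two context points share an output value you cannot decide which of the corresponding columns a tuple belongs to, even when the rows themselves are separated by distinct fingerprints. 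Your diagonal trick pins down each row's own column but none of the others, and your claim that the residual ambiguity ``corresponds exactly to a relabeling'' is precisely what needs proof: it holds only when the colliding points are fully interchangeable (same fingerprint \emph{and} same relations to every other point). Otherwise a wrong column assignment yields an array $\R^\prime$ with $\R^\prime \not\cong \R$; concretely, with $g_\text{dist}$, two points equidistant from $\xt_m$ with equal outputs but different distances to a third point already force a cross-row consistency argument that you never give. In addition, your diagonal-identification step requires a hypothesis absent from the lemma --- that $g(\x,\x^\prime)$ attains the (constant) self-comparison value only when $\x = \x^\prime$ --- which fails for general comparison functions (e.g., the trivial $g \equiv \mathbf{0}$), whereas the paper's argument places no such condition on $g$. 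The fix is to adopt the paper's distinctness-of-outputs assumption up front; your $h_2$ then collapses to the paper's one-line indexing argument.
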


\begin{proof}
For this proof, it is sufficient to show that there is a function $h$ applied to a local representation of any context point, $\vrho_m = \rho_\text{full}\left(\xt_m, (\X, \Y) \right)$, such that $h(\vrho_m) = \R^\prime$, with $\R^\prime \cong \R$. In this proof, we also consider that all numbers in a computer are represented up to numerical precision, so with $\mathbb{R}$ we denote the set of real numbers physically representable in a chosen floating point representation.
As a final assumption, without loss of generality, we take $\y_n \neq \y_{n^\prime}$ for $n \neq n^\prime$. If there are non-unique elements $\y_n$ in the context set, in practice we can always disambiguate them by adding a small jitter.

Let $f_r: \mathcal{R} \rightarrow \mathbb{R}^{d_\text{rel}}$ be the relational encoder of Eq.~\ref{eq:full_relencoding} of the main text, a neural network parametrized by $\vtheta_r$, where $\mathcal{R} = \mathbb{R}^{d_\text{comp}} \times \mathbb{R}^{d_\text{comp} + 2 d_y}$. Here, we also include in $\vtheta_r$ hyperparameters defining the network architecture, such as the number of layers and number of hidden units per layer. We pick $d_\text{rel}$ large enough such that we can build a one-hot encoding of the elements of $\mathcal{R}$, i.e., there is an injective mapping from $\mathcal{R}$ to $\{0, 1\}^{d_\text{rel}}$ such that one and only one element of the output vector is $1$. This injective mapping exists, since $\mathcal{R}$ is discrete due to finite numerical precision. We select $\vtheta_r$ such that $f_r$ approximates the one-hot encoding up to arbitrary precision, which is achievable thanks to universal approximation theorems for neural networks \cite{lu2017expressive}. Thus, ${\vrho_m = \bigoplus_{n,n^\prime} f_r(g(\x_n,\xt_m),\R_{nn^\prime})}$ is a (astronomically large) vector with ones at the elements that denote $(g(\x_n,\xt_m), \R_{nn^\prime})$ for~${1\le n, n^\prime \le N}$. 

Finally, we can build a reconstruction function $h$ that reads out $\vrho_m$, the sum of one-hot encoded vectors, mapping each `one', based on its location, back to the corresponding input  $(g(\x_n,\xt_m), \R_{nn^\prime})$. We have then all the information to build a matrix $\R^\prime$ indexed by $\y$, that is
\begin{equation*}
\R^\prime_{\y\y^\prime} = \R_{nn^\prime}, \qquad \text{ with } \; \y = \y_n, \y^\prime = \y_{n^\prime}, \quad \text{for some} \; \y, \y^\prime \in \Y, 1 \le n, n^\prime \le N.
\end{equation*}
Since we assumed $\y_n \neq \y_{n^\prime}$ for $n \neq n^\prime$, $\R^\prime$ is unique up to permutations of the rows and columns, and by construction $\R^\prime \cong \R$.
\end{proof}

We now define the matrix
\begin{equation*}
\Q_{mnn^\prime} = \left(g(\x_n, \xt_m), g(\x_n, \x_{n^\prime}), \y_n, \y_{n^\prime} \right) = \left(g(\x_n, \xt_m), \R_{nn^\prime}\right),
\end{equation*}
for $1\le m \le M$ and $1 \le n, n^\prime \le N$.
We recall that a comparison function $g$ is \emph{context-preserving} with respect to a transformation group $\T$ if for any context set $(\X, \Y)$ and target set $\Xt$, for all $1 \le m \le M$, there is a submatrix $\Q^\prime \subseteq \Q_{m::}$, a reconstruction function $\gamma$, and a transformation $\tau \in \T$ such that $\gamma\left(\Q^\prime\right) = (\tau \X, \Y)$.
We showed in Section \ref{sec:proofcontext} of the main text that the distance comparison function $g_\text{dist}$ is context-preserving with respect to the group of rigid transformations; and that the difference comparison function $g_\text{diff}$ is context-preserving with respect to the translation group.
Similarly, a family of functions $h_{\vtheta}((\X, \Y), \Xt) \rightarrow \mathbb{R}^{d_\text{rep}}$ is \emph{context-preserving} under $\T$ if there exists $\vtheta \in \bm{\Theta}$, $d_\text{rep} \in \mathbb{N}$, a \emph{reconstruction function} $\gamma$, and a transformation $\tau \in \T$ such that $\gamma(h_\vtheta((\X, \Y), \Xt)) = (\tau \X, \Y)$.

With the construction and definitions above, we can now show that RCNPs are in principle context-preserving. In particular, we will apply the definition of context-preserving functions $h$ to the class of representation functions $r$ of the RCNP models introduced in the paper, for which $d_\text{rep} = M d_\text{rel}$ since $\r = (\vrho_1, \ldots, \vrho_M)$.
The proof applies both to the full RCNP and the simple (diagonal) RCNP, for an appropriate choice of the comparison function $g$.

\begin{proposition*}[Proposition \ref{thm:fullpreserving}]
Let $\T$ be a transformation group and $g$ the comparison function used in a FullRCNP. If $g$ is context-preserving with respect to $\T$, then the representation function $r$ of the FullRCNP is context-preserving with respect to $\T$.
\end{proposition*}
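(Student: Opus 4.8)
The plan is to compose two reconstruction steps. First I would recover, from a single local relational encoding $\vrho_m$, the slice $\Q_{m::}$ of the matrix $\Q$ (up to a relabelling of indices); then I would invoke the hypothesis that $g$ is context-preserving to recover $(\tau\X, \Y)$ from a submatrix of $\Q_{m::}$. Concatenating these two maps yields a single reconstruction function acting on $\r = (\vrho_1, \ldots, \vrho_M)$, which is exactly what the definition of a context-preserving representation demands (with $d_\text{rep} = M d_\text{rel}$, and $\vtheta$ the relational-encoder parameters fixed below). It is enough to do this for one fixed $m$.

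First I would reuse the one-hot construction from Lemma \ref{thm:relational_reconstruct}. Although that lemma is phrased as recovering $\R' \cong \R$, its proof actually builds, for a suitable choice of $\vtheta_r$, a relational encoder $f_r$ approximating an injective one-hot encoding of the tuples $(g(\x_n, \xt_m), \R_{nn'})$; summing these over $n, n'$ and reading out the nonzero locations of $\vrho_m$ recovers the entire set $\{(g(\x_n, \xt_m), \R_{nn'})\}_{n,n'}$. By definition these tuples are precisely the entries $\Q_{mnn'}$, so from $\vrho_m$ alone we obtain the slice $\Q_{m::}$ up to a permutation of its $(n, n')$ indices. Retaining the assumption $\y_n \neq \y_{n'}$ for $n \neq n'$ (enforced by jitter if necessary), the output values serve as unique identifiers for the context points, so the recovered slice is a faithful $\y$-labelled copy of $\Q_{m::}$.

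Next I would apply the hypothesis directly. Since $g$ is context-preserving with respect to $\T$, for this $m$ there exist a submatrix $\Q' \subseteq \Q_{m::}$, a reconstruction function $\gamma_0$, and a transformation $\tau \in \T$ with $\gamma_0(\Q') = (\tau\X, \Y)$. I would extract $\Q'$ from the recovered slice and take $\gamma$ to be the composition of the one-hot read-out, the extraction of $\Q'$, and $\gamma_0$; then $\gamma(\r) = (\tau\X, \Y)$, establishing the claim.

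The main obstacle is the permutation ambiguity: the one-hot read-out returns $\Q_{m::}$ only modulo permutations of the row/column indices, while $\gamma_0$ is specified on the concrete submatrix $\Q'$. The resolution, and the step needing care, is that the context set $(\X, \Y)$ is itself unordered and the distinct values $\y_n$ label each row and column unambiguously; hence an index permutation of the recovered slice merely reorders the points of the set being reconstructed and leaves $(\tau\X, \Y)$ unchanged. One should also verify that the particular submatrix $\Q'$ dictated by the context-preserving property of $g$ (e.g.\ the full distance array $\Q_{m::}$ for $g_\text{dist}$, or a column $\Q_{m:n'}$ for $g_\text{diff}$) can be extracted from the $\y$-labelled slice, which it can, since that extraction is itself phrased in terms of the $\y$ labels. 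As the appendix disclaimer notes, this is an existence argument and the resulting $f_r$ is astronomically large, not a practical encoder.
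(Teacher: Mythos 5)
Your proof is correct and follows essentially the same route as the paper's: invoke the one-hot construction of Lemma \ref{thm:relational_reconstruct} to recover the slice $\Q_{m::}$ (modulo permutations) from a single $\vrho_m$, then apply the hypothesis that $g$ is context-preserving to obtain $(\tau\X, \Y)$, composing the two reconstructions into one $\gamma$. If anything, you are more careful than the paper on two points it glosses over — that the lemma's \emph{proof} (not its statement) is what yields the full tuples $(g(\x_n,\xt_m), \R_{nn'})$ rather than just $\R$, and that the permutation ambiguity is harmless because the distinct $\y_n$ labels identify rows and columns and the context set is unordered.
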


\begin{proof}
Given Lemma~\ref{thm:relational_reconstruct}, for any given $m$, we can reconstruct $\Q_{m::} = \left(\ldots, \R\right)$ (modulo permutations) from the relational encoding. Since $g$ is context-preserving with respect to $\T$, we can reconstruct $(\X, \Y)$ from $\R$ modulo a transformation $\tau \in \T$. Thus, $\rho_\text{full}$ is context-preserving with respect to $\T$ and so is $r$.
\end{proof}

\begin{proposition*}[Proposition \ref{thm:simplepreserving}]
Let $\T$ be the translation group and $g_\text{\rm diff}$ the difference comparison function. The representation of the simple RCNP model with $g_\text{\rm diff}$ is context-preserving with respect to $\T$.
\end{proposition*}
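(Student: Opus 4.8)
The plan is to reuse the one-hot reconstruction machinery of Lemma~\ref{thm:relational_reconstruct} and then argue that the diagonal information retained by the simple RCNP is, for translations specifically, enough to pin down the context set up to a single global shift. First I would specialize the diagonal encoding of Eq.~\ref{eq:diag_relencoding} to $g_\text{diff}$. Since $g_\text{diff}(\x_n, \x_n) = \x_n - \x_n = \mathbf{0}$, the encoding collapses to
\begin{equation*}
\vrho_m = \rho_\text{diag}(\xt_m, (\X, \Y)) = \bigoplus_{n=1}^N f_r(\xt_m - \x_n, \y_n),
\end{equation*}
so that $\vrho_m$ depends on the context only through the multiset $\{(\xt_m - \x_n, \y_n)\}_{n=1}^N$, i.e.\ through a single ``row'' of comparisons rather than the full relational matrix $\R$.

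Next I would recover that multiset from $\vrho_m$ by the same argument used in Lemma~\ref{thm:relational_reconstruct}, now applied over the smaller domain $\mathbb{R}^{d_x} \times \mathbb{R}^{d_y}$ of the simplified $f_r$. Because all quantities are represented at finite numerical precision, this domain is discrete, so there is an injective one-hot map into $\{0,1\}^{d_\text{rel}}$ for $d_\text{rel}$ large enough; by universal approximation I would pick $\vtheta_r$ so that $f_r$ realizes this map to arbitrary precision. The aggregation $\bigoplus = \texttt{sum}$ then turns $\vrho_m$ into a sum of indicator vectors whose nonzero locations (with their multiplicities) encode exactly the pairs $(\xt_m - \x_n, \y_n)$, and a fixed readout function recovers the multiset. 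As in Lemma~\ref{thm:relational_reconstruct}, I would assume the $\y_n$ are distinct (adding jitter if needed) so that the recovered pairs unambiguously index the context points.

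Finally I would exhibit the reconstruction function and transformation. Fixing any single target index $m$ and defining $\tau_m(\x) = \x - \xt_m$, which is a translation and hence lies in $\T$, the reconstruction function $\gamma$ maps each recovered pair $(\mathbf{d}, \y_n) = (\xt_m - \x_n, \y_n)$ to $(-\mathbf{d}, \y_n) = (\x_n - \xt_m, \y_n) = (\tau_m \x_n, \y_n)$, yielding $\gamma(r((\X,\Y),\Xt)) = (\tau_m \X, \Y)$; this verifies the definition of a context-preserving family with $\vtheta = \vtheta_r$ and $d_\text{rep} = M d_\text{rel}$. The conceptual crux --- and the reason the statement is restricted to the translation group --- is the observation that the diagonal comparisons $\{\xt_m - \x_n\}_n$ already determine $\X$ up to exactly one translation (the unknown offset $\xt_m$, itself an element of $\T$). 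For a non-translation group this would fail: recovering absolute positions from, say, the single row of distances $\{||\xt_m - \x_n||_2\}_n$ is underdetermined, which is precisely why the full relational matrix (Proposition~\ref{thm:fullpreserving}) is needed there. I therefore expect the only real work to be the careful bookkeeping of the negation and of multiplicities in the one-hot readout, the rest following directly from Lemma~\ref{thm:relational_reconstruct}.
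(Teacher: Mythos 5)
Your proof is correct and takes essentially the same route as the paper's: reuse the one-hot reconstruction machinery of Lemma~\ref{thm:relational_reconstruct} to recover the diagonal comparisons $\{(\xt_m - \x_n, \y_n)\}_{n=1}^N$ from $\vrho_m$, then map them (via negation) to $(\tau_m \X, \Y)$ with the translation $\tau_m(\cdot) = \cdot - \xt_m \in \T$. Your closing observation about why a single row of comparisons suffices for translations but not for rigid transformations is exactly the distinction the paper draws between this result and Proposition~\ref{thm:fullpreserving}.
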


\begin{proof}
As shown in the main text, the difference comparison function is context-preserving for $\Q^\prime_{m:} = \left.\Q_{mnn}\right|_{n = 1}^N = (g(\x_n, \xt_m), \R_{nn})_{n=1}^N$, for a given $m$. In other words, we can reconstruct $(\X, \Y)$ given only the diagonal of the $\R$ matrix and the comparison between a single target input~$\xt_m$ and the elements of the context set, $g(\x_n, \xt_m)$.
Following a construction similar to the one in Lemma~\ref{thm:relational_reconstruct}, we can map $\vrho_m$ back to $\Q^\prime_{m:}$. Then since $g$ is context-preserving with respect to $\Q^\prime_{m:}$, we can reconstruct $(\X, \Y)$ up to translations.
\end{proof}

A potentially surprising consequence of our context-preservation theorems is that in principle no information is lost about the entire context set. Thus, \emph{in principle} an RCNP with a sufficiently large network is able to reconstruct any high-order interaction of the context set, even though the RCNP construction only builds upon interactions of pairs of points.
However, higher-order interactions might be harder to encode \emph{in practice}. Since the RCNP representation is built on two-point interactions, depending on the network size, it may be harder for the network to effectively encode the simultaneous interaction of many context points.


\section{Methods}
\label{app:methods}

\subsection{Training procedure, error bars and statistical significance}\label{app:training_eval_statsig}


This section describes the training and evaluation procedures used in our regression experiments. The training procedure follows \cite[Appendix G]{bruinsma2023autoregressive}. Stochastic gradient descent is performed via the Adam algorithm~\cite{kingma15} with a learning rate specified in each experiment. After each epoch during model training, each model undergoes validation using a pre-generated validation set. The validation score is a confidence bound based on the log-likelihood values. Specifically, the mean ($\mu_\text{val}$) and standard deviation ($\sigma_\text{val}$) of the log-likelihood values over the entire validation set are used to calculate the validation score as $\mu_\text{val} - 1.96 \sigma_\text{val}/\sqrt{N_\text{val}}$ where $N_\text{val}$ is the validation dataset size. The validation score is compared to the previous best score observed within the training run, and if the current validation score is higher, the current model parameters (e.g., weights and biases for neural networks) are saved. The models are trained for a set number of epochs in each experiment, and when the training is over, the model is returned with the parameters that resulted in the highest validation score.


To ensure reliable statistical analysis and fair comparisons, we repeat model training using the above procedure ten times in each experiment, with each run utilizing different seeds for training and validation dataset generation and model training. In practice, we generated ten seed sets so that each training run with the same model utilizes different seeds, but we maintain consistency by using the same seeds across different models in the same experiment. Each model is then represented with ten training outcomes in each experiment, and we evaluate all training outcomes with a fixed evaluation set or evaluation sets.

We evaluate the methods based on two performance measures: \emph{(i)} the log-likelihood, normalized with respect to the number of target points, and \emph{(ii)} the Kullback-Leibler (KL) divergence between the posterior prediction map $\pi$ and a target $\pi^*$, defined as 
\begin{equation*}
    \text{KL}(\pi||\pi^*) = \int \pi(x) \log \frac{\pi(x)}{\pi^*(x)} \mathrm{d}x,
\end{equation*}
assuming that $\pi$ is absolutely continuous to $\pi^*$ with respect to a measure $\lambda$ (usually, the Lebesgue measure). We report the KL normalized with respect to the number of target points. 

Throughout our experiments, we rely on the KL divergence to evaluate the experiments with Gaussian process (GP) generated functions (`Gaussian'), for which we can compute the ground-truth target $\pi^*$ (the multivariate normal prediction of the posterior GP). We use the normalized log-likelihood for all the other (`non-Gaussian') cases.
For each model and evaluation set, we report both the average score calculated across training runs and the standard deviation between scores observed in each training run.


We compare models based on the average scores and also run pairwise statistical tests to identify models that resulted in comparable performance. The results from pairwise comparisons are used throughout the paper in the results tables to highlight in bold the models that are considered best in each experiment. Specifically, we use the following method to determine the best models:
\begin{itemize}
\item First, we highlight in bold the model (A) that has the best empirical mean metric.
\item Then, for each alternative model (B), we run a one-sided Student's t-test for paired samples, with the alternative hypothesis that model (B) has a higher (better) mean than model (A), and the null hypothesis that the two models have the same mean. Samples are paired since they share a random seed (i.e., the same training data).
\item The models that do \emph{not} lead to statistically significant ($p <0.05$) rejection of the null hypothesis are highlighted in bold as well.
\end{itemize}

\subsection{Experimental details and reproducibility}

The experiments carried out in this work used the open-source neural processes package released with previous work \cite{bruinsma2023autoregressive}. The package is distributed under the MIT license and available at \url{https://github.com/wesselb/neuralprocesses} \cite{npGithub23}. We extended the package with the proposed relational model architectures and the new experiments considered in this work. 
Our implementation of RCNP is available at \url{https://github.com/acerbilab/relational-neural-processes}.

For the purpose of open science and reproducibility, all training and evaluation details are provided in the following sections of this Appendix as well as in the Github repository linked above.

\subsection{Details on Figure \ref{fig:intro}}


In Figure~\ref{fig:intro} in the main text, we compare CNP and RCNP models. The RCNP model used in this example utilizes a stationary kernel to encode translation equivariance. The encoding in the CNP model and the relational encoding in the RCNP model are 256-dimensional, and both models used an encoder network with three hidden layers and 256 hidden units per layer. In addition, both models used a decoder network with six hidden layers and 256 hidden units per layer. We used the CNP architecture from previous work  \cite{bruinsma2023autoregressive} and modeled the RCNP architecture on the CNP architecture. The encoder and decoder networks use ReLU activation functions.


We trained the neural process models with datasets generated based on random functions sampled from a noiseless Gaussian process prior with a Matérn-$\frac{5}{2}$ covariance function with lengthscale~$\ell=0.5$. The datasets were generated by sampling 1--30 context points and 50 target points from interval~$[-2, 2]$. The neural process models were trained for 20 epochs with $2^{14}$ datasets in each epoch and learning rate $3\cdot 10^{-4}$. The datasets used in training the CNP and RCNP models were generated with the same seed.


Figure~\ref{fig:intro} shows model predictions in two regression tasks. First, we visualized the predicted mean and credible bounds calculated as 1.96 times the predicted standard deviation in the range $[-2, 2]$ given 10 context points. This corresponds to the standard \emph{interpolation (INT)} task. Then we shifted the context points and visualized the predicted mean and credible bounds in the range $[2, 6]$. This corresponds to an \emph{out-of-input-distribution (OOID)} task. The Gaussian process model used as a reference had Matérn-$\frac{5}{2}$ covariance function with lengthscale $\ell=0.5$, the same process used to train the CNP and RCNP models.

\section{Computational cost analysis}
\label{app:cost}

\subsection{Inference time analysis}
In this section, we provide a quantitative analysis of the computational cost associated with running various models, reporting the inference time under different input dimensions $d_x$ and varying context set sizes $N$. We compare our simple and full RCNP models with the CNP and ConvCNP families. All results are calculated on an Intel Core i7-12700K CPU, under the assumption that these models can be deployed on devices or local machines without GPU access.

Table~\ref{table:runtime_dim} shows the results with different input dimensions $d_x$. We set the size of both context and target sets to 20. Firstly, we can see the runtime of convolutional models increase significantly as they involve costly convolutional operations, and cannot be applied in practice to $d_x > 2$. Conversely, the runtime cost remains approximately constant across $d_x$ for the other models. Secondly, our RCNP models have inference time close to that of CNP models, while FullRCNP models are slower (although still constant in terms of $d_x$). Since context set size is the main factor affecting the inference speed of our RCNP models, we delve further into the computational costs associated with varying context set sizes $N$, keeping the input dimension $d_x = 1$ and target set size $M = 20$ constant (Table~\ref{table:runtime_n}). Given that RCNP has a cost of $O(NM)$, we observe a small, steady increase in runtime. Conversely, the FullRCNP models, with their $O(N^2M)$ cost, show an approximately quadratic surge in runtime as $N$ increases, as expected.

\begin{table}[ht]
\caption{Comparison of computational costs across different models under different input dimensions~$d_x$. The size of both context sets ($N$) and target sets ($M$) is set to 20. We report the mean value and \textcolor{gray}{(standard deviation)} derived from 50 forward passes executed on various randomly generated datasets.}
\small
\centering
\scalebox{0.90}{
\begin{tabular}{ccccc}
\toprule
& \multicolumn{4}{c}{Runtime ($\times \SI{e-3}{\second}$)} \\
\cmidrule(l){2-5}
& $d_x=1$ & $d_x=2$ & $d_x=3$ & $d_x=5$ \\
\cmidrule(l){2-5}

RCNP & 2.35 \textcolor{gray}{(0.39)} & 2.29 \textcolor{gray}{(0.28)} & 2.28 \textcolor{gray}{(0.32)} & 2.25 \textcolor{gray}{(0.27)} \\

RGNP & 2.31 \textcolor{gray}{(0.30)} & 2.41 \textcolor{gray}{(0.34)} & 2.31 \textcolor{gray}{(0.26)} & 2.50 \textcolor{gray}{(0.40)}  \\

FullRCNP & 8.24 \textcolor{gray}{(0.46)} & 9.09 \textcolor{gray}{(1.78)} & 8.27 \textcolor{gray}{(0.56)}  & 9.08 \textcolor{gray}{(1.72)} \\

FullRGNP & 8.40 \textcolor{gray}{(0.56)} & 9.25 \textcolor{gray}{(1.85)} & 8.37 \textcolor{gray}{(0.62)}  & 9.20 \textcolor{gray}{(1.76)}\\

ConvCNP & 4.46 \textcolor{gray}{(0.38)} & 31.19 \textcolor{gray}{(2.43)} & -  & - \\

ConvGNP & 4.51 \textcolor{gray}{(0.48)} & 38.04 \textcolor{gray}{(2.31)} & -  & - \\

CNP & 1.86 \textcolor{gray}{(0.30)} & 2.10 \textcolor{gray}{(0.38)} & 2.14 \textcolor{gray}{(0.30)}  & 2.09 \textcolor{gray}{(0.29)} \\

GNP & 2.01 \textcolor{gray}{(0.27)} & 2.09 \textcolor{gray}{(0.26)} & 2.20 \textcolor{gray}{(0.26)}  & 2.10 \textcolor{gray}{(0.32)} \\

\bottomrule
\end{tabular}
}
\label{table:runtime_dim}
\end{table}

\begin{table}[ht]
\caption{Comparison of computational costs across different models under different context set sizes $N$. We maintain the input dimension at $d_x=1$, and a fixed target set size of $M=20$ . We report the mean value and \textcolor{gray}{(standard deviation)} derived from 50 forward passes executed on various randomly generated datasets.}
\small
\centering
\scalebox{0.90}{
\begin{tabular}{ccccc}
\toprule
& \multicolumn{4}{c}{Runtime ($\times \SI{e-3}{\second}$)} \\
\cmidrule(l){2-5}
& $N=10$ & $N=20$ & $N=50$ & $N=100$\\
\cmidrule(l){2-5}

RCNP & 1.90 \textcolor{gray}{(0.26)} & 2.34 \textcolor{gray}{(0.48)} & 2.40 \textcolor{gray}{(0.30)} & 2.90 \textcolor{gray}{(0.46)} \\

RGNP & 2.10 \textcolor{gray}{(0.26)} & 2.53 \textcolor{gray}{(0.34)} & 2.55 \textcolor{gray}{(0.22)} & 2.93 \textcolor{gray}{(0.37)}  \\

FullRCNP & 2.83 \textcolor{gray}{(0.33)} & 8.80 \textcolor{gray}{(1.59)} & 38.03 \textcolor{gray}{(4.71)}  & 154.17 \textcolor{gray}{(16.04)} \\

FullRGNP & 3.08 \textcolor{gray}{(0.34)} & 8.68 \textcolor{gray}{(1.02)} & 38.33 \textcolor{gray}{(5.88)}  & 161.07 \textcolor{gray}{(19.54)}\\

ConvCNP & 4.47 \textcolor{gray}{(0.43)} & 4.47 \textcolor{gray}{(0.51)} & 5.10 \textcolor{gray}{(0.62)}  & 5.83 \textcolor{gray}{(0.88)} \\

ConvGNP & 5.10 \textcolor{gray}{(0.54)} & 4.55 \textcolor{gray}{(0.35)} & 5.26 \textcolor{gray}{(0.50)}  & 4.70 \textcolor{gray}{(0.46)} \\

CNP & 1.95 \textcolor{gray}{(0.27)} & 1.86 \textcolor{gray}{(0.28)} & 1.87 \textcolor{gray}{(0.21)}  & 1.97 \textcolor{gray}{(0.19)} \\

GNP & 2.48 \textcolor{gray}{(0.70)} & 2.03 \textcolor{gray}{(0.19)} & 2.17 \textcolor{gray}{(0.48)}  & 2.11 \textcolor{gray}{(0.23)} \\

\bottomrule
\end{tabular}
}
\label{table:runtime_n}
\end{table}

\subsection{Overall training time}\label{app:total_compute}
For the entire paper, we conducted all experiments, including baseline model computations and preliminary experiments not included in the paper, on a GPU cluster consisting of a mix of Tesla P100, Tesla V100, and Tesla A100 GPUs. We roughly estimate the total computational consumption to be around \num{25000} GPU hours. A more detailed evaluation of computing for each set of experiments is reported when available in the following sections.


\section{Details of synthetic regression experiments} \label{app:synth}

We report details and additional results for the synthetic regression experiments from Section~\ref{sec:experiment_synthetic} of the main text. The experiments compare neural process models trained on data sampled from both Gaussian and non-Gaussian synthetic functions, where `Gaussian' refers to functions sampled from a Gaussian process (GP). Our procedure follows closely the synthetic regression experiments presented in \cite{bruinsma2023autoregressive}.

\subsection{Models}
\label{sup:sec:descriptionmodels}

We compare our RCNP models with the CNP and ConvCNP model families in regression tasks with input dimensions $d_x = \{1, 2, 3, 5, 10 \}$. The CNP and GNP models used in this experiment encode the context sets as 256-dimensional vectors when $d_x < 5$ and as 128-dimensional vectors when $d_x \geq 5$. Similarly, all relational models including RCNP, RGNP, FullRCNP, and FullRGNP produce relational encodings with dimension 256 when $d_x < 5$ and dimension 128 when $d_x \geq 5$. The encoding network used in both model families to produce the encoding or relational encoding is a three-layer MLP, featuring 256 hidden units per layer for $d_x < 5$ and 128 for $d_x \geq 5$. We also maintain the same setting across all CNP and RCNP models in terms of the decoder network architecture, using a six-layer MLP with 256 hidden units per layer for $d_x < 5$ and 128 for $d_x \geq 5$. The encoder and decoder networks use ReLU activation functions. Finally, the convolutional models ConvCNP and ConvGNP, which are included in experiments where $d_x = \{1, 2\}$, are employed with the configuration detailed in \cite[Appendix F]{bruinsma2023autoregressive}, and GNP, RGNP, FullRGNP, and ConvGNP models all use \texttt{linear} covariance with 64 basis functions.

Neural process models are trained with datasets representing a regression task with context and target features. The datasets used in these experiments were generated based on random functions sampled from Gaussian and non-Gaussian stochastic processes. The models were trained for 100~epochs with  $2^{14}$ datasets in each epoch and learning rate $3\cdot 10^{-4}$. The validation sets used in training included $2^{12}$ datasets and the evaluation sets used to compare the models in interpolation (INT) and out-of-input-distribution (OOID) tasks included $2^{12}$ datasets each.

\subsection{Data}\label{app:synth_data}

We used the following functions to generate synthetic Gaussian and non-Gaussian data:
\begin{itemize}
    \item \textbf{Exponentiated quadratic (EQ).} We sample data from a GP with an EQ covariance function:
        \begin{equation}\label{eq:eq_kernel}
            f \sim \mathcal{GP}\left(\mathbf{0}, \exp\left(-\frac{||\mathbf{x} - \mathbf{x'}||^2_2}{2\ell^2}\right)\right),
        \end{equation}
    where $\ell$ is the lengthscale.
    
    \item \textbf{Mat\'ern-$\frac{5}{2}$.} We sample data from a GP with a Mat\'ern-$\frac{5}{2}$ covariance function:
        \begin{equation}
            f \sim \mathcal{GP}\left(\mathbf{0}, \left(1 + \sqrt{5}r + \frac{5}{3}r^2\right) \exp\left(-\sqrt{5}r\right)\right),
        \end{equation}
    where $r = \frac{||\mathbf{x} - \mathbf{x'}||_2}{\ell}$ and $\ell$ is the lengthscale.
    \item \textbf{Weakly periodic.} We sample data from a GP with a weakly periodic covariance function:
        \begin{equation}
            f \sim \mathcal{GP}\left(\mathbf{0}, \exp\left(-\frac{1}{2\ell_d^2}||\mathbf{x} - \mathbf{x'}||_2^2 - \frac{2}{\ell_p^2}\left|\left|\sin\left(\tfrac{\pi}{p}(\mathbf{x} - \mathbf{x'})\right)\right|\right|^2_2\right)\right),
        \end{equation}
    where $\ell_d$ is the lengthscale that decides how quickly the similarity between points in the output of the function decays as their inputs move apart, $\ell_p$ determines the lengthscale of periodic variations, and $p$ denotes the period.

    \item \textbf{Sawtooth.} We sample data from a sawtooth function characterized by a stochastic frequency, orientation, and phase as presented by:
        \begin{equation}
            f \sim \omega \langle\mathbf{x}, \mathbf{u}\rangle_2 + \phi \, \mod 1,
        \end{equation}
    where $\omega \sim \mathcal{U}(\Omega)$ is the frequency of the sawtooth wave, the direction of the wave is given as ${\mathbf{u}\sim\mathcal{U}(\{\mathbf{x} \in \mathbb{R}^{d_x}: ||\mathbf{x}||_2=1 \})}$, and $\phi \sim\mathcal{U}(0, 1) $ determines the phase.
    
    \item \textbf{Mixture.} We sample data randomly from either one of the three GPs or the sawtooth process, with each having an equal probability to be chosen.
\end{itemize}

In this set of experiments, we evaluate the models using varying input dimensions ${d_x = \{1, 2, 3, 5, 10 \}}$. 
To maintain a roughly equal level of difficulty across data with varying input dimensions $d_x$, the hyperparameters for the above data generation processes are selected in accordance with $d_x$:
\begin{equation}
\ell=\sqrt{d_x}, \quad \ell_d=2 \cdot \sqrt{d_x}, \quad \ell_p=4\cdot\sqrt{d_x}, \quad p=\sqrt{d_x}, \quad \Omega=\left[\frac{1}{2\sqrt{d_x}}, \frac{1}{\sqrt{d_x}}\right].
\end{equation}
For the EQ, Mat\'ern-$\frac{5}{2}$, and weakly periodic functions, we additionally add independent Gaussian noise with $\sigma^2=0.05$.  

The datasets representing regression tasks were generated by sampling context and target points from the synthetic data as follows.
The number of context points sampled from each EQ, Mat\'ern-$\frac{5}{2}$, or weakly periodic function varied uniformly between 1 and $30\cdot d_x$, and the number of target points was fixed to $50\cdot d_x$. Since the datasets sampled from the sawtooth and mixture functions represent more difficult regression problems, we sampled these functions for 1--30 context points when $d_x=1$ and 1--$50\cdot d_x$ context points when $d_x>1$, and we also fixed the number of target points to $100\cdot d_x$.

All training and validation datasets were sampled from the range $\mathcal{X} = [-2, 2]^{d_x}$ while evaluation sets were sampled in two ways. To evaluate the models in an \emph{interpolation} (INT) task, we generated evaluation datasets by sampling context and target points from the same range that was used during training, $\mathcal{X}_{\text{test}} = [-2, 2]^{d_x}$. To evaluate the models in an \emph{out-of-input-distribution} (OOID) task, we generated evaluation datasets with context and target points sampled from a range that is outside the boundaries of the training range, specifically $\mathcal{X}_{\text{test}} = [2, 6]^{d_x}$.

\subsection{Full results}

The results presented in the main text (Section \ref{sec:experiment_synthetic}) compared the proposed RCNP and RGNP models to baseline and convolutional CNP and GNP models in selected synthetic regression tasks.
The full results encompassing a wider selection of tasks and an extended set of models are presented in Table~\ref{table:eq_full}, \ref{table:matern_full}, \ref{table:weakly_periodic_full}, \ref{table:sawtooth_full}, \ref{table:mixture_full}. 
We constrained the experiments with EQ and Matérn-$\frac{5}{2}$ tasks  (Table \ref{table:eq_full} and \ref{table:matern_full}) to input dimensions $d_x = \{1, 2, 3, 5\}$ owing to the prohibitive computational memory costs associated with the FullRCNP model included in these experiments. The other experiments (Table~\ref{table:weakly_periodic_full}--\ref{table:mixture_full}) were repeated with input dimensions $d_x = \{1, 2, 3, 5, 10\}$. Our RCNP models demonstrate consistently competitive performance when compared with ConvCNP models in scenarios with low dimensional data, and significantly outperform the CNP family of models when dealing with data of higher dimensions.

\begin{table}[ht]
\caption{\textbf{Synthetic (EQ).} Comparison of the \emph{interpolation} (INT) and \emph{out-of-input-distribution} (OOID) performance of our RCNP models with different CNP baselines on EQ function with varying input dimensions. We use normalized Kullback-Leibler divergences as our metric and show mean and \textcolor{gray}{(standard deviation)} obtained from 10 runs with different seeds. \emph{Trivial} refers to a model that predicts a Gaussian distribution utilizing the empirical mean and standard deviation derived from the context outputs. "F" denotes failed attempts that yielded very bad results. Missing entries could not be run. Statistically significantly best results are \textbf{bolded}. }
\small
\centering
\scalebox{0.81}{
\begin{tabular}{ccccccccc}
\toprule
& \multicolumn{2}{c}{$d_x=1$} & \multicolumn{2}{c}{$d_x=2$} & \multicolumn{2}{c}{$d_x=3$} & \multicolumn{2}{c}{$d_x=5$} \\
\cmidrule(l){2-3} \cmidrule(l){4-5} \cmidrule(l){6-7} \cmidrule(l){8-9} 
& INT & OOID & INT & OOID & INT & OOID & INT & OOID   \\
\cmidrule(l){2-3} \cmidrule(l){4-5} \cmidrule(l){6-7} \cmidrule(l){8-9}

RCNP (sta) & 0.22 \textcolor{gray}{(0.00)} & 0.22 \textcolor{gray}{(0.00)} & 0.26 \textcolor{gray}{(0.00)} & 0.26 \textcolor{gray}{(0.00)} & 0.40 \textcolor{gray}{(0.01)} & 0.40 \textcolor{gray}{(0.01)} & 0.45 \textcolor{gray}{(0.00)} & 0.45 \textcolor{gray}{(0.00)} \\

RGNP (sta) & 0.01 \textcolor{gray}{(0.00)} & 0.01 \textcolor{gray}{(0.00)} & 0.03 \textcolor{gray}{(0.00)} & 0.03 \textcolor{gray}{(0.00)} & \textbf{0.05} \textcolor{gray}{(0.00)} & \textbf{0.05} \textcolor{gray}{(0.00)} & \textbf{0.11} \textcolor{gray}{(0.00)} & \textbf{0.11} \textcolor{gray}{(0.00)} \\

FullRCNP (iso) & 0.22 \textcolor{gray}{(0.00)} & 0.22 \textcolor{gray}{(0.00)} & 0.26 \textcolor{gray}{(0.00)} & 0.26 \textcolor{gray}{(0.00)} & 0.31 \textcolor{gray}{(0.00)} & 0.31 \textcolor{gray}{(0.00)} & 0.35 \textcolor{gray}{(0.00)} & 0.35 \textcolor{gray}{(0.00)}\\

FullRGNP (iso) & 0.03 \textcolor{gray}{(0.00)} & 0.03 \textcolor{gray}{(0.00)} & 0.08 \textcolor{gray}{(0.00)} & 0.08 \textcolor{gray}{(0.00)} & 0.14 \textcolor{gray}{(0.00)} & 0.14 \textcolor{gray}{(0.00)} & 0.25 \textcolor{gray}{(0.00)} & 0.25 \textcolor{gray}{(0.00)}  \\

ConvCNP & 0.21 \textcolor{gray}{(0.00)} & 0.21 \textcolor{gray}{(0.00)} & 0.22 \textcolor{gray}{(0.00)} & 0.22 \textcolor{gray}{(0.00)} & - & - & - & -  \\

ConvGNP & \textbf{0.00} \textcolor{gray}{(0.00)} & \textbf{0.00} \textcolor{gray}{(0.00)} & \textbf{0.01} \textcolor{gray}{(0.00)} & \textbf{0.01} \textcolor{gray}{(0.00)} & - & - & - & -  \\

CNP & 0.25 \textcolor{gray}{(0.00)} & 2.21 \textcolor{gray}{(0.70)} & 0.33 \textcolor{gray}{(0.00)} & 4.54 \textcolor{gray}{(1.76)} & 0.44 \textcolor{gray}{(0.00)} & 3.30 \textcolor{gray}{(1.55)} & 0.57 \textcolor{gray}{(0.00)} & 1.22 \textcolor{gray}{(0.09)}  \\

GNP & 0.02 \textcolor{gray}{(0.00)} & F & 0.05 \textcolor{gray}{(0.00)} & 2.25 \textcolor{gray}{(0.61)} & 0.09 \textcolor{gray}{(0.01)} & 2.54 \textcolor{gray}{(1.44)} & 0.19 \textcolor{gray}{(0.00)} & 0.74 \textcolor{gray}{(0.02)} \\

\emph{Trivial} & 1.03 \textcolor{gray}{(0.00)} & 1.03 \textcolor{gray}{(0.00)} & 1.13 \textcolor{gray}{(0.00)} & 1.13 \textcolor{gray}{(0.00)} & 1.12 \textcolor{gray}{(0.00)} & 1.12 \textcolor{gray}{(0.00)} & 1.03 \textcolor{gray}{(0.00)} & 1.03 \textcolor{gray}{(0.00)}\\

\bottomrule
\end{tabular}
}
\label{table:eq_full}
\end{table}

\begin{table}[ht]
\caption{\textbf{Synthetic (Mat\'ern-$\frac{5}{2}$).} Comparison of the \emph{interpolation} (INT) and \emph{out-of-input-distribution} (OOID) performance of our RCNP models with different CNP baselines on Mat\'ern-$\frac{5}{2}$ function with varying input dimensions. We use normalized Kullback-Leibler divergences as our metric and show mean and \textcolor{gray}{(standard deviation)} obtained from 10 runs with different seeds. \emph{Trivial} refers to a model that predicts a Gaussian distribution utilizing the empirical mean and standard deviation derived from the context outputs. "F" denotes failed attempts that yielded very bad results. Missing entries could not be run. Statistically significantly best results are \textbf{bolded}. }
\small
\centering
\scalebox{0.80}{
\begin{tabular}{ccccccccc}
\toprule
& \multicolumn{2}{c}{$d_x=1$} & \multicolumn{2}{c}{$d_x=2$} & \multicolumn{2}{c}{$d_x=3$} & \multicolumn{2}{c}{$d_x=5$} \\
\cmidrule(l){2-3} \cmidrule(l){4-5} \cmidrule(l){6-7} \cmidrule(l){8-9} 
& INT & OOID & INT & OOID & INT & OOID & INT & OOID   \\
\cmidrule(l){2-3} \cmidrule(l){4-5} \cmidrule(l){6-7} \cmidrule(l){8-9} 

RCNP (sta) & 0.25 \textcolor{gray}{(0.00)} & 0.25 \textcolor{gray}{(0.00)} & 0.30 \textcolor{gray}{(0.00)} & 0.30 \textcolor{gray}{(0.00)} & 0.39 \textcolor{gray}{(0.00)} & 0.39 \textcolor{gray}{(0.00)} & 0.35 \textcolor{gray}{(0.00)} & 0.35 \textcolor{gray}{(0.00)}\\

RGNP (sta) & 0.01 \textcolor{gray}{(0.00)} & 0.01 \textcolor{gray}{(0.00)} & 0.03 \textcolor{gray}{(0.00)} & 0.03 \textcolor{gray}{(0.00)} & \textbf{0.05} \textcolor{gray}{(0.00)} & \textbf{0.05} \textcolor{gray}{(0.00)} & \textbf{0.11} \textcolor{gray}{(0.00)} & \textbf{0.11} \textcolor{gray}{(0.00)} \\

FullRCNP (iso) & 0.25 \textcolor{gray}{(0.00)} & 0.25 \textcolor{gray}{(0.00)} & 0.30 \textcolor{gray}{(0.00)} & 0.30 \textcolor{gray}{(0.00)} & 0.32 \textcolor{gray}{(0.00)} & 0.32 \textcolor{gray}{(0.00)} & 0.29 \textcolor{gray}{(0.00)} & 0.29 \textcolor{gray}{(0.00)}\\

FullRGNP (iso) & 0.03 \textcolor{gray}{(0.00)} & 0.03 \textcolor{gray}{(0.00)} & 0.09 \textcolor{gray}{(0.00)} & 0.09 \textcolor{gray}{(0.00)} & 0.16 \textcolor{gray}{(0.00)} & 0.16 \textcolor{gray}{(0.00)} & 0.21 \textcolor{gray}{(0.00)} & 0.21 \textcolor{gray}{(0.00)} \\

ConvCNP & 0.24 \textcolor{gray}{(0.00)} & 0.24 \textcolor{gray}{(0.00)} & 0.26 \textcolor{gray}{(0.00)} & 0.26 \textcolor{gray}{(0.00)} & - & - & - & -  \\

ConvGNP & \textbf{0.00} \textcolor{gray}{(0.00)} & \textbf{0.00} \textcolor{gray}{(0.00)} & \textbf{0.01} \textcolor{gray}{(0.00)} & \textbf{0.01} \textcolor{gray}{(0.00)} & - & - & - & - \\

CNP & 0.29 \textcolor{gray}{(0.00)} & F & 0.39 \textcolor{gray}{(0.00)} & 6.75 \textcolor{gray}{(2.72)} & 0.46 \textcolor{gray}{(0.00)} & 1.75 \textcolor{gray}{(0.42)} & 0.47 \textcolor{gray}{(0.00)} & 0.93 \textcolor{gray}{(0.02)}  \\

GNP & 0.02 \textcolor{gray}{(0.00)} & 2.96 \textcolor{gray}{(1.77)} & 0.07 \textcolor{gray}{(0.00)} & 1.86 \textcolor{gray}{(0.26)} & 0.11 \textcolor{gray}{(0.00)} & 1.23 \textcolor{gray}{(0.17)} & 0.19 \textcolor{gray}{(0.00)} & 0.62 \textcolor{gray}{(0.02)} \\

\emph{Trivial} & 1.04 \textcolor{gray}{(0.00)} & 1.04 \textcolor{gray}{(0.00)} & 1.06 \textcolor{gray}{(0.00)} & 1.06 \textcolor{gray}{(0.00)} & 0.98 \textcolor{gray}{(0.00)} & 0.98 \textcolor{gray}{(0.00)} & 0.79 \textcolor{gray}{(0.00)} & 0.79 \textcolor{gray}{(0.00)}\\

\bottomrule
\end{tabular}
}
\label{table:matern_full}
\end{table}

\begin{table}[ht]
\caption{\textbf{Synthetic (weakly-periodic).} Comparison of the \emph{interpolation} (INT) and \emph{out-of-input-distribution} (OOID) performance of our RCNP models with different CNP baselines on weakly-periodic function with varying input dimensions. We use normalized Kullback-Leibler divergences as our metric and show mean and \textcolor{gray}{(standard deviation)} obtained from 10 runs with different seeds. \emph{Trivial} refers to a model that predicts a Gaussian distribution utilizing the empirical mean and standard deviation derived from the context outputs. "F" denotes failed attempts that yielded very bad results. Missing entries could not be run. Statistically significantly best results are \textbf{bolded}. }
\small
\centering
\scalebox{0.67}{
\begin{tabular}{ccccccccccc}
\toprule
& \multicolumn{2}{c}{$d_x=1$} & \multicolumn{2}{c}{$d_x=2$} & \multicolumn{2}{c}{$d_x=3$} & \multicolumn{2}{c}{$d_x=5$} & \multicolumn{2}{c}{$d_x=10$}\\
\cmidrule(l){2-3} \cmidrule(l){4-5} \cmidrule(l){6-7} \cmidrule(l){8-9} \cmidrule(l){10-11}
& INT & OOID & INT & OOID & INT & OOID & INT & OOID & INT & OOID  \\
\cmidrule(l){2-3} \cmidrule(l){4-5} \cmidrule(l){6-7} \cmidrule(l){8-9} \cmidrule(l){10-11}

RCNP (sta) & 0.24 \textcolor{gray}{(0.00)} & 0.24 \textcolor{gray}{(0.00)} & 0.24 \textcolor{gray}{(0.00)} & 0.24 \textcolor{gray}{(0.00)} & 0.28 \textcolor{gray}{(0.00)} & 0.28 \textcolor{gray}{(0.00)} & 0.31 \textcolor{gray}{(0.00)} & 0.31 \textcolor{gray}{(0.00)} & 0.31 \textcolor{gray}{(0.00)} & 0.31 \textcolor{gray}{(0.00)}\\

RGNP (sta) & 0.03 \textcolor{gray}{(0.00)} & 0.03 \textcolor{gray}{(0.00)} & 0.05 \textcolor{gray}{(0.00)} & 0.05 \textcolor{gray}{(0.00)} & \textbf{0.05} \textcolor{gray}{(0.00)} & \textbf{0.05} \textcolor{gray}{(0.00)} & \textbf{0.08} \textcolor{gray}{(0.00)} & \textbf{0.08} \textcolor{gray}{(0.00)} & \textbf{0.11} \textcolor{gray}{(0.00)} & \textbf{0.11} \textcolor{gray}{(0.00)}\\

ConvCNP & 0.21 \textcolor{gray}{(0.00)} & 0.21 \textcolor{gray}{(0.00)} & 0.20 \textcolor{gray}{(0.00)} & 0.20 \textcolor{gray}{(0.00)} & - & - & - & - & - & -\\

ConvGNP & \textbf{0.01} \textcolor{gray}{(0.00)} & \textbf{0.01} \textcolor{gray}{(0.00)} & \textbf{0.02} \textcolor{gray}{(0.00)} & \textbf{0.02} \textcolor{gray}{(0.00)} & - & - & - & - & - & -\\

CNP & 0.31 \textcolor{gray}{(0.00)} & 2.88 \textcolor{gray}{(0.91)} & 0.39 \textcolor{gray}{(0.00)} & 1.81 \textcolor{gray}{(0.43)} & 0.39 \textcolor{gray}{(0.00)} & 1.58 \textcolor{gray}{(0.50)} & 0.42 \textcolor{gray}{(0.00)} & 2.20 \textcolor{gray}{(0.81)} & 0.75 \textcolor{gray}{(0.00)} & 1.03 \textcolor{gray}{(0.11)} \\

GNP & 0.06 \textcolor{gray}{(0.00)} & F & 0.07 \textcolor{gray}{(0.00)} & 2.57 \textcolor{gray}{(0.76)} & 0.08 \textcolor{gray}{(0.01)} & 1.47 \textcolor{gray}{(0.27)} & 0.11 \textcolor{gray}{(0.01)} & 0.62 \textcolor{gray}{(0.04)} & 0.22 \textcolor{gray}{(0.01)} & 0.49 \textcolor{gray}{(0.05)}\\

\emph{Trivial} & 0.78 \textcolor{gray}{(0.00)} & 0.78 \textcolor{gray}{(0.00)} & 0.81 \textcolor{gray}{(0.00)} & 0.81 \textcolor{gray}{(0.00)} & 0.80 \textcolor{gray}{(0.00)} & 0.80 \textcolor{gray}{(0.00)} & 0.77 \textcolor{gray}{(0.00)} & 0.77 \textcolor{gray}{(0.00)} & 0.76 \textcolor{gray}{(0.00)} & 0.76 \textcolor{gray}{(0.00)}\\

\bottomrule
\end{tabular}
}
\label{table:weakly_periodic_full}
\end{table}

\begin{table}[ht]
\caption{\textbf{Synthetic (sawtooth).} Comparison of the \emph{interpolation} (INT) and \emph{out-of-input-distribution} (OOID) performance of our RCNP models with different CNP baselines on sawtooth function with varying input dimensions. We use normalized log-likelihoods as our metric and show mean and \textcolor{gray}{(standard deviation)} obtained from 10 runs with different seeds. \emph{Trivial} refers to a model that predicts a Gaussian distribution utilizing the empirical mean and standard deviation derived from the context outputs. "F" denotes failed attempts that yielded very bad results. Missing entries could not be run. Statistically significantly best results are \textbf{bolded}. }
\small
\centering
\scalebox{0.65}{
\begin{tabular}{ccccccccccc}
\toprule
& \multicolumn{2}{c}{$d_x=1$} & \multicolumn{2}{c}{$d_x=2$} & \multicolumn{2}{c}{$d_x=3$} & \multicolumn{2}{c}{$d_x=5$} & \multicolumn{2}{c}{$d_x=10$}\\
\cmidrule(l){2-3} \cmidrule(l){4-5} \cmidrule(l){6-7} \cmidrule(l){8-9} \cmidrule(l){10-11}
& INT & OOID & INT & OOID & INT & OOID & INT & OOID & INT & OOID  \\
\cmidrule(l){2-3} \cmidrule(l){4-5} \cmidrule(l){6-7} \cmidrule(l){8-9} \cmidrule(l){10-11}

RCNP (sta) & 3.03 \textcolor{gray}{(0.06)} & 3.04 \textcolor{gray}{(0.06)} & 1.73 \textcolor{gray}{(0.03)} & 1.74 \textcolor{gray}{(0.03)} & 0.85 \textcolor{gray}{(0.01)} & 0.85 \textcolor{gray}{(0.01)} & 0.44 \textcolor{gray}{(0.00)} & 0.44 \textcolor{gray}{(0.00)} & 0.75 \textcolor{gray}{(0.00)} & 0.75 \textcolor{gray}{(0.00)} \\

RGNP (sta) & 3.90 \textcolor{gray}{(0.09)} & 3.90 \textcolor{gray}{(0.10)} & 2.13 \textcolor{gray}{(0.33)} & 2.13 \textcolor{gray}{(0.32)} & \textbf{1.09} \textcolor{gray}{(0.01)} & \textbf{1.09} \textcolor{gray}{(0.01)} & \textbf{1.13} \textcolor{gray}{(0.05)} & \textbf{1.13} \textcolor{gray}{(0.05)} & \textbf{1.33} \textcolor{gray}{(0.03)} & \textbf{1.32} \textcolor{gray}{(0.03)} \\

ConvCNP & 3.64 \textcolor{gray}{(0.04)} & 3.64 \textcolor{gray}{(0.04)} & 3.66 \textcolor{gray}{(0.04)} & 3.66 \textcolor{gray}{(0.04)} & - & - & - & - & - & -\\

ConvGNP & \textbf{3.94} \textcolor{gray}{(0.11)} & \textbf{3.97} \textcolor{gray}{(0.08)} & \textbf{4.11} \textcolor{gray}{(0.03)} & \textbf{3.99} \textcolor{gray}{(0.11)} & - & - & - & - & - & - \\

CNP & 2.25 \textcolor{gray}{(0.02)} & F & 1.15 \textcolor{gray}{(0.45)} & -3.27 \textcolor{gray}{(4.72)} & 0.36 \textcolor{gray}{(0.28)} & -0.37 \textcolor{gray}{(0.12)} & -0.03 \textcolor{gray}{(0.10)} & -0.22 \textcolor{gray}{(0.03)} & 0.27 \textcolor{gray}{(0.00)} & -2.29 \textcolor{gray}{(0.67)} \\

GNP & 0.83 \textcolor{gray}{(0.04)} & F & 1.04 \textcolor{gray}{(0.09)} & F & 0.23 \textcolor{gray}{(0.13)} & F & 0.02 \textcolor{gray}{(0.05)} & F & 0.03 \textcolor{gray}{(0.04)} & F \\

\emph{Trivial} & -0.27 \textcolor{gray}{(0.00)} & -0.27 \textcolor{gray}{(0.00)} & -0.19 \textcolor{gray}{(0.00)} & -0.19 \textcolor{gray}{(0.00)} & -0.19 \textcolor{gray}{(0.00)} & -0.19 \textcolor{gray}{(0.00)} & -0.18 \textcolor{gray}{(0.00)} & -0.18 \textcolor{gray}{(0.00)} & -0.14 \textcolor{gray}{(0.00)} & -0.14 \textcolor{gray}{(0.00)}\\

\bottomrule
\end{tabular}
}
\label{table:sawtooth_full}
\end{table}

\begin{table}[ht]
\caption{\textbf{Synthetic (mixture).} Comparison of the \emph{interpolation} (INT) and \emph{out-of-input-distribution} (OOID) performance of our RCNP models with different CNP baselines on mixture function with varying input dimensions. We use normalized log-likelihoods as our metric and show mean and \textcolor{gray}{(standard deviation)} obtained from 10 runs with different seeds. \emph{Trivial} refers to a model that predicts a Gaussian distribution utilizing the empirical mean and standard deviation derived from the context outputs. "F" denotes failed attempts that yielded very bad results. Missing entries could not be run. Statistically significantly best results are \textbf{bolded}. }
\small
\centering
\scalebox{0.65}{
\begin{tabular}{ccccccccccc}
\toprule
& \multicolumn{2}{c}{$d_x=1$} & \multicolumn{2}{c}{$d_x=2$} & \multicolumn{2}{c}{$d_x=3$} & \multicolumn{2}{c}{$d_x=5$} & \multicolumn{2}{c}{$d_x=10$}\\
\cmidrule(l){2-3} \cmidrule(l){4-5} \cmidrule(l){6-7} \cmidrule(l){8-9} \cmidrule(l){10-11}
& INT & OOID & INT & OOID & INT & OOID & INT & OOID & INT & OOID  \\
\cmidrule(l){2-3} \cmidrule(l){4-5} \cmidrule(l){6-7} \cmidrule(l){8-9} \cmidrule(l){10-11}

RCNP (sta) & 0.20 \textcolor{gray}{(0.01)} & 0.20 \textcolor{gray}{(0.01)} & 0.17 \textcolor{gray}{(0.00)} & 0.17 \textcolor{gray}{(0.00)} & -0.10 \textcolor{gray}{(0.00)} & -0.10 \textcolor{gray}{(0.00)} & -0.31 \textcolor{gray}{(0.03)} & -0.31 \textcolor{gray}{(0.03)} & -0.32 \textcolor{gray}{(0.00)} & -0.32 \textcolor{gray}{(0.00)}\\

RGNP (sta) & 0.34 \textcolor{gray}{(0.03)} & 0.34 \textcolor{gray}{(0.03)} & 0.46 \textcolor{gray}{(0.02)} & 0.46 \textcolor{gray}{(0.02)} & \textbf{0.37} \textcolor{gray}{(0.01)} & \textbf{0.37} \textcolor{gray}{(0.01)} & \textbf{0.04} \textcolor{gray}{(0.02)} & \textbf{0.04} \textcolor{gray}{(0.02)} & \textbf{-0.11} \textcolor{gray}{(0.02)} & \textbf{-0.11} \textcolor{gray}{(0.02)}\\

ConvCNP & 0.38 \textcolor{gray}{(0.02)} & 0.38 \textcolor{gray}{(0.02)} & 0.63 \textcolor{gray}{(0.01)} & 0.63 \textcolor{gray}{(0.01)} & - & - & - & - & - & -\\

ConvGNP & \textbf{0.49} \textcolor{gray}{(0.15)} & \textbf{0.49} \textcolor{gray}{(0.15)} & \textbf{0.87} \textcolor{gray}{(0.03)} & \textbf{0.87} \textcolor{gray}{(0.03)} & - & - & - & - & - & - \\

CNP & 0.01 \textcolor{gray}{(0.01)} & F & -0.22 \textcolor{gray}{(0.01)} & -7.16 \textcolor{gray}{(3.61)} & -0.57 \textcolor{gray}{(0.11)} & -2.55 \textcolor{gray}{(1.15)} & -0.72 \textcolor{gray}{(0.08)} & -1.71 \textcolor{gray}{(0.55)} & -0.88 \textcolor{gray}{(0.00)} & -1.15 \textcolor{gray}{(0.06)} \\

GNP & 0.17 \textcolor{gray}{(0.01)} & F & -0.01 \textcolor{gray}{(0.01)} & F & -0.17 \textcolor{gray}{(0.00)} & -0.67 \textcolor{gray}{(0.05)} & -0.32 \textcolor{gray}{(0.00)} & -0.72 \textcolor{gray}{(0.03)} & -0.38 \textcolor{gray}{(0.10)} & -0.75 \textcolor{gray}{(0.09)}\\

\emph{Trivial} & -0.78 \textcolor{gray}{(0.00)} & -0.78 \textcolor{gray}{(0.00)} & -0.81 \textcolor{gray}{(0.00)} & -0.81 \textcolor{gray}{(0.00)} & -0.84 \textcolor{gray}{(0.00)} & -0.84 \textcolor{gray}{(0.00)} & -0.86 \textcolor{gray}{(0.00)} & -0.86 \textcolor{gray}{(0.00)} & -0.87 \textcolor{gray}{(0.00)} & -0.87 \textcolor{gray}{(0.00)}\\

\bottomrule
\end{tabular}
}
\label{table:mixture_full}
\end{table}

We also conducted an additional experiment to explore all the possible combinations of comparison functions with the RCNP and FullRCNP models on EQ and Mat\'ern-$\frac{5}{2}$ tasks. Table~\ref{table:sup:synth_iso} provides a comprehensive view of the results for EQ tasks across three input dimensions. 
The empirical results support our previous demonstration that RCNP models are capable of incorporating translation-equivariance with no loss of information: the stationary versions of the RCNP model deliver performance comparable to the stationary FullRCNP models. Conversely, since the RCNP model is not context-preserving for rigid transformations, the isotropic versions of the RCNP model exhibit inferior performance in comparison to the isotropic FullRCNPs.

\begin{table}[ht]
\caption{An ablation study conducted on RCNPs and FullRCNPs using synthetic regression data. We evaluated both the `stationary' (sta) version, which employs the difference comparison function, and the `isotropic' (iso) version, utilizing the distance comparison function, across both RCNP and FullRCNP models. The table provides metrics for both the \emph{interpolation} (INT) and \emph{out-of-input-distribution} (OOID) performance. We show  mean and \textcolor{gray}{(standard deviation)} obtained from 10 runs with different seeds.  Statistically significantly best results are \textbf{bolded}.}
\small
\centering
\scalebox{0.90}{
\begin{tabular}{cccccccc}
\toprule
& & \multicolumn{3}{c}{EQ} & \multicolumn{3}{c}{Mat\'ern-$\frac{5}{2}$}  \\
& & \multicolumn{3}{c}{\small{KL divergence($\downarrow$)}} & \multicolumn{3}{c}{\small{KL divergence($\downarrow$)}}  \\
\cmidrule(l){3-5} \cmidrule(l){6-8}
& & $d_x=2$ & $d_x=3$ & $d_x=5$ & $d_x=2$ & $d_x=3$ & $d_x=5$ \\
\cmidrule(l){3-5} \cmidrule(l){6-8}
\multirow{8}{*}{\rotatebox[origin=c]{90}{INT}}
& RCNP (sta) & 0.26 \textcolor{gray}{(0.00)} & 0.40 \textcolor{gray}{(0.01)} & 0.45 \textcolor{gray}{(0.00)} & 0.30 \textcolor{gray}{(0.00)} & 0.39 \textcolor{gray}{(0.00)} & 0.35 \textcolor{gray}{(0.00)} \\

& RGNP (sta) & \textbf{0.03} \textcolor{gray}{(0.00)} & 0.05 \textcolor{gray}{(0.00)} & 0.11 \textcolor{gray}{(0.00)} & \textbf{0.03} \textcolor{gray}{(0.00)} & 0.05 \textcolor{gray}{(0.00)} & 0.11 \textcolor{gray}{(0.00)} \\

& RCNP (iso) & 0.32 \textcolor{gray}{(0.00)} & 0.41 \textcolor{gray}{(0.00)} & 0.46 \textcolor{gray}{(0.00)} & 0.34 \textcolor{gray}{(0.00)} & 0.39 \textcolor{gray}{(0.00)} & 0.36 \textcolor{gray}{(0.00)}  \\

& RGNP (iso) & 0.12 \textcolor{gray}{(0.00)} & 0.17 \textcolor{gray}{(0.00)} & 0.30 \textcolor{gray}{(0.00)} & 0.12 \textcolor{gray}{(0.00)} & 0.17 \textcolor{gray}{(0.00)} & 0.24 \textcolor{gray}{(0.00)} \\

& FullRCNP (sta) & 0.25 \textcolor{gray}{(0.00)} & 0.30 \textcolor{gray}{(0.00)} & 0.36 \textcolor{gray}{(0.00)} & 0.30 \textcolor{gray}{(0.00)} & 0.31 \textcolor{gray}{(0.00)} & 0.29 \textcolor{gray}{(0.00)} \\

& FullRGNP (sta) & \textbf{0.03} \textcolor{gray}{(0.00)} & \textbf{0.04} \textcolor{gray}{(0.00)} & \textbf{0.09} \textcolor{gray}{(0.00)} & \textbf{0.03} \textcolor{gray}{(0.00)} & \textbf{0.04} \textcolor{gray}{(0.00)} & \textbf{0.10} \textcolor{gray}{(0.00)} \\

& FullRCNP (iso) & 0.26 \textcolor{gray}{(0.00)} & 0.31 \textcolor{gray}{(0.00)} & 0.35 \textcolor{gray}{(0.00)} & 0.30 \textcolor{gray}{(0.00)} & 0.32 \textcolor{gray}{(0.00)} & 0.29 \textcolor{gray}{(0.00)} \\

& FullRGNP (iso) & 0.08 \textcolor{gray}{(0.00)} & 0.14 \textcolor{gray}{(0.00)} & 0.25 \textcolor{gray}{(0.00)} & 0.09 \textcolor{gray}{(0.00)} & 0.16 \textcolor{gray}{(0.00)} & 0.21 \textcolor{gray}{(0.00)}  \\

\midrule \midrule

\multirow{8}{*}{\rotatebox[origin=c]{90}{OOID}}
& RCNP (sta) & 0.26 \textcolor{gray}{(0.00)} & 0.40 \textcolor{gray}{(0.01)} & 0.45 \textcolor{gray}{(0.00)} & 0.30 \textcolor{gray}{(0.00)} & 0.39 \textcolor{gray}{(0.00)} & 0.35 \textcolor{gray}{(0.00)}\\

& RGNP (sta) & \textbf{0.03} \textcolor{gray}{(0.00)} & 0.05 \textcolor{gray}{(0.00)} & 0.11 \textcolor{gray}{(0.00)} & \textbf{0.03} \textcolor{gray}{(0.00)} & 0.05 \textcolor{gray}{(0.00)} & 0.11 \textcolor{gray}{(0.00)}  \\

& RCNP (iso) & 0.32 \textcolor{gray}{(0.00)} & 0.41 \textcolor{gray}{(0.00)} & 0.46 \textcolor{gray}{(0.00)} & 0.34 \textcolor{gray}{(0.00)} & 0.39 \textcolor{gray}{(0.00)} & 0.36 \textcolor{gray}{(0.00)} \\

& RGNP (iso) & 0.12 \textcolor{gray}{(0.00)} & 0.17 \textcolor{gray}{(0.00)} & 0.30 \textcolor{gray}{(0.00)} & 0.12 \textcolor{gray}{(0.00)} & 0.17 \textcolor{gray}{(0.00)} & 0.24 \textcolor{gray}{(0.00)} \\

& FullRCNP (sta) & 0.25 \textcolor{gray}{(0.00)} & 0.30 \textcolor{gray}{(0.00)} & 0.36 \textcolor{gray}{(0.00)} & 0.30 \textcolor{gray}{(0.00)} & 0.31 \textcolor{gray}{(0.00)} & 0.29 \textcolor{gray}{(0.00)} \\

& FullRGNP (sta) & \textbf{0.03} \textcolor{gray}{(0.00)} & \textbf{0.04} \textcolor{gray}{(0.00)} & \textbf{0.09} \textcolor{gray}{(0.00)} & \textbf{0.03} \textcolor{gray}{(0.00)} & \textbf{0.04} \textcolor{gray}{(0.00)} & \textbf{0.10} \textcolor{gray}{(0.00)} \\

& FullRCNP (iso) & 0.26 \textcolor{gray}{(0.00)} & 0.31 \textcolor{gray}{(0.00)} & 0.35 \textcolor{gray}{(0.00)} & 0.30 \textcolor{gray}{(0.00)} & 0.32 \textcolor{gray}{(0.00)} & 0.29 \textcolor{gray}{(0.00)} \\

& FullRGNP (iso) & 0.08 \textcolor{gray}{(0.00)} & 0.14 \textcolor{gray}{(0.00)} & 0.25 \textcolor{gray}{(0.00)} & 0.09 \textcolor{gray}{(0.00)} & 0.16 \textcolor{gray}{(0.00)} & 0.21 \textcolor{gray}{(0.00)} \\
\bottomrule
\end{tabular}
}
\label{table:sup:synth_iso}
\end{table}


\section{Details of Bayesian optimization experiments}\label{appsec:BO}

We report here details and additional results for the Bayesian optimization experiments from Section~\ref{sec:bo} of the main text. We evaluated our proposed approach on a common synthetic optimization test function, the Hartmann function in its three and six dimensional versions \cite[p.185]{torn1989global}.
\footnote{See \url{https://www.sfu.ca/~ssurjano/optimization.html} for their precise definitions and minima.}
Each of two test functions is evaluated on a $d_x$-dimensional hypercube $\mathcal{X} = [0,1]^{d_x}$.

\subsection{Models}

We compared our proposals, RCNP and RGNP, against CNP and GNP as well as their attentive variants ACNP and AGNP. 
A GP with a Mat\'ern-$\frac52$ kernel additionally served as the ``gold standard'' to be compared against. 
We mostly followed the architectural choices discussed in Section~\ref{sup:sec:descriptionmodels}. However, we kept the number of hidden units per layer fixed at $256$ and the relational encoding dimension at $128$, irrespective of the dimensionality $d_x$. 
The models were trained with a learning rate of $3\cdot 10^{-4}$ over up to 500 epochs with $2^{14}$ datasets in each epoch. 
A validation set with $2^{12}$ datasets was used to track the training progress and early stopping was performed if the validation score had not improved for 150 epochs.

\subsection{Data}

The datasets used to train the neural process models were generated based on synthetic functions sampled from a Gaussian process model with kernel $k$, with context and target set sizes as described in Section~\ref{app:synth_data}.
However, while the datasets used in the synthetic regression experiments were generated using fixed kernel setups, the current experiment explored sampling from a set of base kernels.
To explore four different training regimes for each $d_x=\{3,6\}$, we changed how the GP kernel $k(z) = \theta k_0(z/\ell)$ with an output scale $\theta$ and a lengthscale $\ell$ was chosen as follows.
\begin{itemize}
    \item[{\emph{(i)}}] \textbf{Matérn-fixed.} 
    The kernel $k$ is a Matérn-$\frac52$ kernel with fixed $\ell = \sqrt{d_x}/4$ and $\theta=1$. 
    \item[{\emph{(ii)}}] \textbf{Matérn-sampled.} 
    The kernel $k$ is a Matérn-$\frac52$ kernel, with 
    \begin{equation*}
        \ell \sim \mathcal{LN}\left(\log(\sqrt{d_x}/4), 0.5^2\right), \quad \theta \sim\mathcal{LN}(0,1),
    \end{equation*}
    where $\mathcal{LN}$ is a log-Normal distribution, i.e., $\log(\ell)$ follows a standard-Normal distribution.
    \item[{\emph{(iii)}}] \textbf{Kernel-single.} 
    The kernel $k$ is sampled as
    \begin{align*}
        k_0 &\sim \mathcal{U}\left(\left\{\text{EQ}, \text{Mat\'ern-}\tfrac12, \text{Mat\'ern-}\tfrac32, \text{Mat\'ern-}\tfrac52\right\}\right),\\
        \ell &\sim \mathcal{LN}\left(\log(\sqrt{d_x}/4), 0.5^2\right),\\ 
        \theta &\sim\mathcal{LN}(0,1),
    \end{align*}
    where $\mathcal{U}$ is a uniform distribution over the set.
    \item[{\emph{(iv)}}] \textbf{Kernel-multiple.} 
    The kernel $k$ is sampled as
    \begin{align*}
        k_1,k_2,k_3,k_4 &\sim \mathcal{U}\left(\left\{\texttt{NA},\text{EQ}, \text{Mat\'ern-}\tfrac12, \text{Mat\'ern-}\tfrac32, \text{Mat\'ern-}\tfrac52\right\}\right),\\
        \ell_i &\sim \mathcal{LN}\left(\log(\sqrt{d_x}/4), 0.5^2\right), \qquad i\in\{1,\ldots 4\},\\
        \theta_j &\sim \mathcal{LN}(0,1),\qquad j\in\{1,2\},\\
        k&=\theta_1k_1(\ell_1)\cdot k_2(\ell_2) + \theta_2k_3(\ell_3)\cdot k_4(\ell_4),
    \end{align*}
    where $\texttt{NA}$ indicates that no kernel is chosen, i.e., a term in the sum can consist of a single kernel or be missing completely. This setup is based on the kernel learning setup in \cite{simpson2021kernel}.
\end{itemize}

See Section~\ref{app:synth_data} for the precise definition of each kernel.
Note that each setting is strictly more general than the one before and entails its predecessor as a special case. Setting $\emph{(iv)}$ was reported in the main text.


\subsection{Bayesian optimization}
After training, the neural process models were evaluated in a Bayesian optimization task. Starting from a common set of five random initial observations, $\x_0 \in \mathcal{U}(0,1)^{d_x}$, each optimization run performed fifty steps of requesting queries by maximizing the expected improvement acquisition function:
\begin{equation*}
   \text{EI}(x) = \mathds{E}\left[\max(f(x) - f_\text{best}, 0)\big.\right], 
\end{equation*}
where $f(\cdot)$ is the negative target function (as we wanted to minimize $f$), $f_\text{best}$ is the current observed optimum, and the expectation is calculated based on the neural process or GP model predictions. The acquisition function was optimized via PyCMA \cite{nikolaus_hansen_2023_7573532}, the Python implementation of \emph{Covariance Matrix Adaptation Evolution Strategy} (CMA-ES) \cite{hansen2001completely}. Each CMA-ES run was initialized from the best acquisition function value estimated over a random subset of $100$ locations from the hypercube. After each step, the newly queried point and its function evaluation were used to update the surrogate model. For neural process models, this means that the point was added to the context set, whereas for the
the GP model, this means that the model was also reinitialized and its hyperparameters refit via type-II maximum likelihood estimation. The neural process model parameters were kept fixed without retraining throughout the Bayesian optimization task.

The results reported in Figure~\ref{fig:bo} in the main paper were computed using a single pretrained neural process model and ten random Bayesian optimization restarts for each model.
For the full results presented here, each neural process model was pretrained three times and used as a surrogate model in ten optimization runs initialized with different sets of five observations, giving us a total of thirty optimization runs with each model. 
See Figure~\ref{fig:app-hartman3d} and Figure~\ref{fig:app-hartman6d} for the results and their respective discussion.


\subsection{Computation time} 
Each model was trained on a single GPU, requiring about \SIrange{90}{120}{\minute} per run. The subsequent CMA-ES optimization and analysis took about \SIrange{60}{90}{\minute} running on a CPU with eight cores per experiment.
The total runtime was approximately \SI{280}{h} of GPU time and approximately \SI{7}{h} on the CPUs.

\begin{figure}[ht]
    \centering
     \includegraphics[width=0.48\textwidth]{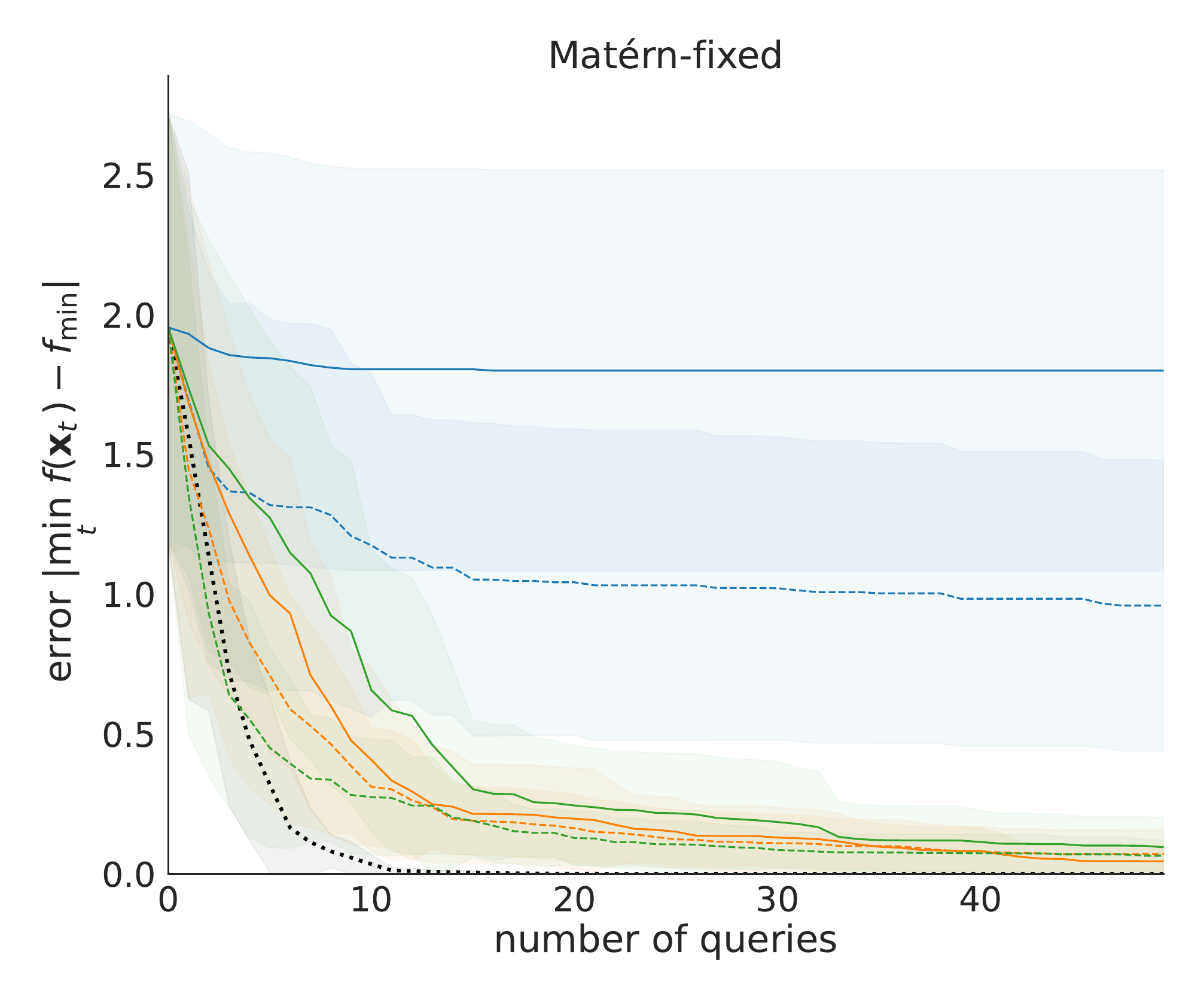}
     \includegraphics[width=0.48\textwidth]{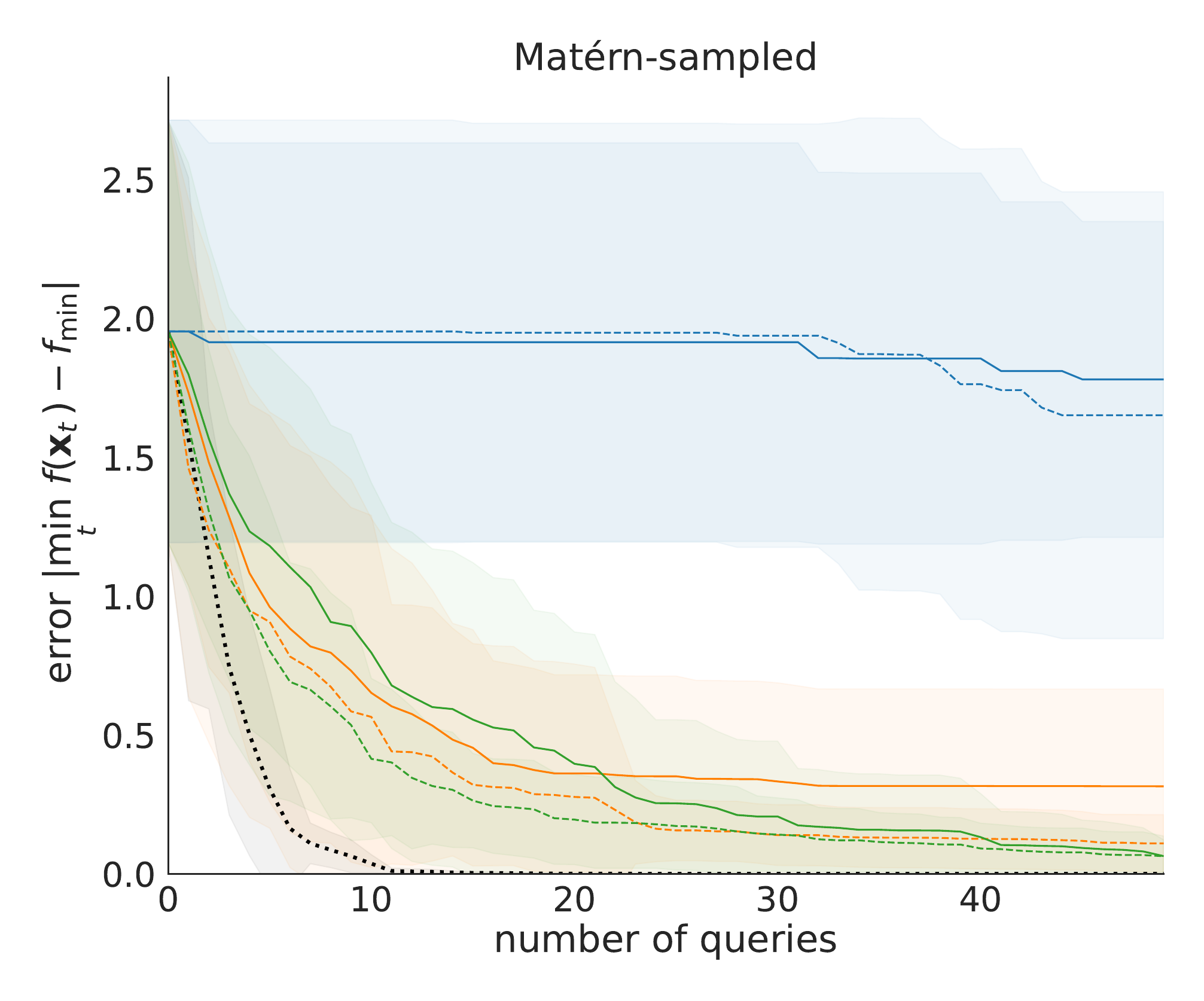}
     \includegraphics[width=0.48\textwidth]{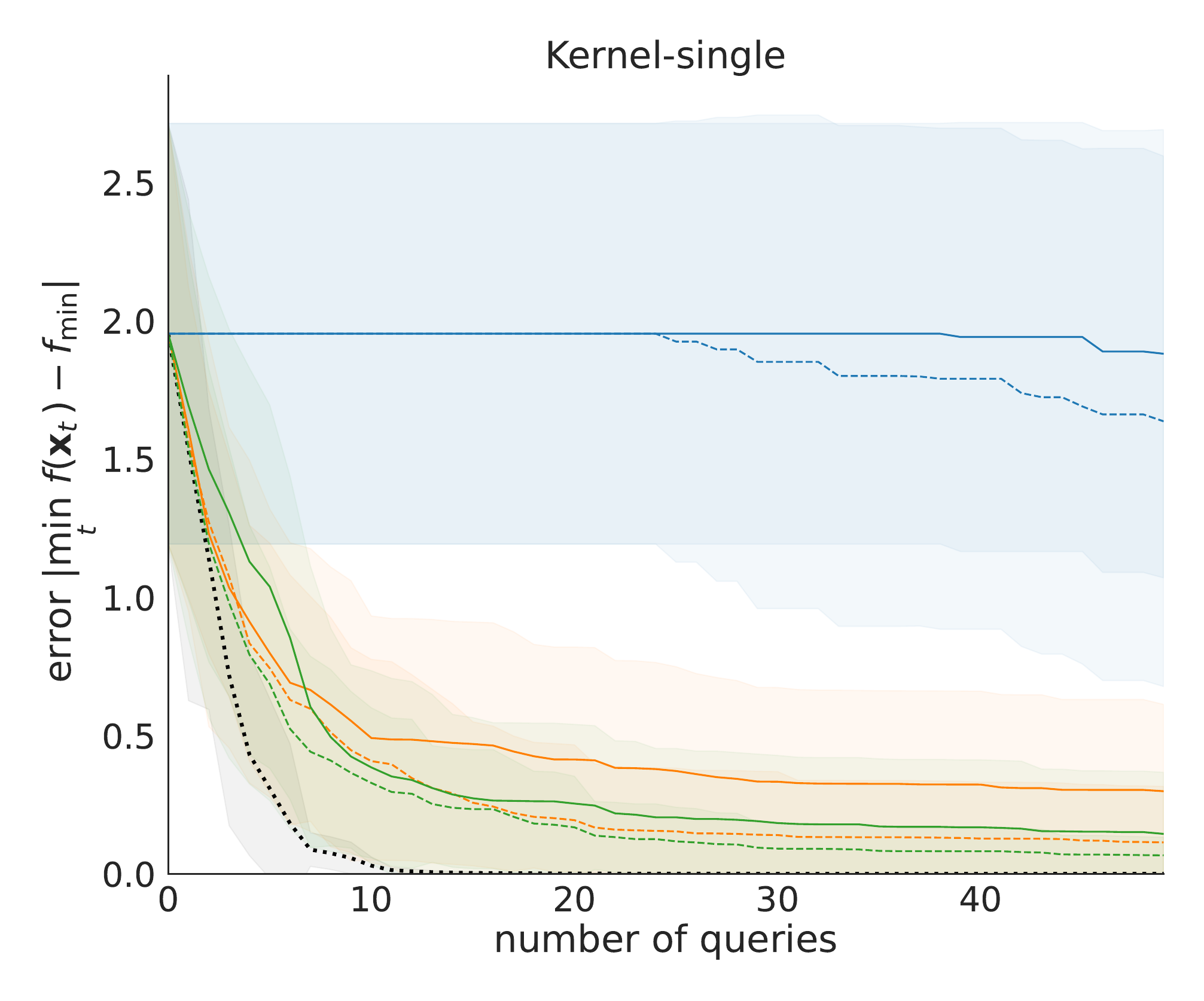}
     \includegraphics[width=0.48\textwidth]{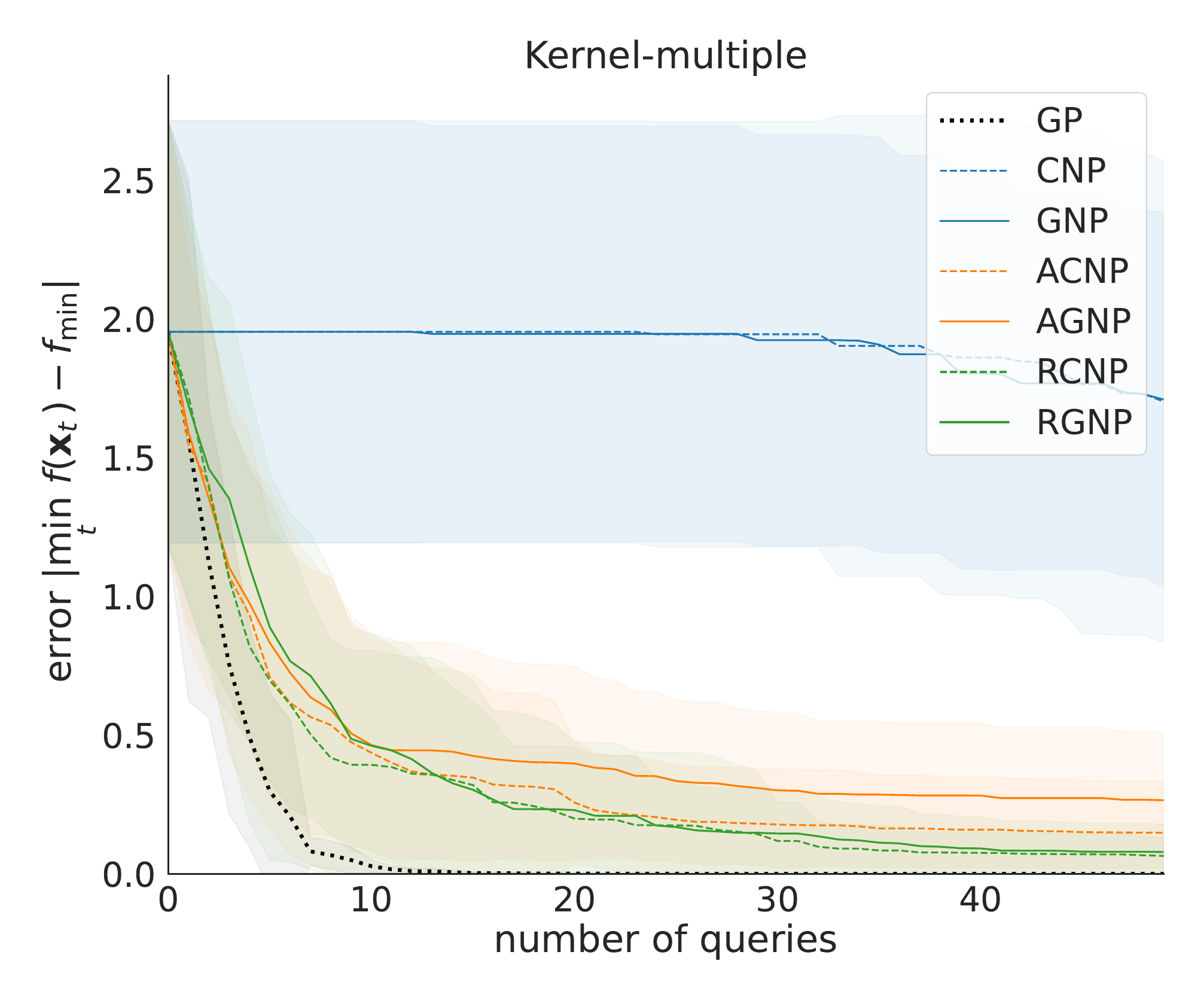}
    \caption{\textbf{Ablations on Hartmann 3d.}
GNP struggled throughout all four pretraining scenarios, while its attentive and relational variants (AGNP, RGNP) were able to benefit from the additional flexibility, greatly improving their performance. 
CNP struggled to learn as the flexibility in the context sets increases, while its attentive and relational counterparts adapted themselves and were able to converge to optimal performance even in the most restrictive pretraining scenario.
    }
    \label{fig:app-hartman3d}
\end{figure}

\begin{figure}[ht]
    \centering
     \includegraphics[width=0.48\textwidth]{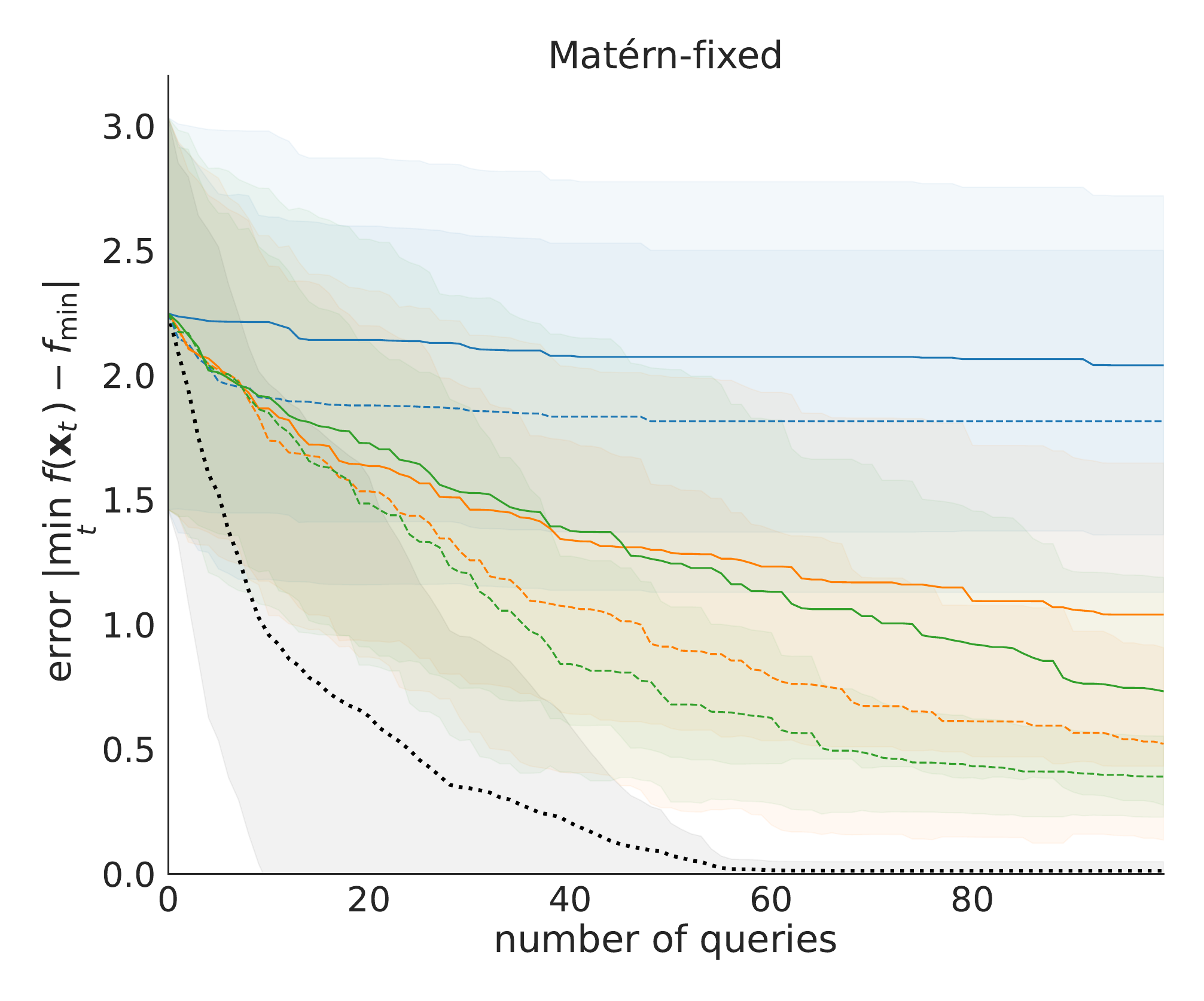}
     \includegraphics[width=0.48\textwidth]{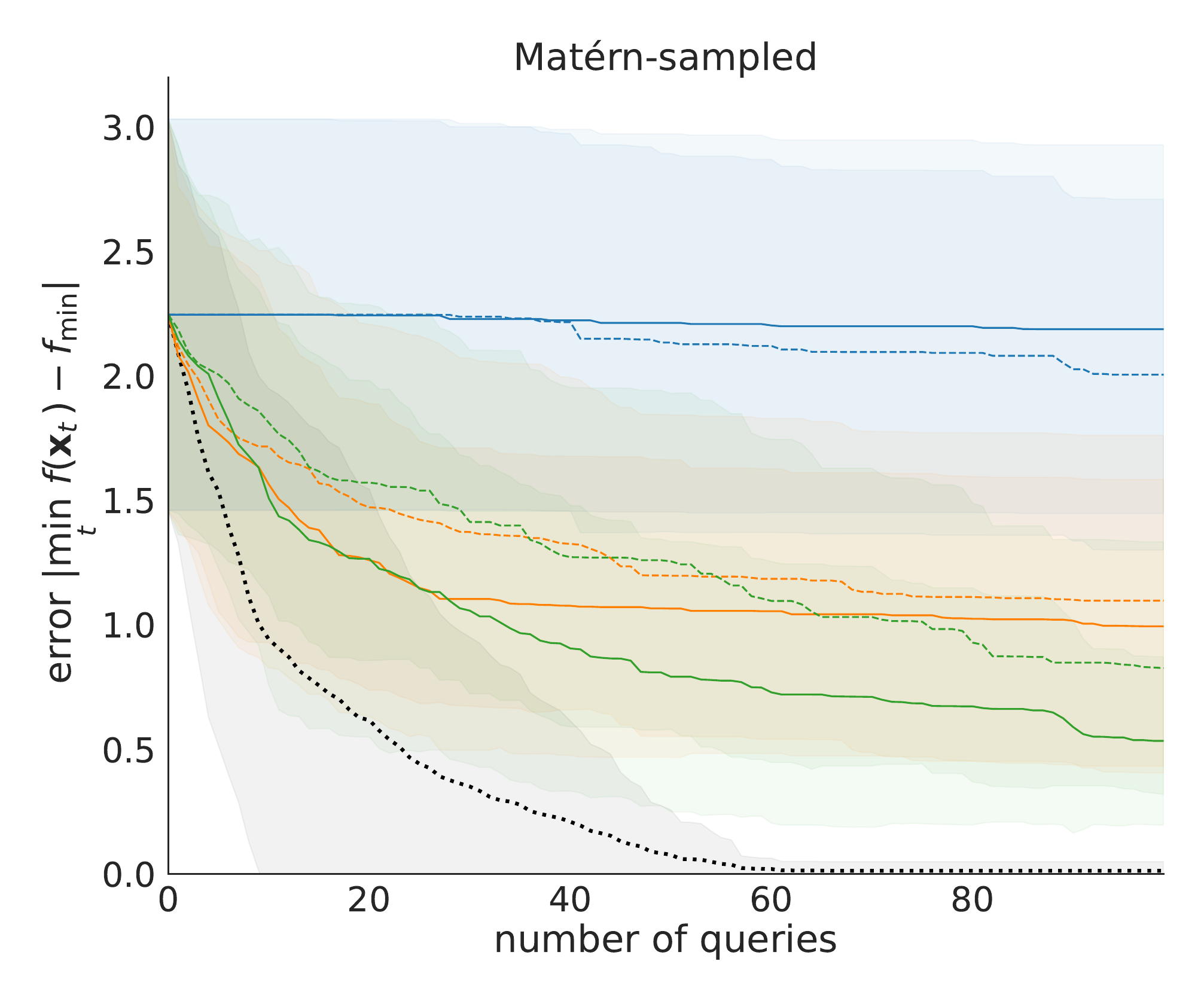}
     \includegraphics[width=0.48\textwidth]{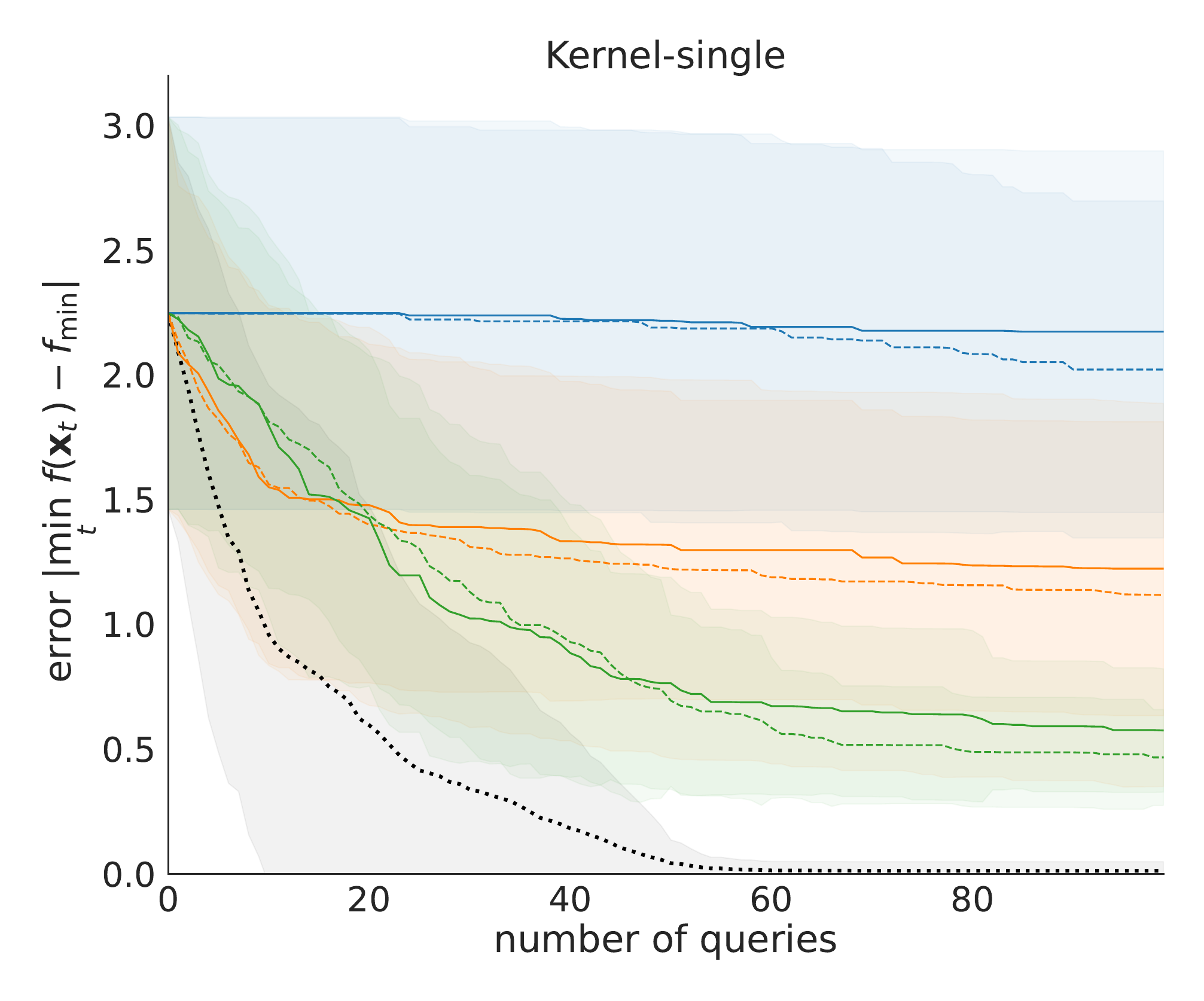}
     \includegraphics[width=0.48\textwidth]{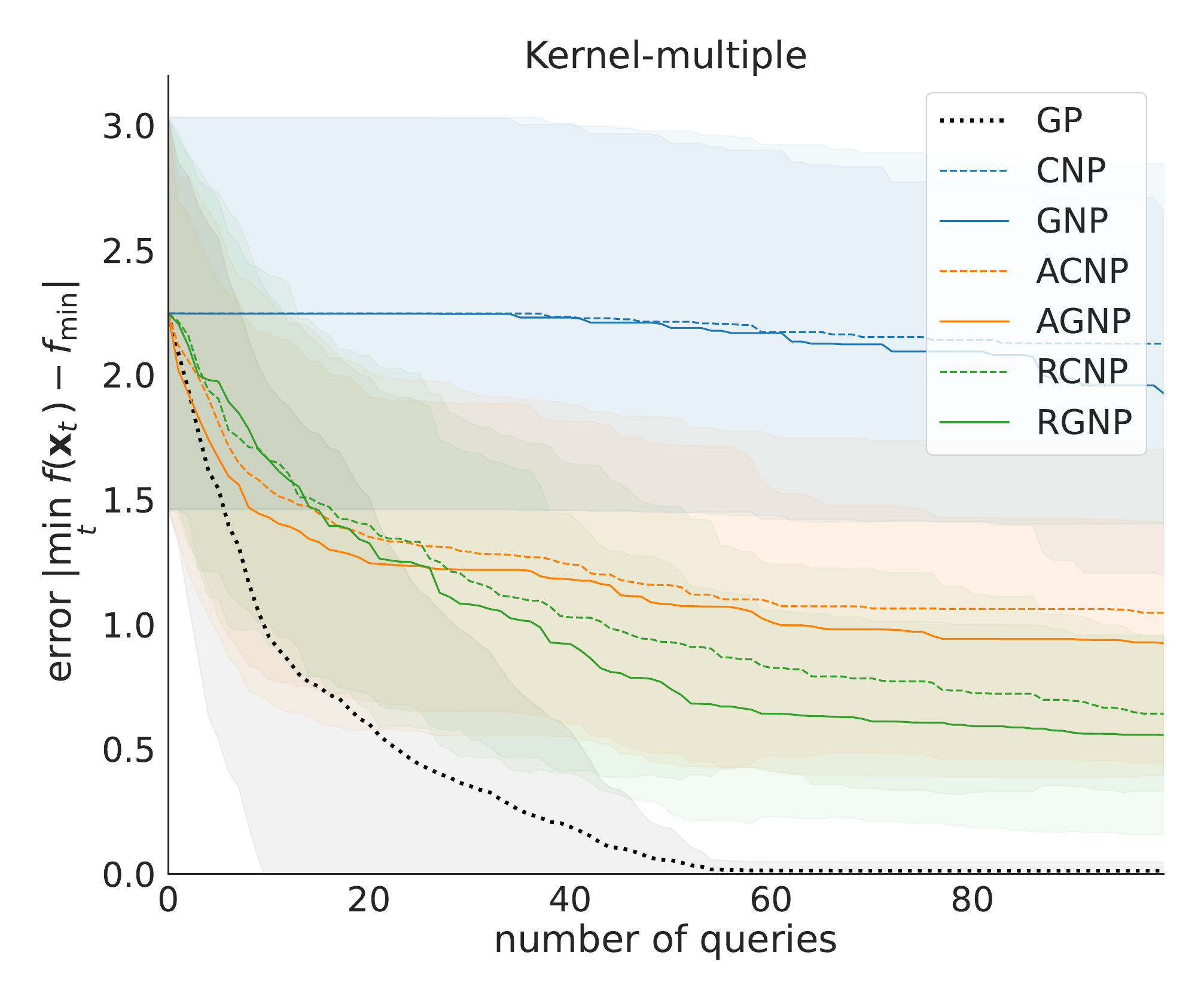}
    \caption{\textbf{Ablations on Hartmann 6d.} 
    As in the 3d case (see Figure~\ref{fig:app-hartman3d}), CNP and GNP struggled to learn at all. Similarly, the advanced attentive and relational CNP models converged to their optimum even with the most restrictive prior kernel set. As the flexibility increased, RGNP was able to benefit the most and improved both upon AGNP as well as their conditional NP variants (ACNP, RCNP). 
    }
    \label{fig:app-hartman6d}
\end{figure}

\section{Details of Lotka--Volterra model experiments}
\label{app:lv}

We report here details and additional results for the Lotka--Volterra model from Section \ref{sec:experiment_lv} of the main text.

\subsection{Models}
\label{app:lv_models}

The RCNP architecture used in this experiment is modeled on the multi-output CNP architecture proposed in previous work \cite[Appendix F]{bruinsma2023autoregressive}. The multi-output CNP encodes the context set associated with each output as a 256-dimensional vector and concatenates the vectors. The encoder network used in this model is an MLP with three hidden layers and 256 hidden units per layer, and the decoder network is an MLP with six hidden layers and 512 hidden units per layer. The RCNP model replicates this architecture by encoding the targets with respect to the context set associated with each output as 256-dimensional vectors, concatenating the vectors, and using encoder and decoder networks that are the same as the encoder and decoder networks used in the CNP model. We note that the encoder network dimensions are the same as used in the synthetic regression experiments (Section~\ref{sup:sec:descriptionmodels}) while the decoder network has more hidden units per layer in the multi-output architectures. The encoder and decoder networks use ReLU activation functions.

The models evaluated in these experiments include RCNP, CNP, ACNP, and ConvCNP as well as RGNP, GNP, AGNP, and ConvGNP. RCNP and RGNP models share the architecture described above while the other models are evaluated with the architectures proposed in previous work \cite{bruinsma2023autoregressive}. However, in contrast to previous work, where CNP, GNP, ACNP, and AGNP models encoded the context associated with each output with a separate encoder, here we used a shared encoder in RCNP, RGNP, CNP, GNP, ACNP, and AGNP.  We made this choice based on preliminary experiments carried out with synthetic multi-output data, where we observed that using a shared encoder improved both RCNP and CNP results. We note that all models are extended with a nonlinear transformation to take into account that the population counts modeled in this experiment are positive \cite{bruinsma2023autoregressive}.

The neural process models were trained with simulated data and evaluated with simulated and real data. The models were trained for 200 epochs with  $2^{14}$ datasets in each epoch and learning rate~$10^{-4}$. The validation sets used in training included $2^{12}$ datasets also generated based on the simulated data. The evaluation sets generated based on simulated data included $2^{12}$ datasets each and the evaluation sets generated based on real data $2^8$ dataset each, and both simulated and real data were used to generate three separate evaluation sets to represent the three task types considered in this experiment: interpolation, forecasting, and reconstruction.

\subsection{Data}

The datasets used in this task included simulated and real datasets. The simulated datasets were constructed based on simulated predator and prey time series generated using stochastic Lotka--Volterra equations. The Lotka--Volterra equations introduced independently by \citet{lotka1925elements} and \citet{Volterra1926Flucutations} model the joint evolution between prey and predator populations as
\begin{equation*}
    \left\{\begin{array}{l}
     \dot{u} = \alpha u  - \beta u v\\
      \dot{v} = \delta u v -\gamma v
\end{array} \right. ,
\end{equation*}
where $u$ denotes the prey, $v$ denotes the predators, $\alpha$ is the birth rate of the prey, $\beta$ is the rate at which predators kill prey, $\delta$ the rate at which predators reproduce when killing prey, and $\gamma$ is the death rate of the predators; the dot denotes a temporal derivative, i.e. $\dot{u} \equiv \frac{\text{d}}{\text{d}t} u$. In this work, we generated simulated data with a stochastic version proposed by \cite{bruinsma2023autoregressive},
\begin{equation*}
    \left\{\begin{array}{l}
    \mathrm{d} u = \alpha u \,\mathrm{d}t  - \beta u v \,\mathrm{d}t + u^{1/6}\sigma \,\mathrm{d}W^u\\
    \mathrm{d} v = \delta u v \,\mathrm{d}t -\gamma v \,\mathrm{d}t +v^{1/6}\sigma \,\mathrm{d}W^v
\end{array} \right. ,
\end{equation*}
where $W^u$ and $W^v$ are two independent Brownian motions that add noise to the population trajectories and $\sigma > 0$ controls the noise magnitude.

We simulated trajectories from the above equation using the code released with the previous work \cite{bruinsma2023autoregressive}.
The parameters used in each simulation were sampled from: $\alpha \sim \mathcal{U}(0.2,0.8)$, $\beta \sim \mathcal{U}(0.04,0.08)$, $\delta \sim \mathcal{U}(0.8, 1.2)$, $\gamma \sim \mathcal{U}(0.04,0.08)$, and $\sigma \sim \mathcal{U}(0.5,10)$. The initial population sizes were sampled from $u, v \sim \mathcal{U}(5, 100)$, and the simulated trajectories were also scaled by a factor $r \sim \mathcal{U}(1,5)$. To construct a dataset, 150--250 observations were sampled from the simulated prey time series and 150--250 observations from the simulated predator series. The selected observations were then divided into context and target observations based on the selected task type which was fixed in the evaluation sets and selected at random in the training and validation data.

The datasets representing interpolation, forecasting, or reconstruction tasks were created based on the selected simulated observations as follows. To create an interpolation task, 100 predator and 100 prey observations, selected at random, were included in the target set, and the rest were included in the context set. To create a forecasting task, a time point was selected at random between 25 and 75 years, and all observations prior to the selected time point were included in the context set and the rest were included in the target set. To create a reconstruction task, a random choice was made between prey and predators such that the observations from the selected population were all included in the context set while the observations from the other population were divided between the context and target sets by choosing at random a time point between 25 and 75 years and including all observations prior to the selected time point in the context set and the rest in the target set.

The real datasets used to evaluate the neural process models in the sim-to-real tasks were generated based on the famous hare--lynx data \cite{odum2005fundamentals}. The hare and lynx time series include 100 observations at regular intervals. The observations were divided into context and target sets based on selected task types, and the same observations were used to generate separate evaluation sets for each task. The forecasting and reconstruction task was created with the same approach that was used with simulated data, while the interpolation task was created by choosing at random 1--20 predator observations and 1--20 prey observations that were included in the target set and including the rest in the context set.

\subsection{Full results}
\label{sec:lv_full}

We evaluate and compare the neural process models based on normalized log-likelihood scores calculated based on each evaluation set and across training runs. We compare RCNP, CNP, ACNP, and ConvCNP models in Table~\ref{tab:lv_full}~(a) and RGNP, GNP, AGNP, and ConvGNP models with 32 basis functions in Table~\ref{tab:lv_full}~(b) and with 64 basis functions in Table~\ref{tab:lv_full}~(c). We observe that the best model depends on the task, and comparison across the tasks indicates that RCNP or RGNP are not consistently better or worse than their convolutional and attentive counterparts. This is not the case with CNP and GNP, and we observe that the other models including RCNP and RGNP are almost always substantially better than CNP or GNP. The models with 64 basis functions performed on average better than the models with 32 basis functions, but we observed that some RGNP, GNP, and AGNP training runs did not converge and most GNP runs did not complete 200 epochs due to numerical errors when 64 basis functions were used. We therefore reported results with 32 basis functions in the main text.

\begin{table}[ht]
    \centering
    \caption{Normalized log-likelihood scores in the Lotka--Volterra experiments (higher is better). We compared models that predict (a) mean and variance and models that also predict a low-rank covariance matrix with (b) 32 basis functions or (c) 64 basis functions. The mean and \textcolor{gray}{(standard deviation)} reported for each model are calculated based on 10 training outcomes evaluated with the same simulated (S) and real (R) learning tasks. The tasks include \emph{interpolation} (INT), \emph{forecasting} (FOR), and \emph{reconstruction} (REC). Statistically significantly best results are \textbf{bolded}. RCNP and RGNP models perform on par with their convolutional and attentive counterparts, and nearly always substantially better than the standard CNPs and GNPs, respectively.
    }
    \label{tab:lv_full}
    \small
    \renewcommand{\arraystretch}{1.3}
    \adjustbox{max width=\textwidth}{
    \begin{tabular}{cc}
    (a) &
    \begin{tabular}{lcccccc}
    \toprule
    & INT (S) & FOR (S) & REC (S) & INT (R) & FOR (R) & REC (R) \\ \toprule  
    RCNP & -3.57 \textcolor{gray}{(0.02)} & -4.85 \textcolor{gray}{(0.00)} & -4.20 \textcolor{gray}{(0.01)} & -4.24 \textcolor{gray}{(0.02)} & -4.83 \textcolor{gray}{(0.03)} & -4.55 \textcolor{gray}{(0.05)} \\
    ConvCNP  & \textbf{-3.47} \textcolor{gray}{(0.01)} & \textbf{-4.85} \textcolor{gray}{(0.00)} & \textbf{-4.06} \textcolor{gray}{(0.00)} & \textbf{-4.21} \textcolor{gray}{(0.04)} & -5.01 \textcolor{gray}{(0.02)} & -4.75 \textcolor{gray}{(0.05)} \\
    ACNP & -4.04 \textcolor{gray}{(0.06)} & -4.87 \textcolor{gray}{(0.01)} & -4.36 \textcolor{gray}{(0.03)} & \textbf{-4.18} \textcolor{gray}{(0.05)} & \textbf{-4.79} \textcolor{gray}{(0.03)} & \textbf{-4.48} \textcolor{gray}{(0.02)} \\
    CNP  & -4.78 \textcolor{gray}{(0.00)} & -4.88 \textcolor{gray}{(0.00)} & -4.86 \textcolor{gray}{(0.00)} & -4.74 \textcolor{gray}{(0.01)} & \textbf{-4.81} \textcolor{gray}{(0.02)} & -4.70 \textcolor{gray}{(0.01)} \\
    \midrule \midrule
    \end{tabular}\\
    (b) &
    \begin{tabular}{lcccccc}
    RGNP & -3.51 \textcolor{gray}{(0.01)} & \textbf{-4.27} \textcolor{gray}{(0.00)} & -3.76 \textcolor{gray}{(0.00)} & -4.31 \textcolor{gray}{(0.06)} & \textbf{-4.47} \textcolor{gray}{(0.03)} & -4.39 \textcolor{gray}{(0.11)} \\
    ConvGNP & \textbf{-3.46} \textcolor{gray}{(0.00)} & -4.30 \textcolor{gray}{(0.00)} & \textbf{-3.67} \textcolor{gray}{(0.01)} & \textbf{-4.19} \textcolor{gray}{(0.02)} & -4.61 \textcolor{gray}{(0.03)} & -4.62 \textcolor{gray}{(0.11)} \\
    AGNP & -4.12 \textcolor{gray}{(0.17)} & -4.35 \textcolor{gray}{(0.09)} & -4.05 \textcolor{gray}{(0.19)} & -4.33 \textcolor{gray}{(0.15)} & \textbf{-4.48} \textcolor{gray}{(0.06)} & -\textbf{4.29} \textcolor{gray}{(0.10)} \\
    GNP & -4.62 \textcolor{gray}{(0.03)} & -4.38 \textcolor{gray}{(0.04)} & -4.36 \textcolor{gray}{(0.07)} & -4.79 \textcolor{gray}{(0.03)} & -4.72 \textcolor{gray}{(0.04)} & -4.72 \textcolor{gray}{(0.04)} \\
    \midrule \midrule
    \end{tabular}\\
    (c) &
    \begin{tabular}{lcccccc}
    RGNP   & -3.54 \textcolor{gray}{(0.06)} & \textbf{-4.15} \textcolor{gray}{(0.10)} & -3.75 \textcolor{gray}{(0.10)} & -4.30 \textcolor{gray}{(0.04)} & \textbf{-4.44} \textcolor{gray}{(0.02)} & \textbf{-4.38} \textcolor{gray}{(0.10)} \\
    ConvGNP & \textbf{-3.46} \textcolor{gray}{(0.00)} & \textbf{-4.15} \textcolor{gray}{(0.01)} & \textbf{-3.66} \textcolor{gray}{(0.01)} & \textbf{-4.20} \textcolor{gray}{(0.03)} & -4.57 \textcolor{gray}{(0.05)} & -4.64 \textcolor{gray}{(0.12)}\\
    AGNP & -4.06 \textcolor{gray}{(0.10)} & \textbf{-4.16} \textcolor{gray}{(0.04)} & -3.95 \textcolor{gray}{(0.11)} & -4.34 \textcolor{gray}{(0.10)} & \textbf{-4.41} \textcolor{gray}{(0.06)} & \textbf{-4.30} \textcolor{gray}{(0.08)} \\
    GNP   & -4.63 \textcolor{gray}{(0.03)} & -4.36 \textcolor{gray}{(0.05)} & -4.40 \textcolor{gray}{(0.05)} & -4.82 \textcolor{gray}{(0.01)} & -4.75 \textcolor{gray}{(0.04)} & -4.75 \textcolor{gray}{(0.03)} \\
    \bottomrule
    \end{tabular}
    \end{tabular}}
\end{table}

\subsection{Computation time}

When the models were trained on a single GPU, each training run with RCNP and RGNP took around 6.5--7 hours and each training run with the reference models around 2--3 hours depending on the model. The results presented in this supplement and the main text required 120 training runs, and 10--15 training runs had been carried out earlier to confirm as a sanity check our ability to reproduce the results presented in previous work. The multi-output models used in this experiment were additionally studied in preliminary experiments using synthetic regression data. The computation time used in all the experiments is included in the estimated total reported in Section~\ref{app:total_compute}.

\section{Details of Reaction-Diffusion model experiments}
\label{appsec:RD}

We report here details and additional results for the Reaction-Diffusion model from Section \ref{sec:experiment_rd} of the main text.

\subsection{Models}

We compared RCNP, RGNP, ACNP, AGNP, CNP, and GNP models trained using simulated data with input dimensions $d_x=\{3, 4\}$. We mostly used the architectures described in Section~\ref{sup:sec:descriptionmodels}, but since less training data was used in this experiment, we reduced the number of hidden layers in the decoder network to 4.
The models were trained for $100$ epochs with $2^{10}$ datasets in each epoch and learning rate $10^{-4}$. The validation sets used in training included $2^{8}$ datasets and the evaluation sets used to compare the models in completion and forecasting tasks included $2^{8}$ datasets each.

\subsection{Data}
\label{app:sec:rd_data}

We generated simulated data using a simplified equation inspired by the cancer evolution equation from \citet{Gatenby1996Reaction}. The equation involves three quantities: the healthy cells ${(\z,t)\mapsto u(\z,t)}$, the cancerous cells $(\z,t) \mapsto v(\z,t)$, and an acidity measure $(\z,t) \mapsto w(\z,t)$, with $t \in [0,T]$ and $\z \in E \subset \mathbb{R}^2$. Their temporal and spatial dynamics are described by:
\begin{equation} \label{eq:sup:rd}
\left\{\begin{array}{l}
     \dot{u} = r_u u(1 - u/k_u) - d_u w  \\
      \dot{v} = r_v v (1-v/k_v) + D_v\nabla^2 v \\
       \dot{w} = r_w v - d_w w + D_w \nabla^2 v
\end{array} \right. ,
\end{equation}
where $r_u, \ r_v,$ and  $r_w$ are apparition rates, $\ k_u$ and $ \ k_v$ control the maximum for the number of healthy and cancerous cells, $d_u$ and $d_w$ are death rates relative to other species, and $D_v$ and $D_w$ are dispersion rates.

We simulate trajectories from Eq.~\ref{eq:sup:rd} on a discretized space-time grid using the SpatialPy simulator~\citep{Drawert2016Stochastic}. 
For the purpose of our experiment, we selected realistic parameter ranges that would also produce quick evolution. The selected parameter ranges are therefore
close to \cite{Gatenby1996Reaction}, but for example the diffusion rates were increased. The parameters for the simulations are sampled according to the following distributions: $k_u =10$, $k_v=100$, $r_u\sim \mathcal{U}(0.0027,0.0033)$, $r_v\sim \mathcal{U}(0.027,0.033)$, $r_w\sim \mathcal{U}(1.8,2.2)$, $D_v\sim U(0.0009,0.0011)$, $D_w\sim \mathcal{U}(0.009,0.011)$, $d_w=0$, $d_u\sim \mathcal{U}(0.45,0.55)$. 
These parameters lead to a relatively slowly-growing number of cancerous cells, with a fast decay in healthy cells due to acid spread, as depicted in Figure \ref{fig:simucancer}. The process starts with the healthy cells count $u$ constant across the whole space, cancerous cells count $v$ zero except for one cancerous cell at a uniformly random position, and acid count $w$ zero across the whole space.

\begin{figure}[ht]
    \centering
    \includegraphics[scale=0.4]{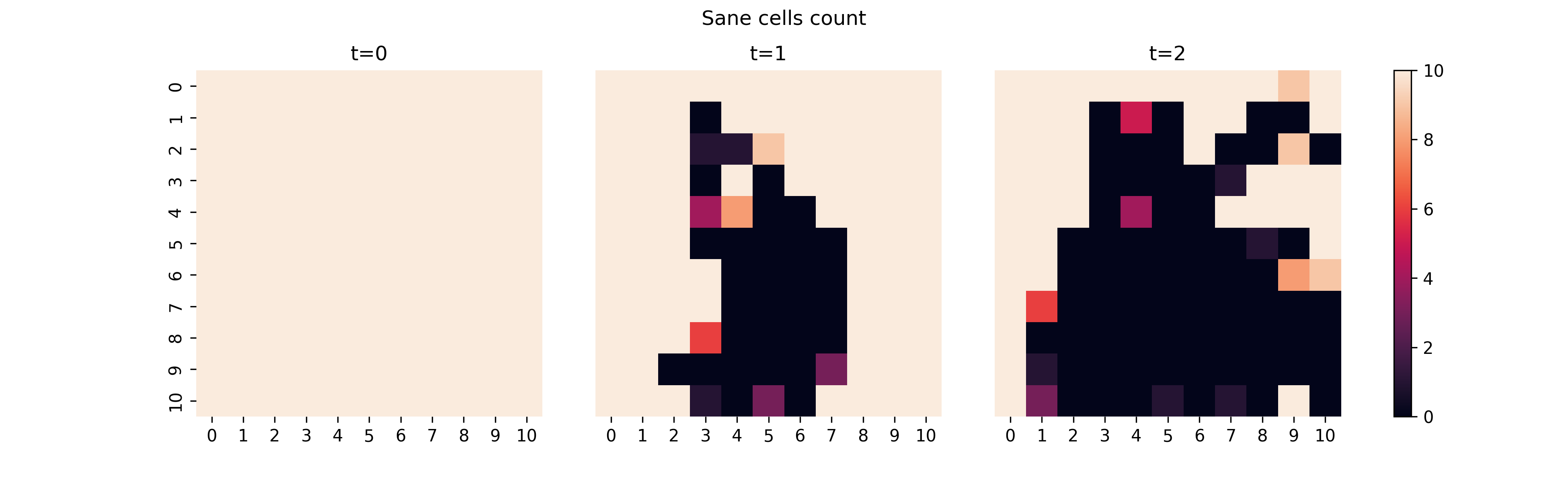}
    \includegraphics[scale=0.4]{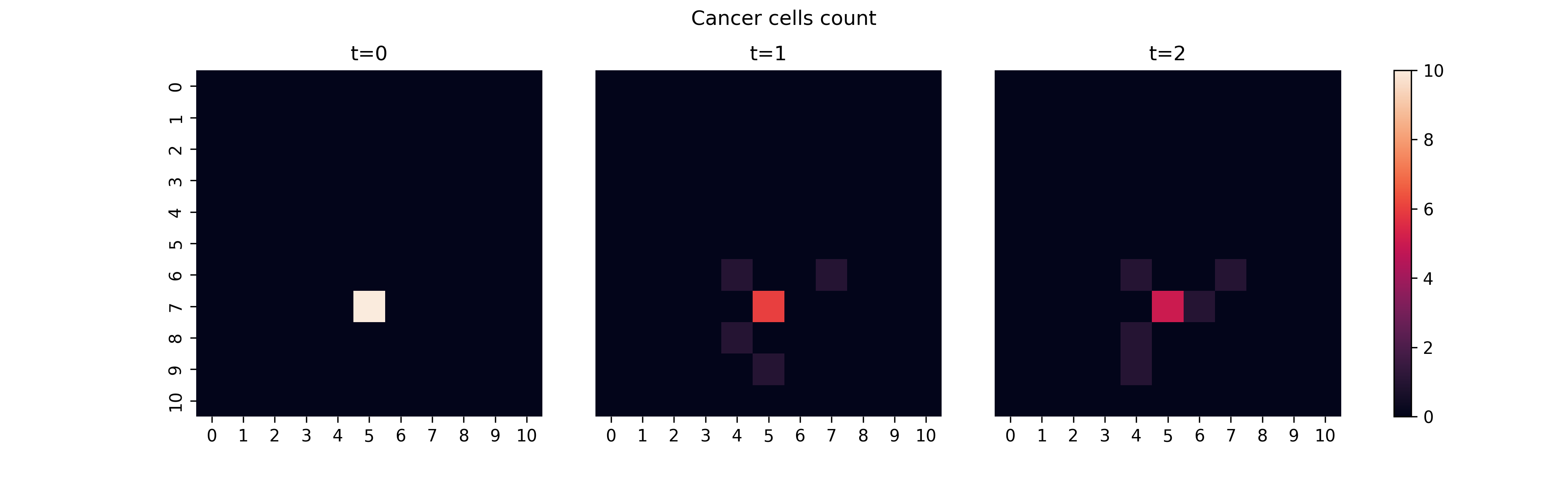}
    \includegraphics[scale=0.4]{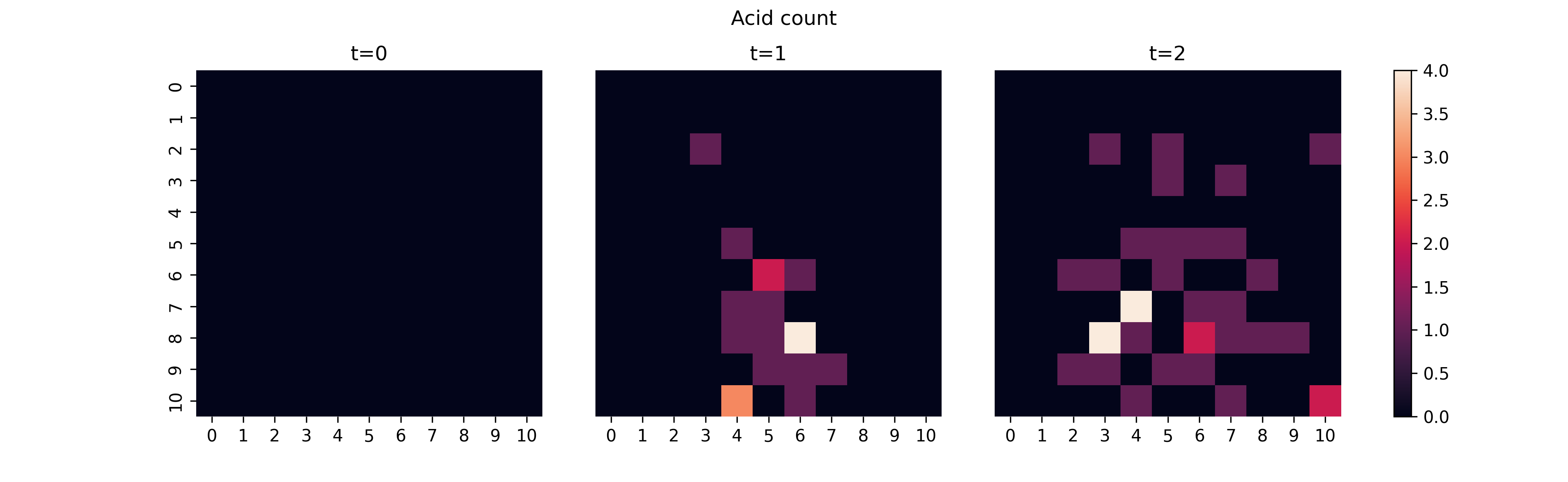}
    \caption{\textbf{Example simulation of the Reaction-Diffusion model at times $t = 0, 1 ,2$.}  Each row represent a different quantity in the model (from top to bottom, healthy cells, cancerous cells, and acid). At time $t = 0$, the population of healthy cells is $10$ at each point of space, and a single point is occupied by additional cancerous cells. After this time, cancerous cells start spreading and slowly multiplicating, while acid spreads faster, killing healthy cells along its way.}
    \label{fig:simucancer}
\end{figure}

We generated a total $5\cdot 10^3$ trajectories on a $11\times 11$ grid at times $t=0,\dots,5$. We used the first $4\cdot 10^3 $ trajectories to generate datasets for model training and the other $10^3$ to generate datasets for evaluation. The datasets were generated by sampling from the trajectories as follows. First, we extracted the population counts at three consecutive time points from a trajectory and time point selected at random. We then sampled 50--200 context points and 100--200 target points from this subsample to create a dataset representing either a completion or forecasting task. For the completion task, the context and target points were sampled from the subsample at random, and for the forecasting task, all context points were sampled from the first two time points, and all target points from the third time point were included in the subsample. All datasets included in the training and validation data represented the completion task.

Finally, the input and output variables at each context and target point were determined as follows.
In the experiments reported in the main text, we considered the case of partial observations, where only the difference $u-v$ is observed. This makes the simulator model a hidden Markov model and thus intractable. Predicting future states of the quantity $u-v$ without running inference and without having access to $w$ is a challenging problem, similar to several application cases of interest. In this appendix, we additionally consider a scenario where we include the acid count in the input variables and take the healthy cells count as the output variable.



\subsection{Additional results}

A promising scientific application of neural processes is inferring latent variables given other measured quantities. To explore this capability, we attempt to infer the quantity of healthy cells ($u(\z, t)$) given as input the acidity concentration ($w(\z, t)$) for some $\z, t$. For this study, we augmented the inputs to $(\z, t, w(\z, t))$. Notably, the regression problem is not equivariant in the last input dimension, so we encoded $w(\z, t)$ in a non-relational manner also in the RCNP and RGNP models. 
The results are presented in Table~\ref{tab:resRDadd}. The difference between the completion and forecasting tasks is small in this experiment, and the best results are observed with RGNP in both tasks. AGNP performance is surprisingly poor and will require further investigation. 

\begin{table}[ht]    
\small

    \centering
    \caption{Normalized log-likelihood scores in the additional experiments of the Reaction-Diffusion problem for both tasks (higher is better). Mean and \textcolor{gray}{(standard deviation)} from 10 training runs evaluated on a separate test dataset. 
    }
    \footnotesize
    \begin{tabular}{l c c c c c c}
    \toprule
         & RCNP & RGNP & ACNP & AGNP & CNP & GNP  \\
         \midrule
        Completion & 0.35 \textcolor{gray}{(0.12)} & \textbf{0.74} \textcolor{gray}{(0.18)} & 0.24  \textcolor{gray}{(0.25)} & -2.20 \textcolor{gray}{(3.67)} & 0.44 \textcolor{gray}{(0.17)} &  0.32 \textcolor{gray}{(0.26)} \\
        \midrule
        Forecasting & 0.29 \textcolor{gray}{(0.11)} & \textbf{0.63} \textcolor{gray}{(0.15)} &  0.38 \textcolor{gray}{(0.21)} &  -1.7 \textcolor{gray}{(3.0)} & 0.46 \textcolor{gray}{(0.13)} & 0.41 \textcolor{gray}{(0.20)} \\
        \bottomrule
    \end{tabular}
    \label{tab:resRDadd}
\end{table}

\subsection{Computation time}

On GPU, for all methods the training runs of this experiment were particularly fast, each one lasting no more than 5 minutes. We carried out a total of 10 training runs for each method, with a total of 120 training runs. We also performed around 60 exploratory training runs (with longer training time, larger networks, \emph{etc}.). The total training time comes to less than 15 hours.

\section{Additional experiments}\label{sec:app-additionalexp}
In addition to the experimental evaluation in the main paper and their extended discussion in the sections above, we explored several further experimental setups to demonstrate the wide applicability of our proposal. These are 
\emph{(i)} an application of the autoregressive structure proposed by \cite{bruinsma2023autoregressive} to our proposed approach (Section~\ref{sec:app-expautoreg}),
\emph{(ii)} a demonstration of how to incorporate rotational equivariance into the model (Section~\ref{sec:app-exprot}), and 
\emph{(iii)} an exploration of the method's performance on two popular image data regression tasks (Section~\ref{sec:app-expimage}). 

\subsection{Autoregressive CNPs}\label{sec:app-expautoreg}

To further demonstrate the effectiveness of our RCNP architectures, we ran a set of experiments adopting an autoregressive approach to both our RCNP and other CNP models. Autoregressive CNPs (AR-CNPs) are a new family of CNP models which were recently proposed by \cite{bruinsma2023autoregressive}. Unlike traditional CNPs, which generate predictions independently for each target point, or GNPs, which produce a joint multivariate normal distribution over the target set, AR-CNPs define a joint predictive distribution autoregressively, leveraging the chain rule of probability. Without changing the CNP architectures and the training procedure, AR-CNPs feed earlier output predictions back into the model autoregressively to predict new points. This adjustment enables the AR-CNPs to capture highly dependent, non-Gaussian predictive distributions, resulting in better performance compared to traditional CNPs.

In this experiment, we compare AR-RCNP, AR-CNP, and AR-ConvCNP on synthetic regression data. The AR approach is applied at the evaluation stage to generate the target predictions autoregressively. When applying AR, the training method for the models remains unchanged. Thus, for this experiment, we use the same models that were trained with synthetic data. Details about the data generation and model configurations can be found in Section~\ref{app:synth}. We chose the translational-equivariant version of RCNP for each task.

The full results are reported in Table~\ref{table:ar}. As we maintain consistent experimental settings with previous synthetic experiments (Section~\ref{app:synth}), we can directly compare the performance of models without the AR approach (Table~\ref{table:eq_full}-\ref{table:mixture_full}) to those enhanced with the AR approach. We observe that the AR approach improves RCNP performance in the Gaussian experiments in all dimensions and in the non-Gaussian sawtooth and mixture experiments when $d_x>1$. When $d_x=1$, AR-RCNP results are comparable to RCNP results in the mixture task and worse in the sawtooth task. We also observe that AR-CNP results are worse than CNP results in most sawtooth and mixture experiments. Overall we can see that AR-RCNPs are better than AR-CNPs across all dimensions and tasks under both INT and OOID conditions, and achieve close performance compared with AR-ConvCNPs under $d_x=1$ in the Gaussian experiments.

\begin{table}[!h]
\caption{Comparison of the \emph{interpolation} (INT) and \emph{out-of-input-distribution} (OOID) performance of AR-RCNP model with other AR baselines on various synthetic regression tasks with varying input dimensions. 
We show the mean and \textcolor{gray}{(standard deviation)} obtained from 10 runs with different seeds. ``F'' denotes failed attempts that yielded very bad results. Our AR-RCNP method performs competitively in low dimension, and scales to higher input dimensions ($d_x > 2$), where AR-ConvCNP is not applicable.}

\renewcommand{\arraystretch}{1.5}
\small
\centering
    \centerline{
\scalebox{0.63}{
\begin{tabular}{cccccccccccc}
\toprule
& & \multicolumn{2}{c}{EQ} & \multicolumn{2}{c}{Mat\'ern-$\frac{5}{2}$} & \multicolumn{2}{c}{Weakly-periodic} & \multicolumn{2}{c}{Sawtooth} & \multicolumn{2}{c}{Mixture}  \\
& & \multicolumn{2}{c}{\small{KL divergence($\downarrow$)}} & \multicolumn{2}{c}{\small{KL divergence($\downarrow$)}} & \multicolumn{2}{c}{\small{KL divergence($\downarrow$)}} & \multicolumn{2}{c}{\small{log-likelihood($\uparrow$)}} & \multicolumn{2}{c}{\small{log-likelihood($\uparrow$)}} \\
\cmidrule(lr){3-4} \cmidrule(lr){5-6} \cmidrule(lr){7-8} \cmidrule(lr){9-10} \cmidrule(lr){11-12}
& & INT & OOID & INT & OOID & INT & OOID & INT & OOID & INT & OOID  \\
\cmidrule(lr){3-4} \cmidrule(lr){5-6} \cmidrule(lr){7-8} \cmidrule(lr){9-10} \cmidrule(lr){11-12}
\multirow{3}{*}{\rotatebox[origin=c]{90}{$d_x=1$}}
& AR-RCNP & 0.02 \textcolor{gray}{(0.00)} & 0.02 \textcolor{gray}{(0.00)} & 0.02 \textcolor{gray}{(0.00)} & 0.02 \textcolor{gray}{(0.00)} & 0.04 \textcolor{gray}{(0.00)} & 0.04 \textcolor{gray}{(0.00)} & 1.93 \textcolor{gray}{(0.05)} & 1.97 \textcolor{gray}{(0.04)} & 0.20 \textcolor{gray}{(0.27)} & 0.20 \textcolor{gray}{(0.27)} \\

& AR-ConvCNP & \textbf{0.01} \textcolor{gray}{(0.00)} & \textbf{0.01} \textcolor{gray}{(0.00)} & \textbf{0.00} \textcolor{gray}{(0.00)} & \textbf{0.00} \textcolor{gray}{(0.00)} & \textbf{0.01} \textcolor{gray}{(0.00)} & \textbf{0.01} \textcolor{gray}{(0.00)} & \textbf{4.11} \textcolor{gray}{(0.09)} & \textbf{4.11} \textcolor{gray}{(0.09)} & \textbf{0.83} \textcolor{gray}{(0.03)} & \textbf{0.82} \textcolor{gray}{(0.03)} \\

& AR-CNP & 0.09 \textcolor{gray}{(0.01)} & 4.33 \textcolor{gray}{(3.44)}  & 0.12 \textcolor{gray}{(0.01)} & F & 0.21 \textcolor{gray}{(0.01)} & 5.05 \textcolor{gray}{(4.52)} & -1.77 \textcolor{gray}{(1.20)} & F & -0.25 \textcolor{gray}{(0.09)} & F \\

\midrule \midrule

\multirow{2}{*}{\rotatebox[origin=c]{90}{$d_x=3$}}
& AR-RCNP & \textbf{0.14} \textcolor{gray}{(0.01)} & \textbf{0.14} \textcolor{gray}{(0.01)} & \textbf{0.13} \textcolor{gray}{(0.00)} & \textbf{0.13} \textcolor{gray}{(0.00)} & \textbf{0.11} \textcolor{gray}{(0.00)} & \textbf{0.11} \textcolor{gray}{(0.00)} & \textbf{1.55} \textcolor{gray}{(0.01)} & \textbf{1.55} \textcolor{gray}{(0.01)} & \textbf{0.20} \textcolor{gray}{(0.00)} & \textbf{0.20} \textcolor{gray}{(0.00)} \\

& AR-CNP & 0.29 \textcolor{gray}{(0.00)} & 3.05 \textcolor{gray}{(1.26)} & 0.32 \textcolor{gray}{(0.03)} & 2.24 \textcolor{gray}{(1.20)} & 0.33 \textcolor{gray}{(0.00)} & 2.03 \textcolor{gray}{(0.75)} & 0.25 \textcolor{gray}{(0.33)} & -0.26 \textcolor{gray}{(0.09)} & -0.39 \textcolor{gray}{(0.01)} & -3.72 \textcolor{gray}{(1.68)} \\

\midrule \midrule

\multirow{2}{*}{\rotatebox[origin=c]{90}{$d_x=5$}}
& AR-RCNP & \textbf{0.23} \textcolor{gray}{(0.00)} & \textbf{0.23} \textcolor{gray}{(0.00)} & \textbf{0.15} \textcolor{gray}{(0.00)} & \textbf{0.15} \textcolor{gray}{(0.00)} & \textbf{0.16} \textcolor{gray}{(0.00)} & \textbf{0.16} \textcolor{gray}{(0.00)} & \textbf{0.87} \textcolor{gray}{(0.01)} & \textbf{0.87} \textcolor{gray}{(0.01)} & \textbf{-0.10} \textcolor{gray}{(0.00)} & \textbf{-0.10} \textcolor{gray}{(0.00)}\\

& AR-CNP & 0.49 \textcolor{gray}{(0.00)} & 1.22 \textcolor{gray}{(0.05)} & 0.39 \textcolor{gray}{(0.00)} & 0.97 \textcolor{gray}{(0.05)} & 0.38 \textcolor{gray}{(0.00)} & 1.47 \textcolor{gray}{(0.35)} & -0.00 \textcolor{gray}{(0.13)} & F & -0.66 \textcolor{gray}{(0.08)} & -3.51 \textcolor{gray}{(2.32)} \\

\bottomrule
\end{tabular}
}}
\label{table:ar}
\renewcommand{\arraystretch}{1}
\end{table}

\subsubsection{Computation time}
The AR procedure is not used in model training, so the training procedure aligns with that of standard CNP models. The experiments reported in this section used AR with models trained for the synthetic regression experiments in Section~\ref{sec:experiment_synthetic}. If the models had been trained from scratch, we estimate that the total computational cost for these experiments would have been approximately 2000 GPU-hours. AR is applied in the evaluation phase, where we predict target points autoregressively. This process is considerably lengthier than standard evaluation, with the duration influenced by both the target set size and data dimension. Nonetheless, since only a single pass is required during evaluation, the overall computational time remains comparable to standard CNP models.

\subsection{Rotation equivariance}\label{sec:app-exprot}

The experiments presented in this section use a two-dimensional toy example to investigate modeling datasets that are rotation equivariant but not translation equivariant. We created regression tasks based on synthetic data sampled from a GP model with an exponentiated quadratic covariance function and mean function $m(\x) = \Vert \mathbf A \mathbf R \x \Vert_2^2$, where ${\mathbf A=\mbox{diag}(\mathbf a)}$ is a fixed diagonal matrix with unequal entries and $\mathbf R$ is a random rotation matrix.  
We considered an isotropic-kernel model version with the standard EQ covariance function (Equation~\ref{eq:eq_kernel}) and an anisotropic-kernel model version with the covariance function $k(\mathbf{x}, \mathbf{x'}) = \exp(-||\mathbf{BRx} - \mathbf{BRx'}||^2_2)$, where $\mathbf B = \mbox{diag}(\mathbf b)$ is a fixed diagonal matrix with unequal entries and $\mathbf R$ is the same random rotation matrix as in the mean function. Both models use the anisotropic GP mean function defined previously.

We generated the datasets representing regression tasks by sampling context and target points from the synthetic data generated with isotropic-kernel or anisotropic-kernel model version as follows. The number of context points sampled varied uniformly between 1 and 70, while the number of target points was fixed at 70. All training and validation datasets were sampled from the range ${\mathcal{X} = [-4,0] \times [-4, 0]}$. To evaluate the models in an interpolation (INT) task, we generated evaluation datasets by sampling context and target points from the training range. To evaluate the models in an out-of-input distribution (OOID) task, we generated evaluation datasets with context and target points sampled in the range~${[0,4] \times [0,4]}$.

We chose a  comparison function $g$ that is rotation invariant but not translation invariant defined as\footnote{This comparison function is invariant to proper and improper rotations around the origin (i.e., including mirroring).}
\begin{equation*}
g(\x,\x') = (\Vert \x - \x' \Vert_2, \Vert \x \Vert_2, \Vert \x' \Vert_2).
\end{equation*}
Since `simple' RCNPs are only context preserving for translation-equivariance and not for other equivariances (Proposition~\ref{thm:simplepreserving}), we combine the rotation-equivariant comparison function with a FullRCNP model.

We compared FullRCNP with the rotation-equivariant comparison function, FullRCNP~(rot), with CNP and ConvCNP models in both the isotropic-kernel and anisotropic-kernel test condition. We also used isotropic-kernel data to train and evaluate a translation-equivariant ('stationary') RCNP version, RCNP~(sta), that uses the difference comparison function.
The model architectures and training details were the same as in the two-dimensional synthetic regression experiments (Section~\ref{sup:sec:descriptionmodels}).

The results reported in Table~\ref{supp:tab:resRot} show that, as expected, the isotropic-kernel test condition is easier than the anisotropic-kernel test condition where the random rotations change the covariance structure. The best results in both conditions are observed with ConvCNP when the models are evaluated in the interpolation task and with FullRCNP~(rot) when the models are evaluated in the OOID task. Since the input range in the OOID task is rotated compared to the training range, FullRCNP~(rot) results are on the same level in both tasks while the other models are not able to make reasonable predictions in the OOID task. We also observe that that the FullRCNP~(rot) results in the interpolation task are better than CNP or RCNP~(sta) results in the isotropic-kernel test condition and similar to CNP in the anisotropic-kernel test condition.

       \begin{table}[ht]    
\small

    \centering
    \caption{Normalized log-likelihood scores in the experiments using synthetic data generated with isotropic and anisotropic GP kernels and an anisotropic GP mean function. The mean and \textcolor{gray}{(standard deviation)} reported for each model are calculated based on 10 training outcomes evaluated with separate datasets representing \emph{interpolation} (INT) and \emph{out-of-input-distribution} (OOID) tasks.  “F” denotes failed attempts with log-likelihood scores below $-60$. 
   }
    \footnotesize
    \begin{tabular}{l c c c c}
    \toprule
        & \multicolumn{2}{c}{Isotropic GP kernel}& \multicolumn{2}{c}{Anisotropic GP kernel} \\
        \cmidrule(l){2-3} \cmidrule(l){4-5}
         & INT & OOID& INT & OOID\\
         \cmidrule(l){2-3} \cmidrule(l){4-5}
         FullRCNP (rot) & -0.89 \textcolor{gray}{(0.04)} & \textbf{-0.90} \textcolor{gray}{(0.04)} & -6.37 \textcolor{gray}{(0.34)} & \textbf{-6.38} \textcolor{gray}{(0.31)} \\
         RCNP (sta) & -1.16 \textcolor{gray}{(0.03)} & F 
         & ---  & --- \\
         ConvCNP & \textbf{-0.60} \textcolor{gray}{(0.02)} & F
         &  \textbf{-2.83} \textcolor{gray}{(0.12)}& F 
         \\ 
         CNP & -1.42 \textcolor{gray}{(0.03)} &  F
         & -6.43 \textcolor{gray}{(1.13)} & F
         \\
        \bottomrule
    \end{tabular}
    \label{supp:tab:resRot}
\end{table}

\subsubsection{Computation time}

The models were trained on a single GPU, and each training run took around 6 hours with FullRCNP,  2 hours with CNP, 4 hours with ConvCNP and 3 hours with RCNP. The results provided in this appendix required 70 training runs and consumed around 250 GPU hours. Additional experiments related to this example consumed around 2500 GPU hours.

\subsection{Image datasets}\label{sec:app-expimage}

\newcommand*{\vcenteredhbox}[1]{\begingroup
\setbox0=\hbox{#1}\parbox{\wd0}{\box0}\endgroup}

The experiments reported in the main paper demonstrate how RCNP can leverage prior information about equivariances in regression tasks including tasks with more than 1--2 input dimensions. In this section, we provide details and results from additional experiments that used MNIST \cite{lecun1998gradient} and CelebA \cite{celeba} image data to create regression tasks. The neural process models were used in these experiments to predict the mean and variance over pixel values across the two-dimensional image given some pixels as observed context. The experiments investigate how RCNP compares to other conditional neural process models when the assumption about task equivariance is incorrect.

\subsubsection{Setup}

The experiments reported in this section compared RCNP, CNP, and ConvCNP models using interpolation tasks generated based on $16\times 16$ pixel images. The pixels were viewed as data points with~$d_x = 2$ input variables that indicate the pixel location on image canvas and $d_y$ output variables that encode the pixel value. The output dimension depended on the image type. Pixels in grayscale images were viewed as data points with one output variable ($d_y = 1$) that takes values between 0 and 1 while pixels in RGB images were viewed as data points with three output variables ($d_y = 3$) that each take values between 0 and 1 to encode one RGB channel. We evaluated the selected models using standard MNIST and CelebA images (Section~\ref{app:image_baseline}) and examined modeling translation equivariance with toroidal MNIST images (Section~\ref{app:image_trans}).


The model architectures used in the image experiments match the architectures used in the synthetic regression experiments (Section~\ref{sup:sec:descriptionmodels}) extended with an output transformation to bound the predicted mean values between 0 and 1. While we used images with multiple output channels, there was no need to use a multi-output architecture such as discussed in Section~\ref{app:lv_models}. This is because we assumed that all output channels are observed when a pixel is included in the context set and unobserved otherwise, meaning that the output channels in image data are not associated with separate context sets. RCNP models were used with the standard stationary comparison function unless otherwise mentioned.

The models were trained for 200 epochs with $2^{14}$ datasets in each epoch and learning rate~$3\cdot 10^{-4}$, and tested with validation and evaluation sets that included $2^{12}$ datasets each. The log-likelihood score that is used as training objective is usually calculated based on the predicted mean and variance, but when the models were trained on MNIST data, we started training with the predicted variance fixed to a small value. This was done to prevent the neural process models from learning to predict the constant background values with low variance and ignoring the image details. We trained with the fixed variance between epochs 1--150 and with the predicted variance as usual between epochs~151--200.


The datasets used to train and evaluate the neural process models were generated as follows. Each dataset was generated based on an image that was sampled from the selected image set and downscaled to $16\times 16$ pixels. The downscaled images were either used as such or transformed with random translations as explained in Section~\ref{app:image_trans}.  The context sets generated based on the image data included 2--128 pixels sampled at random while the target sets included all 256 pixels. The evaluation sets used in each experiment were generated with the context set size fixed to 2, 20, 40, and 100 pixels.

\subsubsection{Experiment 1: Centered images (no translation)}\label{app:image_baseline} 

The experiments reported in this section used MNIST and CelebA images scaled to the size $16\times 16$ (Figure~\ref{fig:image_examples_baseline}). The normalized log-likelihood scores calculated based on each evaluation set and across training runs for each neural process model are reported in Table~\ref{tab:image_baseline}. The best results are observed with ConvCNP, while the order between RCNP and CNP depends on the context set size. CNP results are better than RCNP results when predictions are generated based on $N=2$ context points while RCNP results are better when more context data is available.
\begin{figure}
    \centering
    \makebox[\textwidth][c]{(a) \vcenteredhbox{\includegraphics[width=0.75\textwidth]{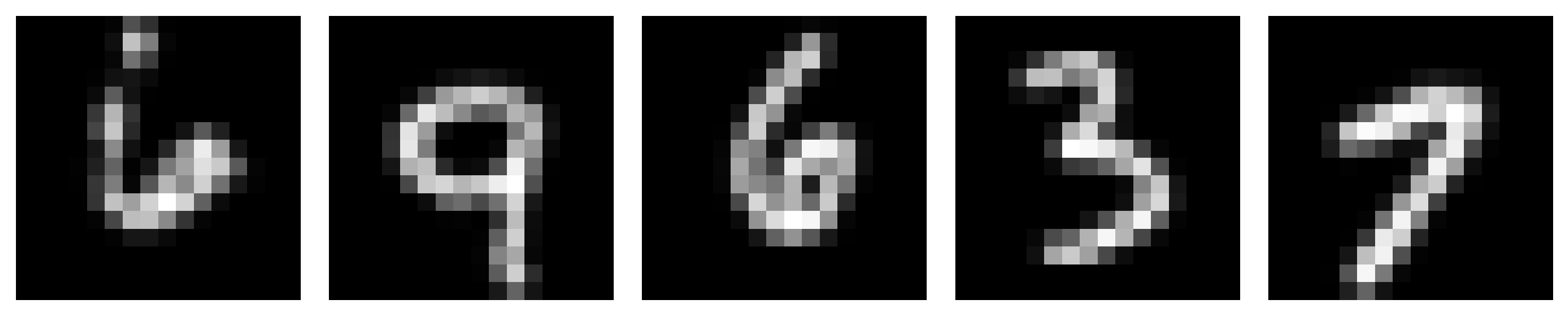}}} \-
    \makebox[\textwidth][c]{(b) \vcenteredhbox{\includegraphics[width=0.75\textwidth]{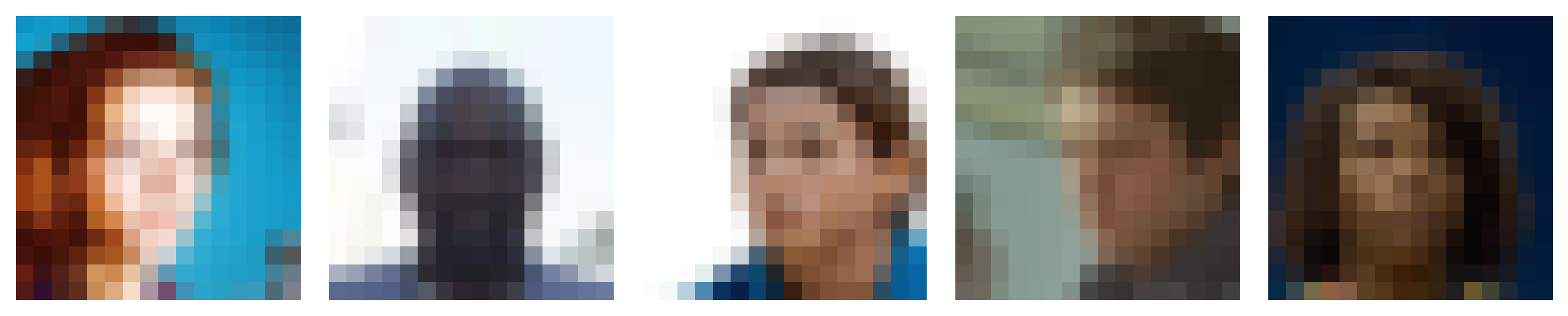}}}
    \caption{Example (a) MNIST and (b) CelebA $16\times 16$ images.}
    \label{fig:image_examples_baseline}
\end{figure}
\begin{table}[ht]
    \centering
    \caption{Normalized log-likelihood scores in the image experiments using (a) $16\times 16$ MNIST images or (b) $16\times 16$ CelebA images (higher is better). The mean and \textcolor{gray}{(standard deviation)} reported for each model are calculated based on 10 training outcomes evaluated in interpolation task with context sizes 2, 10, 20, and 100. RCNP results are the lowest when the context set is small but improve when the context size increases.
    }
    \label{tab:image_baseline}
    \small
    \renewcommand{\arraystretch}{1.3}
    \adjustbox{max width=\textwidth}{
    \begin{tabular}{cc}
    (a) &
    \begin{tabular}{lcccc}
    \toprule
    & 2 & 20 & 40 & 100 \\ \toprule  
    RCNP & 5.52 \textcolor{gray}{(0.30)} & 7.13 \textcolor{gray}{(0.10)} & 8.08 \textcolor{gray}{(0.13)} & 9.35 \textcolor{gray}{(0.15)} \\
    ConvCNP & \textbf{6.43} \textcolor{gray}{(0.02)} & \textbf{7.35} \textcolor{gray}{(0.04)} & \textbf{8.27} \textcolor{gray}{(0.04)} & \textbf{9.87} \textcolor{gray}{(0.06)} \\
    CNP & 6.29 \textcolor{gray}{(0.12)} & 7.04 \textcolor{gray}{(0.03)} & 7.47 \textcolor{gray}{(0.03)} & 8.00 \textcolor{gray}{(0.05)} \\
    \midrule \midrule
    \end{tabular}\\
    (b) &
    \begin{tabular}{lcccccc}
    RCNP & 0.18 \textcolor{gray}{(0.00)} & 1.19 \textcolor{gray}{(0.01)} & 1.79 \textcolor{gray}{(0.01)} & 3.01 \textcolor{gray}{(0.03)}\\
    ConvCNP & \textbf{0.33} \textcolor{gray}{(0.00)} & \textbf{1.36} \textcolor{gray}{(0.01)} & \textbf{2.03} \textcolor{gray}{(0.01)} & \textbf{3.37} \textcolor{gray}{(0.03)}\\
    CNP  & 0.31 \textcolor{gray}{(0.00)} & 0.90 \textcolor{gray}{(0.01)} & 1.09 \textcolor{gray}{(0.01)} & 1.25 \textcolor{gray}{(0.01)}\\
    \bottomrule
    \end{tabular}
    \end{tabular}}
\end{table}

We believe CNP works better than RCNP when $N=2$ because the absolute pixel locations are generally informative about the possible pixel values in the MNIST and CelebA images. This means that the regression tasks generated based on these images are not translation equivariant and the assumption encoded in the RCNP comparison function is incorrect. Since RCNP does not encode the absolute target location, it needs context data to make predictions that CNP and ConvCNP can make based on the target location alone (Figure~\ref{fig:image_baseline}). 

\begin{figure}
    \centering
    \makebox[\textwidth][c]{(a) \vcenteredhbox{\includegraphics[width=0.75\textwidth]{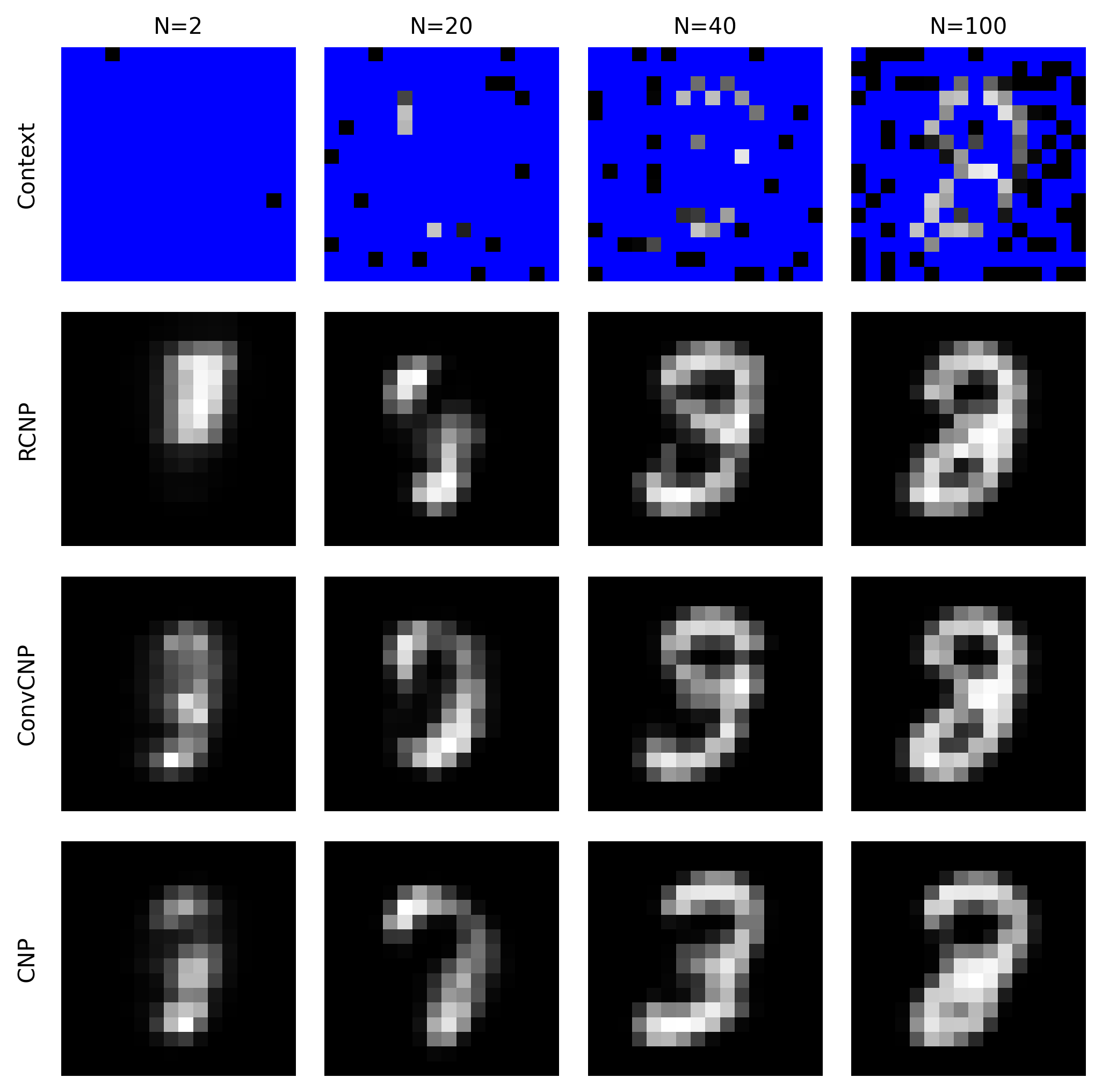}}} \-
    \makebox[\textwidth][c]{(b) \vcenteredhbox{\includegraphics[width=0.75\textwidth]{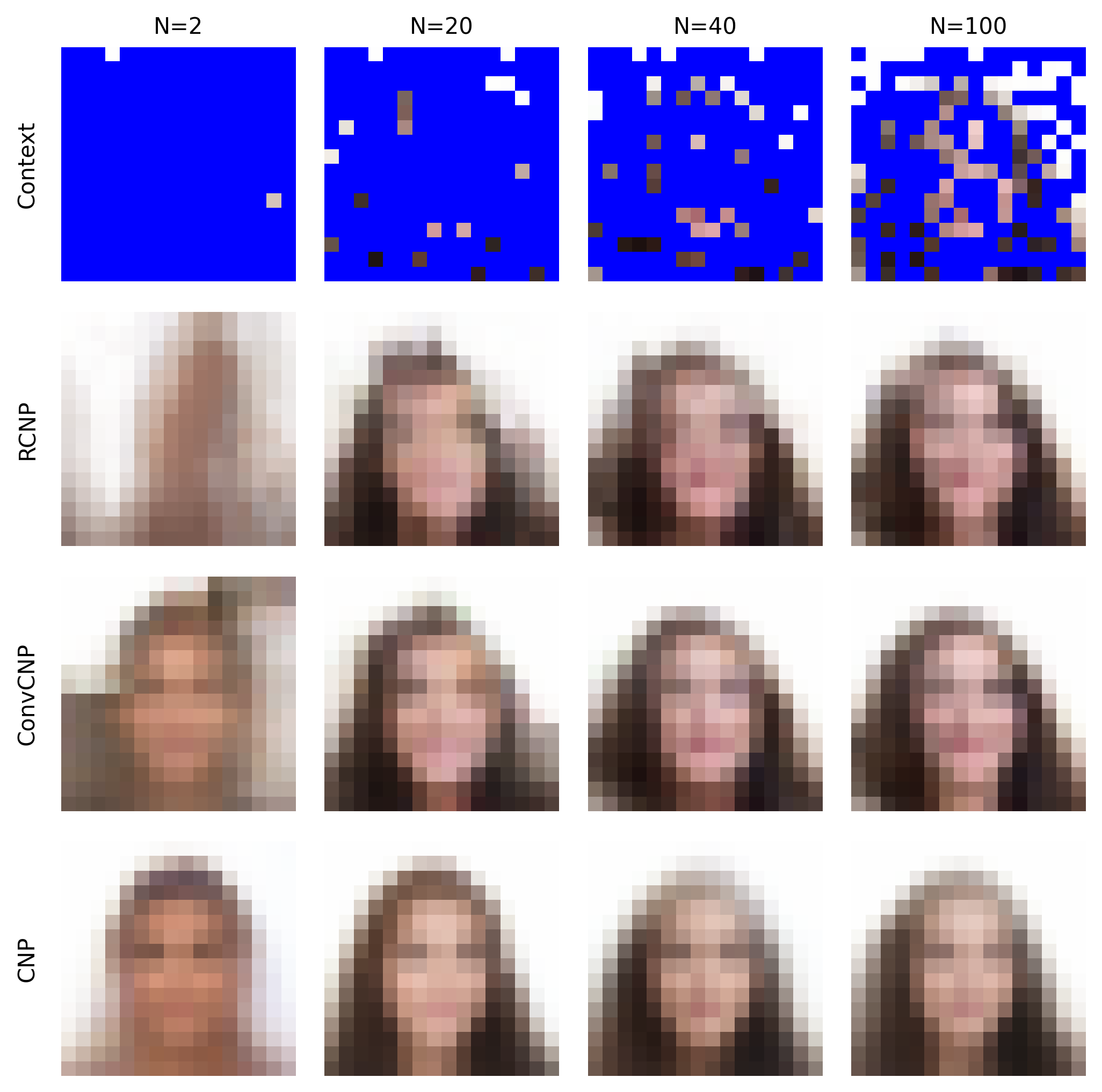}}}
    \caption{Example context sets and predicted mean values in the image experiment using (a)~MNIST and (b)~Celeba $16\times 16$ data.}
    \label{fig:image_baseline}
\end{figure}

While RCNP needs context data to work around the incorrect assumption about translation equivariance, RCNP results are better than CNP results when more context data is available. We believe this is because the context set representation in CNP does not depend on the current target location while RCNP can learn to preserve context information that is relevant to the current target. For example, while CNP looses information about the exact observed values when the context set is encoded, RCNP generally learns to reproduce the observed context. RCNP may also learn to emphasize context points that are close to the current target and may learn local features that are translation equivariant.

\subsubsection{Experiment 2: Translated images}\label{app:image_trans} 

The interpolation tasks in the previous image experiments do not exhibit translation equivariance since the input range is fixed and the image content is centered on the canvas. To run an image experiment with translation equivariance, we converted the $16\times 16$ MNIST images into toruses and applied random translations to move image content around. The translations do not change the relative differences between image pixels on the torus, but when the torus is viewed as an image, both the absolute pixel locations and relative differences between pixel locations are transformed. This means that to capture the translation equivariance, we needed to calculate differences between pixel locations on the torus.

The experiments reported in this section compare models trained using either centered image data (Figure~\ref{fig:image_examples_baseline}~(a)) or image data with random translations on the torus (Figure~\ref{fig:image_examples_trans}). We ran experiments with both RCNP~(sta) that calculates the difference between pixel locations on the image canvas and RCNP~(tor) that calculates the difference between pixel locations on the torus. The models trained with centered image data were evaluated with both centered and translated data while the models trained with translated data were evaluated with translated data. The normalized log-likelihood scores are reported in Table~\ref{tab:image_mnist16_trans}.

\begin{figure}
    \centering
    \includegraphics[width=0.75\textwidth]{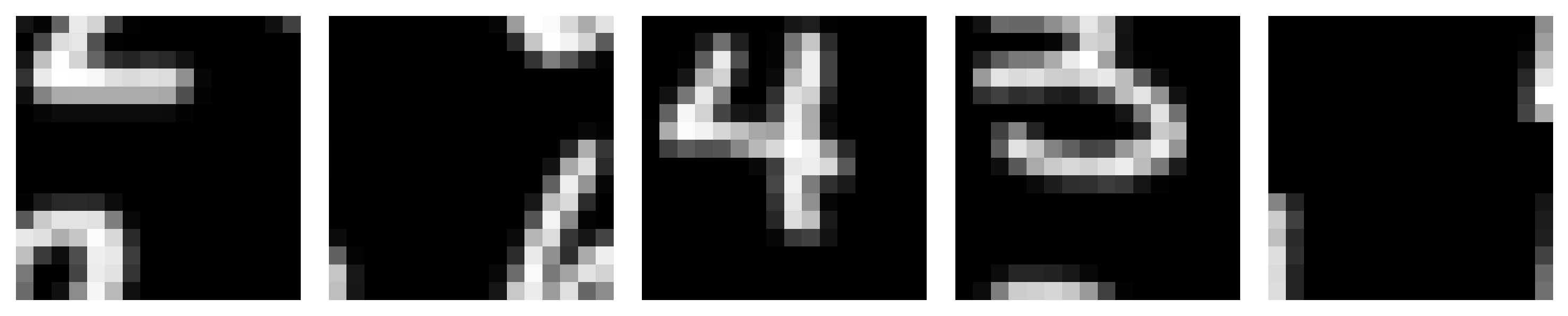}
    \caption{MNIST $16\times 16$ example images with random translations on a torus.}
    \label{fig:image_examples_trans}
\end{figure}

\begin{table}[ht]
    \centering
    \caption{Normalized log-likelihood scores in the image experiments using $16\times 16$ MNIST images with random translations on a torus (higher is better). We compare (a) models trained on centered image data evaluated on centered image data, (b) models trained on centered image data evaluated on image data with random translations, and (c) models trained on image data with random translations evaluated on image data with random translations. The mean and \textcolor{gray}{(standard deviation)} reported for each model are calculated based on 10 training outcomes evaluated in interpolation task with context sizes 2, 10, 20, and 100. "F" denotes very bad results with negative log-likelihood scores. RCNP (tor) trained on centered image data is able to make predictions about images with random translations because the translation mechanism is modeled in the comparison function.
    }
    \label{tab:image_mnist16_trans}
    \small
    \renewcommand{\arraystretch}{1.3}
    \adjustbox{max width=\textwidth}{
    \begin{tabular}{cc}
    (a) &
    \begin{tabular}{lcccc}
    \toprule
    & 2 & 20 & 40 & 100 \\ \toprule  
    RCNP (sta) & 5.52 \textcolor{gray}{(0.30)} & 7.13 \textcolor{gray}{(0.10)} & 8.08 \textcolor{gray}{(0.13)} & 9.35 \textcolor{gray}{(0.15)} \\
    RCNP (tor) & 5.51 \textcolor{gray}{(0.01)} & 6.63 \textcolor{gray}{(0.04)} & 7.59 \textcolor{gray}{(0.03)} & 9.15 \textcolor{gray}{(0.09)} \\
    ConvCNP & \textbf{6.43} \textcolor{gray}{(0.02)} & \textbf{7.35} \textcolor{gray}{(0.04)} & \textbf{8.27} \textcolor{gray}{(0.04)} & \textbf{9.87} \textcolor{gray}{(0.06)} \\
    CNP & 6.29 \textcolor{gray}{(0.12)} & 7.04 \textcolor{gray}{(0.03)} & 7.47 \textcolor{gray}{(0.03)} & 8.00 \textcolor{gray}{(0.05)} \\
    \midrule \midrule
    \end{tabular}\\
    (b) &
    \begin{tabular}{lcccccc}
    RCNP (sta) & F & F & F & F \\
    RCNP (tor) & \textbf{5.62} \textcolor{gray}{(0.01)} & \textbf{6.68} \textcolor{gray}{(0.04)} & \textbf{7.61} \textcolor{gray}{(0.06)} & \textbf{9.13} \textcolor{gray}{(0.09)} \\
    ConvCNP & F & F & F & F \\
    CNP  & F & F & F & F \\
    \midrule \midrule
    \end{tabular}\\
    (c) &
    \begin{tabular}{lcccccc}
    RCNP (sta) & \textbf{5.51} \textcolor{gray}{(0.21)} & \textbf{6.66} \textcolor{gray}{(0.11)} & 7.67 \textcolor{gray}{(0.09)} & 9.35 \textcolor{gray}{(0.11)} \\
    RCNP (tor) & \textbf{5.56} \textcolor{gray}{(0.10)} & \textbf{6.72} \textcolor{gray}{(0.06)} & 7.70 \textcolor{gray}{(0.09)} & 9.31 \textcolor{gray}{(0.14)} \\
    ConvCNP & \textbf{5.61} \textcolor{gray}{(0.01)} & \textbf{6.72} \textcolor{gray}{(0.03)} & \textbf{7.78} \textcolor{gray}{(0.04)} & \textbf{9.60} \textcolor{gray}{(0.08)} \\
    CNP & 5.55 \textcolor{gray}{(0.03)} & 6.29 \textcolor{gray}{(0.03)} & 6.80 \textcolor{gray}{(0.02)} & 7.52 \textcolor{gray}{(0.09)} \\
    \bottomrule
    \end{tabular}
    \end{tabular}}
\end{table}

The results in Table~\ref{tab:image_mnist16_trans}~(a) extend the results discussed in the previous section (Table~\ref{tab:image_baseline}~(a)). The results indicate that when the models are trained and tested with interpolation tasks generated based on centered image data, RCNP~(sta) works better than RCNP~(tor). We believe this is because the relative locations calculated based on pixel coordinates on the image canvas are more informative about the absolute location than relative locations calculated on the torus. In other words, while RCNP (sta) and RCNP (tor) do not encode the absolute target location on the image canvas, RCNP (sta) can learn to derive equivalent information based on regularities between context sets.

The image interpolation task becomes translation equivariant on the torus when the models are trained or tested using the image data with random translations. The results reported in Table~\ref{tab:image_mnist16_trans}~(b) indicate that using differences calculated on the torus allows RCNP (tor) to generalize between centered and translated images while the other models are not able to make sensible predictions in this OOID task. All models work well when trained with matched data (Table~\ref{tab:image_mnist16_trans}~(c)), meaning that training on image data with random translations allows the other models to learn the equivariance which is encoded in RCNP~(tor). The best results in Table~\ref{tab:image_mnist16_trans}~(c) are observed with ConvCNP, while RCNP~(tor) and RCNP~(sta) are similar to CNP when $N=2$ and better than CNP when more context data is available.

\subsubsection{Computation time}

The models were trained on a single GPU, and each training run with RCNP took around 3--4 hours, each training run with CNP around 1--2 hours, and each training run with ConvCNP around 4.5--5.5 hours depending on the training data. The results presented in this supplement required 110 training runs with total runtime around 350 hours. In addition other experiments carried out with image data took around 2000 hours.

\end{document}